%% file: icml_2026.tex
\documentclass{article}

\usepackage{microtype}
\usepackage{graphicx}
\usepackage{subcaption}
\usepackage{booktabs} 

\usepackage{hyperref}

\usepackage[accepted]{icml2026}

\usepackage{amsmath}
\usepackage{amssymb}
\usepackage{mathtools}
\usepackage{amsthm}

\usepackage{algorithm,algpseudocode,tabularx}
\usepackage{nicefrac}
\usepackage{dsfont}
\usepackage{enumitem}

\renewcommand{\algorithmicrequire}{\textbf{Input:}}
\renewcommand{\algorithmicensure}{\textbf{Output:}}

\DeclarePairedDelimiter\brc{\{}{\}}

\DeclarePairedDelimiter\abs{\lvert}{\rvert}
\DeclarePairedDelimiter\norm{\lVert}{\rVert}

\newcommand{\ind}[1]{\mathds{1}\brc*{#1}}

\usepackage[capitalize,noabbrev]{cleveref}

\theoremstyle{plain}

\theoremstyle{definition}

\theoremstyle{remark}

\algnewcommand{\IIf}[1]{\State\algorithmicif\ #1\ \algorithmicthen}
\algnewcommand{\EndIIf}{\unskip\ \algorithmicend\ \algorithmicif}

\usepackage[textsize=tiny]{todonotes}

\icmltitlerunning{Adaptive Bandit Algorithms for Contextual Matching Markets}

\input{math_commands.tex}

\makeatletter
\newcommand{\multiline}[1]{%
  \begin{tabularx}{\dimexpr\linewidth-\ALG@thistlm}[t]{@{}X@{}}
    #1
  \end{tabularx}
}
\makeatother

\begin{document}

\twocolumn[
  \icmltitle{Adaptive Bandit Algorithms for Contextual Matching Markets}

  \icmlsetsymbol{equal}{*}

  \begin{icmlauthorlist}
    \icmlauthor{Shiyun Lin}{pku}
    \icmlauthor{Simon Mauras}{inria}
    \icmlauthor{Vianney Perchet}{crest}
    \icmlauthor{Nadav Merlis}{tech}
  \end{icmlauthorlist}

  \icmlaffiliation{pku}{Center for Statistical Science, School of Mathematical Sciences, Peking University}
  \icmlaffiliation{inria}{INRIA, FairPlay Joint Team}
  \icmlaffiliation{crest}{CREST, ENSAE, IP Paris; Criteo AI lab, FairPlay Joint Team}
  \icmlaffiliation{tech}{Technion - Israel Institute of Technology}

  \icmlcorrespondingauthor{Shiyun Lin}{shiyunlin@stu.pku.edu.cn}

  \icmlkeywords{Machine Learning, ICML}

  \vskip 0.3in
]



\printAffiliationsAndNotice{}  

\begin{abstract}
    We study bandit learning in matching markets, where players and arms constitute the two market sides, and the players' utilities are linear in the arm contexts. In each round, new arms arrive with observable contexts. Then, the algorithm matches them to players, aiming to minimize each player's regret against a \emph{stable matching benchmark}. This contextual structure creates significant complexity: subtle context shifts can slightly alter one player's utility while completely reconfiguring the underlying benchmark, causing large regret spikes for others.
    We address this in two settings: \emph{stochastic} contexts, drawn from a latent distribution, and \emph{adversarial} contexts, which may be arbitrary. 
    For the stochastic case, we introduce a novel minimum preference gap to capture learning difficulty and provide a fully adaptive algorithm with an instance-dependent poly-logarithmic regret upper bound. We also establish matching instance-independent regret upper and lower bounds under a mild distributional assumption. For the adversarial setting, we propose a tractable regret notion that remains valid under arbitrary contexts and achieves an instance-independent sublinear regret bound via an adaptive algorithm.
\end{abstract}

\section{Introduction}\label{sec:intro}
\input{intro.tex}

\section{Problem Formulation}\label{sec:setting}
\input{setting.tex}

\section{Contextual Matching Markets with Stochastic Contexts}\label{sec:stochastic_context}
\input{stochastic.tex}

\section{Contextual Matching Markets with Adversarial Contexts}\label{sec:adversarial_context}
\input{adversarial.tex}

\section{Numerical Experiments}\label{sec:experiment}
\input{experiment.tex}

\section{Extension in Decentralized Contextual Matching Markets}\label{sec:extension}
\input{extension.tex}

\section{Conclusions and Discussions}\label{sec:conclusion}
\input{conclusion.tex}

\section*{Acknowledgement}
\input{acknowledgement.tex}

\section*{Impact Statement}
This paper presents work whose goal is to advance the field of Machine
Learning. There are many potential societal consequences of our work, none of which we feel must be specifically highlighted here.

\bibliography{reference}
\bibliographystyle{icml2026}

\newpage
\appendix
\onecolumn
\section{Batched Adaptive Regret-Balancing Algorithm}\label{sec:barb_alg}
\input{barb_alg.tex}

\clearpage
\section{Proof of Theorem~\ref{thm:stochastic_context_upper_bound}}\label{sec:proof_stochastic_upper_bound}
\input{proof_stochastic_upper_bound.tex}

\clearpage
\section{Batched Explore-then-Commit Algorithm}\label{sec:betc_alg}
\input{betc_alg.tex}

\clearpage
\section{Proof of Theorem~\ref{thm:asymptotic_upper_bound}}\label{sec:proof_asymptotic_upper_bound}
\input{proof_asymptotic_upper_bound.tex}

\clearpage
\section{Proof of Theorem~\ref{thm:asymptotic_lower_bound}}\label{sec:proof_asymptotic_lower_bound}
\input{proof_asymptotic_lower_bound.tex}

\clearpage
\section{Adaptive Explore-Choose Oracle Algorithm}\label{sec:adeco_alg}
\input{adeco_alg.tex}

\clearpage
\section{Proof of Theorem~\ref{thm:adversarial_context_upper_bound}}\label{sec:proof_adversarial_upper_bound}
\input{proof_adversarial_upper_bound.tex}

\clearpage
\section{Additional Numerical Experiments}\label{sec:add_experiment}
\input{add_experiment.tex}

\clearpage
\section{Illustration of the Minimum Difference $\delta_{\min}$}\label{sec:illustration_min_gap}
\input{min_gap.tex}

\clearpage
\section{Technical Lemmas}\label{sec:tech_lemma}
\input{tech_lemma.tex}

\end{document}

%% file: math_commands.tex
\usepackage{amsmath,amsfonts,bm}

\def\eqref#1{equation~\ref{#1}}

\def\floor#1{\lfloor #1 \rfloor}
\def\1{\bm{1}}

\def\rx{{\textnormal{x}}}

\def\mI{{\bm{I}}}

\def\mS{{\bm{S}}}

\def\mU{{\bm{U}}}
\def\mV{{\bm{V}}}

\DeclareMathAlphabet{\mathsfit}{\encodingdefault}{\sfdefault}{m}{sl}
\SetMathAlphabet{\mathsfit}{bold}{\encodingdefault}{\sfdefault}{bx}{n}

\def\gA{{\mathcal{A}}}

\def\gC{{\mathcal{C}}}
\def\gD{{\mathcal{D}}}
\def\gE{{\mathcal{E}}}
\def\gF{{\mathcal{F}}}
\def\gG{{\mathcal{G}}}

\def\gI{{\mathcal{I}}}

\def\gK{{\mathcal{K}}}

\def\gN{{\mathcal{N}}}
\def\gO{{\mathcal{O}}}

\def\gS{{\mathcal{S}}}

\def\gU{{\mathcal{U}}}

\def\sN{{\mathbb{N}}}

\def\sP{{\mathbb{P}}}

\def\sR{{\mathbb{R}}}

\def\0{{\bf 0}}
\def\1{{\bf 1}}

\theoremstyle{plain}
\newtheorem{thm}{Theorem}[section]

\newtheorem{lem}{Lemma}[section]
\theoremstyle{definition}
\newtheorem{defn}{Definition}
\newtheorem{assume}{Assumption}
\theoremstyle{remark}
\newtheorem{rema}{Remark}[section]

\definecolor{myblue}{HTML}{639BC1}
\definecolor{mygreen}{HTML}{8AAD05}
\definecolor{myyellow}{HTML}{E2CE1B}
\definecolor{myorange}{HTML}{DF5D22}
\definecolor{mypink}{HTML}{E17976}
\definecolor{nodecolor}{HTML}{515A7C}

\newenvironment{sizeddisplay}[1]
 {
 \setlength{\parskip}{0pt}
 \par\nopagebreak#1\noindent\ignorespaces}
 {\nopagebreak\ignorespacesafterend
 }

%% file: intro.tex
Two-sided matching markets provide a foundational theoretical framework for modeling and analyzing coordination problems across a wide range of economic and social domains. Classic applications include college admissions (matching students with schools)~\citep{gale1962college}, labor markets (assigning jobs to workers)~\citep{kelso1982job,mine2013reciprocal}, medical residency placement (pairing doctors with hospitals)~\citep{roth1984evolution}, and ride-sharing (matching passengers with drivers)~\citep{shi2023multiagent}. 
In these markets, participants on each side hold preferences over the other. For example, jobs rank workers by skill, while workers rank jobs based on compensation or location. The central objective is to achieve a \emph{stable matching}, where no participant pair would mutually prefer to match with each other over their current assignments. Given full preference lists, the Gale-Shapley Deferred Acceptance algorithm~\citep{gale1962college} efficiently finds a stable matching that is optimal for one side of the market.

However, in online marketplaces like crowdsourcing and gig platforms, worker preferences over jobs are often not known \emph{a priori} and must instead be learned through repeated matchings during task delegation. Recent work has thus modeled this scenario using multi-player multi-armed bandits~\citep{das2005two,liu2020competing,basu2021beyond,li2022dynamic,kong2023player,zhang2024decentralized,lin2024stable}, casting the two sides of the market as players and arms.
This framework enables the development of learning algorithms that integrate Gale-Shapley dynamics with bandit-feedback techniques.

Most existing work models the market as a stochastic multi-armed bandit, assuming noisy feedback from a single, \emph{static} preference profile. However, this assumption is often unrealistic in real-world applications. In labor markets, for example, player preferences shift with external factors such as project cycles, difficulty, and reward structures. To capture such \emph{dynamic preferences}, a canonical approach uses contextual bandits~\citep{chu2011contextual,abbasi2011improved}, where the matching utility for a player-arm pair is modeled as the inner product of the player's latent preference vector and the arm's observed feature (or context) vector. The platform observes noisy utility feedback after each match and must learn the underlying preference vectors to rank arms effectively with the contextual information in each round, aiming for per-step stability or approximate stability in the long run.


Consider an online freelance platform. While a worker's preference type -- defined by stable valuations for attributes like pay and complexity -- remains fixed, the realized features of a specific job (e.g., its budget, deadline, and required skills) are naturally stochastic. In practice, these features can be modeled as independent draws from a fixed but unknown distribution associated with the job's category (such as graphic design or copywriting). This motivates our first setting, in which the contextual feature vector of each arm is generated \emph{stochastically} from a category-specific distribution in every matching round.
In contrast, other realistic matching environments involve non-stationary or even strategically manipulated contexts.
On a competitive procurement platform, for instance, clients may adaptively adjust a project's described terms, complexity, or evaluation metrics -- either due to market shifts or to strategically exploit contractors' learned bidding behaviors. To capture such adversarial dynamics, we also analyze a setting in which the sequence of context vectors can be \emph{arbitrary and potentially adversarial}.

Adaptive algorithms in bandits~\citep{wu2018adaptive,hao2020adaptive} dynamically balance exploration and exploitation based on observed data, improving both statistical efficiency and practical deployment. In two-sided matching markets, this adaptivity is crucial for automatically adjusting to changing user distributions, enhancing performance in non-stationary environments. We therefore seek to develop \emph{adaptive} bandit algorithms for contextual matching markets.

\subsection{Main Contributions}\label{subsec:contribution}
\textbf{Novel Metric for Characterizing Problem Difficulty}
To characterize problem difficulty, we introduce novel metrics tailored to each setting. For the stochastic environment, we propose a \emph{minimum preference gap}, a relaxed version of prior gap definitions that offers better realism and practicality. For the adversarial environment, we define a tractable form of \emph{regret} measured against a player-optimal stable matching benchmark -- employing its exact form in the large-gap regime and an approximation in the small-gap regime. Together, these metrics form a unified analytical framework that enables us to derive meaningful theoretical guarantees for our proposed algorithms.

\textbf{Fully Adaptive Algorithms}
We develop separate algorithms for the stochastic and adversarial settings. Both employ an \emph{adaptive phase transition} scheme, alternating between refining parameter estimates and performing (approximately) stable matchings. This adaptive design allows handling \emph{arbitrary context covariance structures} in the stochastic case and \emph{arbitrary context sequences} in the adversarial case. For the stochastic setting, we add a batched design that maintains and shrinks a candidate minimum preference gap toward the true value, enabling operation without prior knowledge of the true gap with respect to the distribution. \looseness=-1

\textbf{Theoretical Regret Guarantees} 
We establish non-asymptotic regret upper bounds for our proposed algorithms under minimal assumptions, achieving instance-dependent poly-logarithmic rate in the stochastic setting and instance-independent sublinear rate in the adversarial setting.
For the stochastic setting, under the additional condition of a strictly positive minimum eigenvalue for the context covariance matrix, we provide a logarithmic regret guarantee. In the asymptotic regime of the stochastic setting, we also derive instance-independent regret bounds, proving tight $\tilde{\Theta} \left(T^{2/3}\right)$ upper and lower bounds, where $T$ is the time horizon. Notably, this result contrasts with prior work on stochastic bandit learning in matching markets with static preferences, where instance-independent guarantees are only possible for approximation benchmark. 

\subsection{Related Work}\label{subsec:literature}
\textbf{Bandit Learning in Matching Markets}
\citet{das2005two} initiated the study of bandit learning in matching markets by introducing the problem and proposing empirical solutions. \citet{liu2020competing} later provided a formal theoretical formulation and established initial guarantees on stable regret. Subsequent research has expanded this line of work, aiming to improve regret bounds~\citep{sankararaman2021dominate,kong2023player} and explore various problem variants, including decentralized matching~\citep{liu2021bandit,basu2021beyond}, many-to-one markets~\citep{wang2022bandit}, settings with preference indifference~\citep{kong2025bandit,lin2024stable}, markets with one-sided preferences~\citep{lin2026bandit}, and models with transferable utilities~\citep{jagadeesan2023learning}. For a comprehensive survey, please refer to \citet{li2025survey}.

Despite extensive work on static preferences, research on dynamic preference structures remains limited. Recent approaches model non-stationarity via constraints on the total number of preference changes~\citep{muthirayan2023competing} or on per-round utility drift~\citep{ghosh2024competing}. In contextual matching markets, previous work assumes prior knowledge of the minimum preference gap and covariance structure~\citep{li2022dynamic} or studies decentralized settings with fixed and known number of environments~\citep{parikh2024competing}. In contrast, we develop \emph{fully adaptive}, centralized algorithms for both stochastic and adversarial settings, requiring \emph{no prior knowledge} of the environment.

\textbf{Contextual Combinatorial Bandits}
Contextual combinatorial bandits (CCB) extend the multi-armed bandit framework to sequential decision-making where an agent, given contextual information, must select a \emph{set of items} (a ``super arm'') to maximize an aggregate reward. Under semi-bandit feedback, \citet{qin2014contextual}, \citet{wen2015efficient}, and \citet{takemura2021near} incorporate linear reward assumption into the model, while \citet{chen2018contextual} studies volatile arms with submodular rewards. In contrast, \citet{li2024contextual} examines the full-bandit setting, and \citet{zierahn2023nonstochastic} considers nonstochastic contexts.

We study contextual bandits in matching markets, where the focus is on forming bilateral matches between players and arms. We receive reward feedback for every matched pair and consider both stochastic and adversarial contexts. A key innovation is our \emph{player-optimal stable regret} metric, which ensures fairness for each player rather than optimizing for a single aggregate benchmark as in CCB, introducing unique algorithmic challenges.

%% file: setting.tex
In this section, we formulate the problem setting of contextual bandits in matching markets over a finite horizon $T$. For a positive integer $n$, we use $[n]$ to represent $\left\{1, 2, \cdots, n\right\}$.

Consider a market with $N$ players and $K$ arms. We define the set of players as $\gN = \left\{p_1, p_2, \cdots, p_N\right\}$ and the set of arms as $\gK = \left\{a_1, a_2, \cdots, a_K\right\}$. Following previous work~\citep{liu2020competing,liu2021bandit,sankararaman2021dominate,kong2023player,lin2024stable}, we assume $N \leq K$ to ensure that each player has a chance of being matched.

\subsection{Preference Structure}\label{subsec:context_preference}
\textbf{Fixed and known preferences of arms over players} 
We assume each arm $a_j$ has a fixed, strict, and known preference ordering $P_{a_j}$ over players. That is, if $p_i \succ_{a_j} p_{i'}$ in $P_{a_j}$, arm $a_j$ strictly prefers player $p_i$ to player $p_{i'}$. This knowledge of arm-side preferences is a mild and common assumption, consistent with prior work in bandit learning in matching markets~\citep{liu2020competing,kong2023player,lin2024stable}, and reflects settings where arms (e.g., jobs or schools) have well-established rankings over players (e.g., candidates or applicants) through prior evaluations such as recruitment interviews and entrance examinations.

\textbf{Dynamic and unknown preferences of players over arms}
In contrast, players' preferences over arms are dynamic, uncertain, and must be learned over time. We model these preferences via a linear contextual utility model. At each time step $t$, the utility of player $p_i$ for arm $a_j$ is given by
\vspace{-.1cm}
\begin{sizeddisplay}{\small}
\begin{align}\label{eq:linear_utility}
	\mU_{i, j}(t) = \theta_i^{\top} \rx_j(t),
\end{align}
\end{sizeddisplay}
where $\theta_i \in \sR^d$ is an unknown \emph{preference parameter} intrinsic to player $p_i$ and fixed across all $T$ rounds, and $\rx_j(t) \in \sR^d$ is the \emph{contextual feature vector} of arm $a_j$ at time $t$, revealed before the matching decision. The true utility is not directly observed; instead, upon matching player $p_i$ to arm $a_j$ at time $t$, the platform receives a noisy reward 
\begin{sizeddisplay}{\small}
\begin{align}\label{eq:noise_reward}
	y_{i, j}(t) = \mU_{i, j}(t) + \epsilon_{i, j}(t),
\end{align}
\end{sizeddisplay}
with $\epsilon_{i, j}(t)$ being i.i.d. noise with mean 0.
And if a player is not matched at time $t$, the received reward $y_i(t)$ is set to be 0.
With all player-arm pairs, we could construct a utility matrix $\mU(t)$ encoding the preferences of players over arms.
This formulation captures settings such as online labor markets, where workers (players) have latent preferences over jobs (arms) that can only be inferred through repeated interactions. In such markets, the ranking of a worker over available jobs is revealed only when job tasks are posted, while the platform can access the fixed preferences of jobs over workers through prior screening or interviews.

For the valuation model, we make the following assumptions throughout the paper.

\begin{assume}\label{assume:bound_context}
    Contexts are bounded: $\norm{\rx_j(t)}_2 \leq B_x$, $\forall j \in [K]$ almost surely.
\end{assume}

\begin{assume}\label{assume:bound_noise}
    Let $\left\{F_t\right\}_{t=0}^{\infty}$ be a filtration such that $\rx_{\mu_t(i)}(t)$ is $F_{t-1}$ measurable and $\epsilon_t$ is $F_t$-measurable, and $\epsilon_t$ is condionally $R$-subgaussian for some $R \geq 0$.
\end{assume}

\begin{assume}\label{assume:bound_preference}
    The preference parameters are bounded: $\norm{\theta_i}_2 \leq B_{\theta}$, $\forall i \in [N]$.
\end{assume}

Denote $B_y := 2 B_{\theta} B_x$.
Following conventional linear contextual bandits literature, we assume $B_y \leq 1$.

\subsection{Matching and Stability}\label{subsec:matching_stability}
Stability is a key concept to characterize the matching in the market, which ensures there is no \emph{justified envy} in the market. Formally, a matching $\mu_t$ is stable at time $t$ if no player and arm has an incentive to abandon their current partner according to the utility matrix $\mU(t)$ and the preference profile $(P_a)_{a \in \gK}$. \looseness=-1
\begin{defn}[Stability]\label{def:context_stability}
	A matching $\mu_t$ is stable at time $t$ if there is no \emph{blocking pair} $(p_i, a_j)$ such that $p_i \succ_{a_j} \mu_t(a_j)$ and $\mU_{i, j}(t) > \mU_{i, \mu_t(p_i)}(t)$.
\end{defn}
Stable matchings may not be unique. Let $\gS_t := \left\{\mu: \mu \text{ is a stable matching w.r.t. $\mU(t)$ and $(P_a)_{a \in \gK}$}\right\}$ be the set of all stable matchings with respect to $\mU(t)$ and $(P_a)_{a \in \gK}$. Given complete preferences, the Gale-Shapley (GS) algorithm~\citep{gale1962college} finds a stable matching after repeated proposals from one side of the market to the other. When the preferences for both sides are strict, due to the lattice structure of the problem, the matching returned by the GS algorithm is always optimal within $\gS_t$ for each member of the proposing side~\citep{knuth1976stable}. Let $\mu_t^* = \left\{(i, \mu_t^*(i)): i \in [N]\right\} \in \gS_t$ be the players' most preferred stable matching. That is to say, $\mU_{i, \mu_t^*(i)}(t) \geq \mU_{i, \mu(i)}(t)$ for any $\mu \in \gS_t, i \in [N]$. We define $\mU_i^*(t) := \mU_{i, \mu_t^*(i)}(t) = \max_{\mu \in \gS_t} \mU_{i, \mu_t(i)}(t)$ as the player's \emph{optimal stable share} at time $t$. 

\subsection{Regret Definition}\label{subsec:context_regret}
The performance of an algorithm is measured by the \emph{player-optimal stable regret}. For a player $p_i \in \gN$, this regret is defined at time $T$ as the sum, over all rounds, of the difference in reward between being matched by the time-varying player-optimal stable policy $\mu_t^*$ and by the algorithm's implemented policy $\mu_t$:
\vspace{-.3cm}
\begin{sizeddisplay}{\small}
\begin{align}\label{eq:context_regret}
	Reg_i(T) = \sum_{t = 1}^T \mU_i^*(t) - \mathbb{E} \left[\sum_{t = 1}^T y_{i}(t)\right],
\end{align}
\end{sizeddisplay}
where $y_i(t) = y_{i, \mu_t(i)}(t)$ is the observed reward of player $p_i$ under the matching $\mu_t$ at time $t$, with the expectation over algorithmic randomness and noise.

The goal of the problem is to design an online matching algorithm that minimizes the regret for every player by learning the unknown preference parameters $\left\{\theta_i\right\}$ while ensuring stable or near-stable matchings over time.

%% file: stochastic.tex
In this section, we consider the setting where the context feature vector $\rx_j(t)$ of each arm $a_j$ is drawn i.i.d. from a fixed but unknown distribution $\gD_{\rx_j}$ at every round $t$.

Given the realized contexts $\rx_j$, for $j \in [K]$, define the \emph{minimum difference} between any two utilities for player $p_i$ as\footnote{Without causing ambiguity, we omit the time index in the following.} $\delta_{\min}^{(i)} = \min_{j, j' \in [K]} \abs{(\theta_i)^{\top} (\rx_j - \rx_{j'})}$,
and the minimum difference for the utility matrix as  $\delta_{\min} = \min_{i \in [N]} \delta_{\min}^{(i)}$.

The difficulty of the learning and matching problem is directly tied to the minimum difference: a larger difference simplifies the task by making rankings more easily distinguishable. To formalize this measure of problem hardness, we define the following minimum preference gap.

\begin{defn}[Minimum Preference Gap]\label{def:context_min_gap_cond}
	Given the ground-truth context distribution $(\gD_{\rx_j})_{j \in [K]}$ and preference parameters $(\theta_i)_{i \in [N]}$, the \emph{minimum preference gap} $\Delta_{\min}$ is defined as the supremum of all gaps for which the event $\left\{\delta_{\min} \geq \Delta\right\}$ is sufficiently likely -- specifically, the probability of the event is greater than $1 - \frac{\log T}{T \Delta^2}$. That is,
    \begin{sizeddisplay}{\small}
	\begin{align}\label{eq:context_min_gap_cond}
		\Delta_{\min} := \sup \left\{\Delta: \sP \left(\delta_{\min} \geq \Delta\right) \geq 1 - \frac{\log T}{T \Delta^2}\right\}.
	\end{align}
    \end{sizeddisplay}
\end{defn}
\vspace{-.5cm}
The definition of the minimum preference gap is motivated by the exploration-exploitation trade-off. To estimate a utility within a precision $\Delta$ and with confidence $1 - \frac{1}{T}$, we require $\gO \left({\log T}/{\Delta^2}\right)$ exploration samples. Conversely, for a given $\Delta$, if $\sP \left(\delta_{\min} \geq \Delta\right) > 1 - \zeta$ for some threshold $\zeta > 0$, then during exploitation there is at most an $\gO(\zeta)$ probability of making a ranking error due to indistinguishability, leading to an $\gO(\zeta T)$ regret contribution. Balancing the exploration regret, $\gO \left({\log T}/{\Delta^2}\right)$, against the exploitation regret, $\gO(\zeta T)$, by setting these terms equal yields the optimal threshold $\zeta = \frac{\log T}{T \Delta^2}$, which directly informs our definition. To aid interpretability, we provide an illustrative example in Appendix~\ref{sec:illustration_min_gap}.

\begin{rema}\label{rema:min_gap_generalize}
	Our minimum preference gap $\Delta_{\min}$ generalizes the stricter sub-optimality gap $\tilde{\Delta}$ used in prior work~\citep{li2022dynamic}, which relies on a deterministic condition: a uniform lower bound for $\delta_{\min}$ over all $T$ rounds. This could be viewed as $\sP \left(\delta_{\min} \geq \tilde{\Delta}\right) = 1$. In contrast, our framework adopts a probabilistic soft threshold. It is clear that $\Delta_{\min} \geq \tilde{\Delta}$, and thus any regret upper bound established using our condition $\Delta_{\min}$ immediately provides a valid guarantee for the more restrictive $\tilde{\Delta}$.

    On the other hand, unlike the deterministic lower bounds in \citet{parikh2024competing}, which constrain the environment's behavior and entirely rule out worst-case realizations, our probabilistic minimum preference gap captures the intrinsic difficulty of the distribution, permits occasional ties, and provides stronger guarantees when worst-case scenarios arise with only small probability.
\end{rema}

\subsection{Batched Adaptive Regret-Balancing Algorithm}
We present Batched Adaptive Regret-Balancing (BARB, Algorithm~\ref{alg:barb_abstract}) for contextual bandits in matching markets with stochastic contexts. BARB operates in batches. In batch $k$, it maintains a candidate gap $\Delta_k$ -- initialized with the input $\Delta_1$ and shrunk across batches (Line~\ref{algline:delta_shrink}) -- to iteratively approximate the true minimum preference gap $\Delta_{\min}$.

Each batch executes two adaptive phases: an \emph{exploration phase} (Line~\ref{algline:explore_graph} - \ref{algline:param_update}), where we use maximum cardinality matchings (Line~\ref{algline:mcf}) to collect samples to refine parameter estimates, and an \emph{exploitation phase}, where the Gale-Shapley algorithm is deployed when rankings are distinguishable (Line~\ref{algline:gs}); otherwise, evidence is gathered to advance to the next batch (Line~\ref{algline:count_overlap} - \ref{algline:count_overlap_end}). 

Take $\gG_k^{(i)}$ as the set of rounds in which player $p_i$ explores, i.e., $p_i$ is involved in the maximum cardinality matching in those rounds in $\gG_k^{(i)}$. Let $i_s = \mu_s(i)$ be the arm matched to player $p_i$ under the implemented matching $\mu_s$ at time $s$. In time $t_k$ of the exploration phase of the $k$-th batch, we use the following update rule for the Gram matrix $\mV_i^{(t_k)}$ and the preference parameter $\hat{\theta}_i^{(t_k)}$:\footnote{Without ambiguity, we omit the time index $s$ for the observed context and the reward.}
\begin{sizeddisplay}{\small}
\begin{align}\label{eq:context_update}
	\begin{split}
		\mV_i^{(t_k)} &= \lambda\mI + \sum_{s < t_k, s \in \gG_k^{(i)}} \rx_{i_s} \rx_{i_s}^{\top}, \\
		\hat{\theta}_i^{(t_k)} &= \left(\mV_i^{(t_k)}\right)^{-1} \sum_{s < t_k, s \in \gG_k^{(i)}} \rx_{i_s} y_{i, i_s}.
	\end{split}
\end{align}
\end{sizeddisplay}

Given the ridge parameter $\lambda$, let $\delta = T^{-2}$, let the input parameter $\eta = R \sqrt{d \log \left(\frac{1 + T B_x^2 / \lambda}{\delta}\right)} + \lambda^{1/2} B_{\theta}$. Classical result on ridge regression in the bandit framework~\citep{abbasi2011improved} shows that $\norm{\theta_i - \hat{\theta}_i^{(t_k)}}_{\mV_i^{(t_k)}} \leq \eta$ with high probability. 
On the other hand, the relationship $\abs{\mU_{i, j} - \hat{\mU}_{i, j}} \leq \norm{\theta_i - \hat{\theta}_i^{(t_k)}}_{\mV_i^{(t_k)}} \norm{\rx_j(t_k)}_{\left(\mV_i^{(t_k)}\right)^{-1}}$ suggests that having $\norm{\rx_j(t_k)}_{\left(\mV_i^{(t_k)}\right)^{-1}} \leq \frac{\Delta_k}{\eta} := \xi_k$ suffices to ensure the estimated utility $\mU$ is of precision $\Delta_k$ with high probability. 
This observation motivates the design of the adaptive criterion (Line~\ref{algline:adaptive_criterion}) to transit between exploration and exploitation within each batch.

During the exploitation phase of batch $k$, for every player $p_i$, we construct confidence intervals with radius $\Delta_k$ for the estimated utility of each arm and track overlaps between intervals of distinct arms using a counter $N_k$. An excessive overlap indicates that $\Delta_k$ may be too large relative to $\Delta_{\min}$. Consequently, when $N_k$ exceeds a specified threshold (Line~\ref{algline:overlap_threshold}), the algorithm moves to batch $k + 1$.

\begin{algorithm}[t]
    \caption{Batched Adaptive Regret-Balancing (BARB), abstract version}\label{alg:barb_abstract}
    \begin{algorithmic}[1]
        \Require Time horizon $T$, preference profile $(P_a)_{a \in \gK}$, context dimension $d$, parameter $\eta$, candidate gap $\Delta_1$, ridge penalty parameter $\lambda$.
        \State $t \leftarrow 1$.
        \For{$k = 1, 2, \cdots$}
            \State $\xi_k \leftarrow \frac{\Delta_k}{\eta}$; $\mV_i^{(1)} \leftarrow \lambda \mI_{d \times d}, \hat{\theta}_i^{(1)} \leftarrow \bm{0}_d$, $\forall i \in [N]$.
            \State $N_k \leftarrow 0$. \Comment{Exploitation rounds with overlap CIs.}
            \For{$t_k = 1, 2, \cdots$}
                \State Observe context $\rx_j(t_k)$ for arm $a_j, \forall j \in [K]$.
                \If{$\exists i, j$, s.t. $\norm{x_j(t_k)}_{\left(\mV_i^{(t_k)}\right)^{-1}} > \xi_k$}\label{algline:adaptive_criterion}
                    \State \multiline{Form a bipartite graph $G_{t_k}$ such that $\forall (i, j) \in G_{t_k}$ has $\norm{\rx_j(t_k)}_{\left(\mV_i^{(t_k)}\right)^{-1}} > \xi_k$.}\label{algline:explore_graph}
                    \State \multiline{$\mu_{t_k} \leftarrow$ \textbf{Maximum Cardinality Matching} on $G_{t_k}$.}\label{algline:mcf}
                    \State Update $\mV_i^{(t_k + 1)}$ and $\hat{\theta}^{(t_k + 1)}$ with Eq.(\ref{eq:context_update}).\label{algline:param_update}
                \Else
                    \State $\hat{\mU}_{i, j}(t_k) \leftarrow \left(\hat{\theta}_i^{(t_k)}\right)^{\top} \rx_j(t_k)$.\label{algline:exploit_estimate}
                    \State $\mu_{t_k} \leftarrow$ \textbf{Deferred Acceptance} with $\hat{\mU}(t_k)$.\label{algline:gs}
                    \If{$\exists i$, overlapping utility CIs exist}\label{algline:count_overlap}
                        \State $N_k \leftarrow N_k + 1$.
                    \EndIf\label{algline:count_overlap_end}
                \EndIf
                \State $t \leftarrow t + 1$.
                \IIf{$t > T$} Stop and return. \EndIIf
                \If{$N_k > \frac{3 \log T}{16\Delta_k^2}$}\label{algline:overlap_threshold}
                    \State $\Delta_{k + 1} \leftarrow \frac{\Delta_k}{\sqrt{2}}$, enter the next batch.\label{algline:delta_shrink}
                \EndIf
            \EndFor
        \EndFor
    \end{algorithmic}
\end{algorithm}

\subsection{Theoretical Analysis}\label{subsec:theory_stochastic_context}
We now present the regret upper bound for each player by following the BARB algorithm.
\begin{thm}[Upper Bound for Stochastic Contexts]\label{thm:stochastic_context_upper_bound}
	Assume that the environment follows Assumption~\ref{assume:bound_context}, \ref{assume:bound_noise} and \ref{assume:bound_preference}, and the contexts are stochastic. Following the BARB algorithm, the player-optimal stable regret for $p_i \in \gN$ is 
    \begin{sizeddisplay}{\small}
        \begin{align}\label{eq:stochastic_context_upper_bound}
		Reg_i(T) = \gO \left(\frac{\log^2 T}{\Delta_{\min}^2} + \frac{\log T}{\Delta_{\min}^2}\right) = \gO \left(\frac{\log^2 T}{\Delta_{\min}^2}\right).
	\end{align}
    \end{sizeddisplay}
\end{thm}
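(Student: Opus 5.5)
We bound the regret of $p_i$ batch by batch, and within each batch we separate exploration from exploitation. First we fix a good event $\mathcal{E}$: for every batch $k$, every player $p_{i'}$ and every round $t_k$, the estimate satisfies $\norm{\theta_{i'} - \hat{\theta}_{i'}^{(t_k)}}_{\mV_{i'}^{(t_k)}} \leq \eta$. The self-normalized bound of \citet{abbasi2011improved} with $\delta = T^{-2}$, combined with a union bound over players and batches (there are at most $\gO(\log T)$ batches, see below), gives $\sP(\mathcal{E}^c) = \gO(N\log T/T)$. The complement therefore costs only $\gO(1)$ regret because $B_y \leq 1$, and from now on we work on $\mathcal{E}$.

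On $\mathcal{E}$, every exploitation round of batch $k$ satisfies $\abs{\hat{\mU}_{i',j} - \mU_{i',j}} \leq \eta\,\xi_k = \Delta_k$ for all pairs. We then split exploitation rounds into two kinds.

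\textbf{Non-overlapping rounds.} If no two confidence intervals of radius $\Delta_k$ overlap for any player, the estimated rankings coincide with the true rankings. Deferred Acceptance on $\hat{\mU}$ then returns exactly $\mu_t^*$, so the round incurs zero regret.

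\textbf{Overlapping rounds.} These are counted by $N_k$, and each costs at most $B_y \leq 1$. Before the batch ends we have $N_k \leq \frac{3\log T}{16\Delta_k^2}+1$.

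For the exploration rounds of batch $k$, we use the maximum cardinality matching structure. In every exploration round, at least one pair $(i',j)$ with $\norm{\rx_j}_{(\mV_{i'})^{-1}} > \xi_k$ is matched. For each player, the elliptical potential lemma bounds the number of rounds with $\norm{\rx}_{\mV^{-1}} > \xi$ by $\gO\big(d\log(1+TB_x^2/\lambda)/\xi^2\big)$. Summing over the $N$ players gives $\gO(N d\,\eta^2\log T/\Delta_k^2) = \gO(\log^2 T/\Delta_k^2)$ exploration rounds per batch, since $\eta^2 = \gO(d\log T)$. Player $p_i$ suffers at most $1$ regret in each such round. This is the source of the $\log^2 T$ term. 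The remaining part of the plan is to control the final $\Delta_k$ and the number of batches.

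\textbf{Main obstacle: the batch index never descends much below $\Delta_{\min}$.} Since $\Delta_{k+1} = \Delta_k/\sqrt2$, the quantity $1/\Delta_k^2$ doubles with each batch. Hence all sums over batches are geometric and dominated by the last batch $K^\star$, giving total regret $\gO\big(\log^2 T/\Delta_{K^\star}^2\big)$. It therefore suffices to show that with high probability $\Delta_{K^\star} \geq \Delta_{\min}/c$ for a constant $c$ (for instance $c = 2\sqrt2$).

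Suppose the current batch has $\Delta_k \leq \Delta_{\min}/2$, taking $\Delta$ slightly below $\Delta_{\min}$ in the supremum of Definition~\ref{def:context_min_gap_cond}. An overlap of radius-$\Delta_k$ intervals forces $\delta_{\min} < 2\Delta_k + $ (estimation slack) $\leq \Delta_{\min}$. Because contexts are i.i.d. across rounds and the exploration/exploitation decision at $t_k$ depends only on the past and on $\rx(t_k)$, each exploitation round overlaps with conditional probability at most $p = \frac{\log T}{T\Delta_{\min}^2}$. The expected number of overlaps over at most $T$ rounds is therefore at most $\log T/\Delta_{\min}^2$. A Freedman/Bernstein martingale bound then shows that, with probability $1-\gO(1/T)$, the count stays below the threshold $\frac{3\log T}{16\Delta_k^2}$. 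Here the constant $3/16$ must be matched against the factor $\Delta_k^2 \leq \Delta_{\min}^2/4$, which makes the threshold at least $\frac{3\log T}{4\Delta_{\min}^2}$. The delicate step is exactly this matching of constants, together with handling the dependence introduced by the adaptive criterion. If it proves too tight, the first fallback is to compare against the event at $\Delta_{\min}/c$ for a larger $c$, and to use $\sP(\delta_{\min} < \Delta_{\min}) \leq p$ together with the Bernstein variance term $Tp$.

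Since $\Delta_1$ is a constant and $\Delta_k$ stops shrinking once it reaches order $\Delta_{\min}$, the number of batches is $\gO(\log(1/\Delta_{\min}))$. For $\Delta_{\min} \gtrsim T^{-1/2}$ this is $\gO(\log T)$, which justifies the union bound used for $\mathcal{E}$. Summing the contributions:
\begin{itemize}[leftmode]
\item exploration: $\gO(\log^2T/\Delta_{\min}^2)$;
\item overlapping exploitation: $\sum_k \gO(\log T/\Delta_k^2) = \gO(\log T/\Delta_{\min}^2)$;
\item failure events: $\gO(1)$.
\end{itemize}
Together these yield \eqref{eq:stochastic_context_upper_bound}.
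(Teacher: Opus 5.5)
Your decomposition matches the paper's: a confidence event from the self-normalized bound with $\delta=T^{-2}$, zero regret in non-overlapping exploitation rounds, the elliptical-potential count of exploration rounds, at most $\frac{3\log T}{16\Delta_k^2}+1$ overlapping rounds per batch, and geometric domination by the last batch. However, the step you single out as the main obstacle fails with the constants you commit to, and only your fallback works.

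\textbf{The overlap implication.} On the good event, overlapping radius-$\Delta_k$ intervals give $\abs{\hat{\mU}_{i,j}-\hat{\mU}_{i,j'}}\le 2\Delta_k$. Hence $\abs{\mU_{i,j}-\mU_{i,j'}}\le 4\Delta_k$, because the estimation slack is itself $2\Delta_k$. So $\Delta_k\le\Delta_{\min}/2$ only yields $\delta_{\min}\le 2\Delta_{\min}$, an event that Definition~\ref{def:context_min_gap_cond} does not control.

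\textbf{The threshold.} Even granting a per-round overlap probability of $\frac{\log T}{T\Delta_{\min}^2}$, the first batch with $\Delta_k\le\Delta_{\min}/2$ can have threshold $\frac{3\log T}{16\Delta_k^2}$ as small as $\frac{3\log T}{4\Delta_{\min}^2}$. This lies below the resulting mean $\frac{\log T}{\Delta_{\min}^2}$, so no tail bound can keep the count under it.

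\textbf{The fix.} Take the first batch $k_0$ with $\Delta_{k_0}<\Delta_{\min}/4$.
\begin{itemize}
\item Overlap then forces $\delta_{\min}\le 4\Delta_{k_0}<\Delta_{\min}$. This has per-round probability at most $\frac{\log T}{T\Delta_{\min}^2}$, because the set of admissible $\Delta$ in the definition is downward closed.
\item The threshold now exceeds $\frac{3\log T}{\Delta_{\min}^2}$, three times the mean.
\item Multiplicative Chernoff with $\rho\ge 2$ gives failure probability at most $\exp(-\log T/\Delta_{\min}^2)\le 1/T$, using $\Delta_{\min}\le 1$.
\end{itemize}
Since $\Delta_{k_0}\ge\Delta_{\min}/(4\sqrt2)$, the constant is $c=4\sqrt2$, which is exactly Lemma~\ref{lem:stay_exploit}.

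\textbf{No martingale argument is needed.} You do not need Freedman's inequality or any treatment of the adaptive criterion. The index $k_0$ is deterministic, and $\delta_{\min}(t)$ depends only on the i.i.d.\ contexts. So on the good event, $N_{k_0}\le\sum_{t=1}^{T}\mathbf{1}\{\delta_{\min}(t)\le 4\Delta_{k_0}\}$, a sum of $T$ independent Bernoulli variables that does not involve the algorithm at all.

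\textbf{The union bound for $\mathcal{E}$ is circular as written.} It should not invoke ``the number of batches is $\gO(\log(1/\Delta_{\min}))$'', since you only establish that on $\mathcal{E}$ and with high probability. Two non-circular options:
\begin{itemize}
\item Union-bound over at most $T$ batches; with $\delta=T^{-2}$ this already gives $N/T$ (Lemma~\ref{lem:bad_event_prob}).
\item Observe that every completed batch contains more than $\frac{3\log T}{16\Delta_k^2}$ rounds, so the number of batches is $\gO(\log T)$ deterministically.
\end{itemize}
Also, $\Delta_{\min}\ge\sqrt{\log T/T}$ holds automatically, since the defining condition is vacuous below that level. Your caveat ``for $\Delta_{\min}\gtrsim T^{-1/2}$'' is therefore not an extra assumption.
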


\vspace{-.5cm}
We provide the proof sketch for Theorem~\ref{thm:stochastic_context_upper_bound} as follows, while the detailed proof is deferred to Appendix~\ref{sec:proof_stochastic_upper_bound}.
\begin{proof}[Proof Sketch]
    In the exploration phase of the $k$-th batch, from the contruction of graph $G_{t_k}$, we know that $\norm{\rx_{i_{t_k}}(t_k)}_{\left(\mV_i^{(t_k)}\right)^{-1}} > \xi_k$, while by elliptical potential lemma~\citep{carpentier2020elliptical}, we have $\sum_{t_k \in \gG_k^{(i)}} \norm{\rx_{i_{t_k}}(t_k)}_{\left(\mV_i^{(t_k)}\right)^{-1}} \leq \sqrt{\abs{\gG_k^{(i)}} d \log \left(\frac{\abs{\gG_k^{(i)}} + d \lambda}{d \lambda}\right)}$ and hence $\abs{\gG_k^{(i)}} \leq \frac{d \log \left(\frac{T + d \lambda}{d \lambda}\right)}{\xi_k^2}$.
    Let $\gG_k$ be the set of rounds that any player explores, we have $\gG_k = \cup_i \gG_k^{(i)}$ and hence $\abs{\gG_k} \leq \sum_{i} \abs{\gG_k^{(i)}} \leq \frac{N d \log \left(\frac{T + d \lambda}{d \lambda}\right)}{\xi_k^2}$. Since $\xi_k = \frac{\Delta_k}{\eta}$ with $\eta = \gO \left(\sqrt{\log T}\right)$, we have that the exploration regret is no larger than $B_y \abs{\gG_k} = \gO \left(\frac{\log^2 T}{\Delta_{k}^2}\right)$.
	
	In the exploitation phase of the $k$-th batch, we incur no regret when the confidence intervals for the utilities are non-overlapped by calling the Gale-Shapley algorithm. On the other hand, by the stopping threshold, we have at most $N_k$ rounds with overlapped confidence intervals in the $k$-th exploitation phase, by the specified threshold for $N_k$ (Line~\ref{algline:overlap_threshold}), the exploitation regret for the $k$-th batch is $\gO(\frac{\log T}{\Delta_k^2})$.
	
	Finally, with multiplicative Chernoff bound~\citep{mitzenmacher2017probability}, we could stop exploration at the first batch when $\Delta_k < \Delta_{\min} / 4$ with high probability. By the doubling trick applied in the algorithm design, the regret of the $k$-th batch dominates the sum of the regret of the first $k-1$ batches, and hence the final regret is as shown in Eq.(\ref{eq:stochastic_context_upper_bound}).\looseness=-1
\end{proof}

The regret upper bound in Theorem~\ref{thm:stochastic_context_upper_bound} holds for a fully adaptive algorithm. The algorithm's strength is its minimal assumption: contexts need only be stochastic, with no further requirements on the underlying distribution. 
Under the additional structural assumption that the context covariance matrix has a uniformly bounded minimum eigenvalue, we can eliminate a $\log T$ factor from the regret bound. This refined bound then explicitly reflects the problem's hardness by inversely relating to this eigenvalue lower bound. 

\begin{assume}\label{assume:min_eigenvalue}
	Let $\gD_{\rx}$ be the joint distribution of contexts for all $K$ arms. Define the context covariance matrix as $V = \mathbb{E} \left[X X^{\top} \mid X \in \gD_\rx\right]$. Then there exists a deterministic constant $\tilde{\lambda}$ such that for all $X \in \gD_\rx$ we have the minimum eigenvalue of the covariance matrix $\lambda_{\min} (V) \geq \tilde{\lambda} > 0$.
\end{assume}

Assumption~\ref{assume:min_eigenvalue} ensures that, under the distribution $\gD_{\rx}$, the data spans all dimensions of the feature space, thereby avoiding multicolinearity or near-degeneracy.
When Assumption~\ref{assume:min_eigenvalue} holds, it is possible to design a batched explore-then-commit (batched-ETC) algorithm that retains the regret-balancing principle of BARB (Algorithm~\ref{alg:barb_abstract}) without requiring adaptive exploration. The resulting algorithm achieves an alternative regret upper bound of $\gO (\frac{\log T}{\tilde{\lambda}^2 \Delta_{\min}^2})$. The algorithmic details and the regret analysis are deferred to Appendix~\ref{sec:betc_alg}.

\subsection{Matching Asymptotic Regret Bounds}\label{subsec:asymptotic_regret}
In this subsection, we demonstrate that when the distribution of the minimum difference $\delta_{\min}$ satisfies a mild regularity condition, the minimum preference gap definition (Definition~\ref{def:context_min_gap_cond}) yields an asymptotic regret upper bound of $\tilde{\gO}(T^{2/3})$ as $T \to \infty$.
Furthermore, we construct an instance that satisfies this regularity condition and attains a matching lower bound of $\Omega(T^{2/3})$, establishing the tightness of the asymptotic regret rate. These matching regret bounds also justify the rationality of the minimum preference gap definition.

We first present the following assumption that ensures the cumulative distribution function (CDF) of $\delta_{\min}$ is dominated by a linear function when it is close enough to 0.

\begin{assume}\label{assume:cdf_linear}
	Under the definition of minimum difference $\delta_{\min}$, denote the CDF of $\delta_{\min}$ as $F(\Delta)$, for some constant $\Delta_0$ sufficiently close to 0, and for some constant $c > 0$, we have, $\forall \Delta \leq \Delta_0, F(\Delta) \leq c \Delta$.
\end{assume}
Assumption~\ref{assume:cdf_linear} requires that the probability that the minimum utility difference falls below $t$ grows at most linearly in $t$, ensuring that we do not have excessively high probability for small gaps.
This assumption is typically satisfied when the minimum difference $\delta_{\min}$ has a bounded probability density function (PDF) in a neighbourhood of zero.

The following theorem shows the asymptotic regret upper bound under Assumption~\ref{assume:cdf_linear}.

\begin{thm}\label{thm:asymptotic_upper_bound}
	Under Assumption~\ref{assume:cdf_linear}, when $T$ is sufficiently large, the regret of the BARB algorithm satisfies $Reg_i(T) = \tilde{\gO}(T^{2/3})$, $\forall i \in [N]$.
\end{thm}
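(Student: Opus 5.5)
The plan is to combine Theorem~\ref{thm:stochastic_context_upper_bound} with a lower bound on $\Delta_{\min}$ that Assumption~\ref{assume:cdf_linear} makes available. Theorem~\ref{thm:stochastic_context_upper_bound} gives $Reg_i(T) = \gO(\log^2 T/\Delta_{\min}^2)$. It therefore suffices to show that $\Delta_{\min} \geq c' (\log T / T)^{1/3}$ for some constant $c'>0$ once $T$ is large. Substituting this bound gives
\[
Reg_i(T) = \gO\left(\log^2 T \cdot (T/\log T)^{2/3}\right) = \gO\left(T^{2/3}\log^{4/3} T\right) = \tilde{\gO}(T^{2/3}).
\]

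The key step is to lower bound $\Delta_{\min}$ through Definition~\ref{def:context_min_gap_cond}. Fix a candidate $\Delta \le \Delta_0$. By Assumption~\ref{assume:cdf_linear},
\[
\sP(\delta_{\min} \ge \Delta) \ge 1 - F(\Delta) \ge 1 - c\Delta .
\]
Strictly, the event $\delta_{\min}<\Delta$ has probability at most $F(\Delta)$, so this inequality is valid. Hence $\Delta$ is admissible in the supremum whenever $c\Delta \le \log T/(T\Delta^2)$, that is, whenever $\Delta^3 \le \log T/(cT)$. Choose
\[
\Delta^\star = \left(\log T/(cT)\right)^{1/3}.
\]
Since $\Delta^\star \to 0$, we have $\Delta^\star \le \Delta_0$ for all sufficiently large $T$; this is exactly where the hypothesis ``$T$ sufficiently large'' is used. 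Because $\Delta^\star$ is admissible, $\Delta_{\min} \ge \Delta^\star$.

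The main point requiring care is whether the proof of Theorem~\ref{thm:stochastic_context_upper_bound} really uses $\Delta_{\min}$ only through the defining probability inequality. That proof stops at the first batch with $\Delta_k<\Delta_{\min}/4$ via a Chernoff bound on the overlap counter $N_k$. It must still hold when $\Delta_{\min}$ is as small as $T^{-1/3}$ and the failure probability $\log T/(T\Delta^2)$ is as large as $T^{-1/3}\log^{1/3}T$, rather than tiny.

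I would check this as follows. At the candidate gap $\Delta_k \approx \Delta_{\min}/4$, ties occur in exploitation with probability at most $\log T/(T\Delta_{\min}^2)$. Over at most $T$ rounds, their expected number is therefore at most $\log T/\Delta_{\min}^2$. This is of the same order as the threshold $\frac{3\log T}{16\Delta_k^2} = 3\log T/\Delta_{\min}^2$, so the counter stays below threshold with high probability by multiplicative Chernoff. The constant slack between $1/\Delta_{\min}^2$ and $3/\Delta_{\min}^2$ is what makes this work, and the argument is uniform in the size of $\Delta_{\min}$. Thus the bound of Theorem~\ref{thm:stochastic_context_upper_bound} applies verbatim, and it holds for every $i\in[N]$.

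If one also wants to avoid dependence on $\Delta_1$, note that the batch doubling stops after $\gO(\log T)$ batches. This only affects logarithmic factors.
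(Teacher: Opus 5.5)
Your proposal is correct and follows the paper's proof. Assumption~\ref{assume:cdf_linear} gives $\sP(\delta_{\min}\ge\Delta)\ge 1-c\Delta$, so $(\log T/(cT))^{1/3}$ is admissible in Definition~\ref{def:context_min_gap_cond} once it drops below $\Delta_0$; hence $\Delta_{\min}=\tilde{\Omega}(T^{-1/3})$, and the $\gO(\log^2 T/\Delta_{\min}^2)$ bound of Theorem~\ref{thm:stochastic_context_upper_bound} yields $\tilde{\gO}(T^{2/3})$. Your direct check that $\Delta^\star$ is admissible, using the correct inequality $\sP(\delta_{\min}\ge\Delta)\ge 1-F(\Delta)$ rather than the paper's equality, is a slightly cleaner replacement for the paper's monotonicity argument for the existence of $\bar{\Delta}$, and your re-check of the Chernoff step agrees with Lemma~\ref{lem:stay_exploit}.
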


The proof of Theorem~\ref{thm:asymptotic_upper_bound} can be found in Appendix~\ref{sec:proof_asymptotic_upper_bound}.
We now provide a matching minimax regret lower bound in the following theorem.

\begin{thm}\label{thm:asymptotic_lower_bound}
    Let $N = K = 3$. For any policy $\pi$, there exists an instance satisfying Assumption~\ref{assume:cdf_linear} such that for some constants $c,\Delta_0 > 0$, when $\Delta \leq \Delta_0$, the CDF of $\delta_{\min}$ obeys $F_T(\Delta) \leq c \Delta$ for all $T\ge1$.\footnote{The context distribution and hence the distribution of $\delta_{\min}$ may depend on $T$. Yet, the condition holds uniformly over $T$, and so the guarantees of Theorem~\ref{thm:asymptotic_upper_bound} holds starting from some $T_0>0$.} Then at least one player suffers regret $\Omega(T^{2/3})$, i.e., there exists $i \in [N]$ such that $Reg_i(T) \geq f(c) T^{2/3}$, where $f(c)$ depends only on $c$.
\end{thm}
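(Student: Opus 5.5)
We build a two-hypothesis instance with $N=K=3$ and apply a Le Cam style argument. The construction has two parts.

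\textbf{Background exploitation regret.} Let the context distribution put a near-tie between two arms in player $p_1$'s ranking with probability of order $\epsilon$. Conditioned on this "critical" event, the utility difference has magnitude $\asymp \epsilon$. Take $\theta_1^{\pm}=\theta_0\pm\epsilon v$, so that on the critical event the sign of $\theta_1^\top(\rx_j-\rx_{j'})$ flips between the two hypotheses.

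\textbf{Amplified regret via the matching structure.} Use the other two players and the arm preferences to make $p_1$'s ranking reconfigure the player-optimal stable matching. We arrange a chain of preferences so that when $p_1$ picks the wrong arm, a second player, say $p_2$, loses a constant utility gap rather than only $\epsilon$. Concretely:
\begin{itemize}
\item Arm $a_1$ prefers $p_1$ over $p_2$.
\item Player $p_2$ values $a_1$ much more than $a_2$.
\item If $p_1$ truly prefers $a_2$, the stable matching gives $a_1$ to $p_2$. Otherwise $p_2$ is displaced to $a_3$ with a constant loss.
\end{itemize}
So a mistaken ranking of $p_1$ on critical rounds costs some player $\Omega(1)$ per round, while for $p_1$ itself the gap is only $\epsilon$.

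\textbf{CDF check.} $\delta_{\min}$ equals $\Theta(\epsilon)$ with probability $\Theta(\epsilon)$. Otherwise it is bounded below by a constant, or spread with bounded density. Hence $F_T(\Delta)\le c\Delta$ uniformly in $T$ for small $\Delta$: for $\Delta<\epsilon$ we take a density-smoothed version of the tie, and for $\Delta\ge\epsilon$ the bound $F\le C\epsilon\le C\Delta$ holds. This is the point where $\epsilon$ may depend on $T$ while $c$ stays fixed, as the footnote allows.

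\textbf{Information argument.} Distinguishing $\theta_1^+$ from $\theta_1^-$ requires observing $p_1$ on a direction with $\Omega(1)$ projection onto $v$. Samples of $p_1$'s reward reveal $\epsilon v$ only via contexts aligned with $v$. Design the contexts so that the informative context is available only for the arm whose selection costs $p_1$ (or the market) a constant regret. On ordinary rounds, $p_1$'s matched contexts are orthogonal to $v$ and carry no information. The KL divergence between the two hypotheses is then at most $O(\epsilon^2 n)$, where $n$ is the expected number of informative (costly) matches.

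\textbf{Two-regime bound.} By Bretagnolle--Huber, if $n\lesssim 1/\epsilon^2$ then with constant probability the policy mis-ranks on a constant fraction of critical rounds, giving regret $\Omega(\epsilon T)$ to $p_2$. Otherwise exploration costs $\Omega(1/\epsilon^2)$ to $p_1$ or some other player. Hence
\[
\max_i Reg_i(T)\ \ge\ \min\bigl(c_1\epsilon T,\ c_2/\epsilon^2\bigr),
\]
and choosing $\epsilon\asymp T^{-1/3}$ gives $f(c)T^{2/3}$. The constant $c$ enters through the critical-event probability $\Theta(c\epsilon)$.

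\textbf{Main obstacle.} The delicate step is the instance design that decouples information from exploitation. We must ensure the informative observations are intrinsically costly, while simultaneously satisfying the linear-CDF condition for all three players, since $\delta_{\min}$ is a minimum over all players and all arm pairs, and keeping norms bounded. We would first try 2-dimensional contexts for $p_1$, with one coordinate carrying the signal on a designated informative arm $a_3$. Matching $p_1$ to $a_3$ is far below the stable share, so it costs a constant. Other players' parameters are chosen with large separations, so they do not tighten $\delta_{\min}$.
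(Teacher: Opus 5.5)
Your skeleton (a two-point Le Cam argument with $\epsilon\asymp T^{-1/3}$, trading $\Omega(1/\epsilon^2)$ costly exploration against $\Omega(\epsilon T)$ misallocation on critical rounds) is the same as the paper's. The gap is in the step ``the policy mis-ranks on a constant fraction of critical rounds, giving regret $\Omega(\epsilon T)$ to $p_2$'', which fails for the construction you describe.

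Le Cam needs the wrong action to cost $\Omega(1)$ per critical round under \emph{each} hypothesis, but your chain punishes only one direction: handing $a_1$ to $p_1$ when $p_1$ truly prefers $a_2$ displaces $p_2$. Under the other hypothesis, withholding $a_1$ from $p_1$ on a critical round costs $p_1$ only $O(\epsilon)$. It also makes $p_2$ strictly better off, since $p_2$ gets $a_1$ instead of $a_3$. So unless the third player is specifically engineered to lose a constant, a policy that never explores and always plays the first hypothesis' matching pays only $O(\epsilon^2 T)=O(T^{1/3})$.

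The paper supplies exactly this missing piece. In $\boldsymbol{\nu'}$, the reconfigured stable matching on $\{U>1/(1+\tau)\}$ gives $a_2$ to $p_3$, whose utility for $a_2$, namely $f(U)$, jumps to $1$ precisely on that event. Hence if $p_1$ is not on $a_1$ there, either $p_1$ sits on $a_3$ (loss at least $1$), or $p_1$ occupies $a_2$ and $p_3$ loses at least $1-\psi$.

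Second, regret is measured against the player-optimal stable share, not the best arm, so instantaneous regret can be negative and critical-round losses can be repaid on ordinary rounds. The player absorbing the loss in that second direction need not have its best arm as benchmark on ordinary rounds; in the paper, $p_3$'s stable share there is $0$. A policy can therefore hand it $a_1$ or $a_2$ on ordinary rounds and cancel its $\Omega(\epsilon T)$ deficit. Your bound $\min(c_1\epsilon T,\,c_2/\epsilon^2)$ does not address this.

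The paper closes it in two ways. It arranges that the players charged with exploration cost ($p_2$ in $\boldsymbol{\nu}$) and with parking $p_1$ on $a_3$ ($p_1$ in $\boldsymbol{\nu'}$) always have their top arm as benchmark. For $p_3$, it shows that in $\boldsymbol{\nu'}$ the player-optimal stable matching is welfare-maximizing. Per-round social regret is then nonnegative, while each compensating round costs at least $1-\varphi-\psi$ in total welfare, forcing some player to $\Omega(T^{2/3})$. Your exploration-cost claim similarly needs $p_1$ ranked first by every arm, so that its benchmark is always its favorite.

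A smaller omission: the sign flip forces $v^\top(\rx_1-\rx_2)\neq 0$ on critical rounds, so those rounds are informative as well. Their KL contribution is $O(c\,\epsilon^3 T)=O(c)$ at $\epsilon=T^{-1/3}$, which is harmless but must be counted. The paper avoids a separate informative arm altogether: $(p_1,a_1)$ is the only pair whose reward law differs between the instances. Under $\boldsymbol{\nu}$, every such sample costs $p_2$ at least $1-\psi$, because $a_1$ is both $p_2$'s top arm and its stable match.
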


The proof idea is to construct two instances with carefully designed preference parameters and context distributions, such that their resulting utility matrices differ in only a single entry. Insufficient sampling of this critical entry leads to high regret for one player in the first instance, while sampling it too frequently incurs high \emph{social regret} (the sum of regrets across all players) in the second instance. This inherent trade-off yields the stated regret lower bound. 
The details can be found in Appendix~\ref{sec:proof_asymptotic_lower_bound}.

%% file: adversarial.tex
In contrast to the stochastic setting, we now analyze worst-case scenarios in which no assumptions are placed on context generation, i.e., contexts may even be chosen by an adaptive adversary. This adversarial model offers robustness guarantees against non-stationary environments and strategic manipulation, ensuring algorithm performance even when stochastic assumptions fail to hold.

In the stochastic setting (Section~\ref{sec:stochastic_context}), the regret bound scales as $1 / \Delta_{\min}^2$, implying the problem becomes harder as $\Delta_{\min}$ shrinks. However, $\Delta_{\min}$ is defined with respect to a stationary context distribution, which no longer exists in the adversarial case. Here, an adversary can select arbitrary contexts, potentially forcing the minimum utility difference $\delta_{\min}$ arbitrarily small for arbitrarily many rounds. This makes the problem intractable under the usual regret measure. To restore tractability, we first introduce a refined, tractable regret definition suitable for adversarial contexts. We then develop an adaptive algorithm for this setting and establish a corresponding regret upper bound.

\subsection{Tractable Regret with Adversarial Contexts}\label{subsec:adversarial_regret_def}
The inverse scaling of regret with the minimum preference gap represents a fundamental limit in bandit learning for matching markets, an unavoidable phenomenon demonstrated in Example 2 in \citet{liu2020competing}. Consequently, when the minimum gap is small or the preference profile admits ties, standard regret guarantees become vacuous. The underlying issue is that a market with ties lacks a single, universally utility-maximizing stable matching, creating a dilemma where uncertainty in one player-arm pair can propagate regret to other players. To address this challenge, \citet{lin2024stable} introduced an approximation regret framework in stochastic multi-armed bandits for static markets. 
In our setting -- contextual bandits for dynamic markets with adversarial contexts -- we may encounter infinitely many rounds with arbitrarily small preference gaps, making meaningful regret guarantees impossible under the standard stable regret definition.
Therefore, we adapt this idea and propose a regret definition that smoothly interpolates between rounds with large and small utility differences.

Before presenting the refined regret definition, we first introduce some notation about $\varepsilon$-stable matching and approximation oracle.

\begin{defn}[$\varepsilon$-Stability]\label{def:context_eps_stable}
	Given $\varepsilon \geq 0$, a matching $\mu_t$ is $\varepsilon$-stable at time $t$ if there is no \emph{$\varepsilon$-blocking pair} $(p_i, a_j)$ such that $p_i \succ_{a_j} \mu_t(a_j)$ and $\mU_{i, j}(t) > \mU_{i, \mu_t(p_i)}(t) + \varepsilon$. 
\end{defn}

The notion of $\varepsilon$-stability is a relaxation of stability, where setting $\varepsilon = 0$ makes it equivalent to stability (Definition~\ref{def:context_stability}). In general, $\varepsilon$-stable matching is not unique.
We define 
\vspace{-.1cm}
\begin{sizeddisplay}{\small}
\begin{align*}
	\gS^{\varepsilon}_t := \left\{\mu: \mu \text{ is an $\varepsilon$-stable matching w.r.t. $\mU(t)$ and $(P_a)_{a \in \gK}$}\right\},
\end{align*}
\vspace{-.1cm}
\end{sizeddisplay}
and we call $a_j$ a \emph{valid $\varepsilon$-stable match} for player $p_i$ at time $t$ if there exists an $\varepsilon$-stable matching matches $p_i$ with $a_j$, and it is the \emph{optimal $\varepsilon$-stable match} of player $p_i$ at time $t$ if it is the most preferred valid $\varepsilon$-stable match, i.e., there exists a matching $\mu_t^{\varepsilon} \in \gS_t^{\varepsilon}$ such that $\mu_t^{\varepsilon}(i) = a_j$ and $\mU_{i, \mu_t^{\varepsilon}(i)}(t) = \max_{\mu \in \gS_t^{\varepsilon}} \mU_{i, \mu(i)}(t)$. We say $\mU_{i, \mu_t^{\varepsilon}(i)}(t)$ is the \emph{$\varepsilon$-optimal stable share} for player $p_i$ at time $t$, denoted as $\mU^{\varepsilon}_i(t)$.\looseness=-1

Since no single $\varepsilon$-stable matching maximizes all players' utilities simultaneously, \citet{lin2024stable} proposed a utility approximation under bounded instability, ensuring each player achieves a certain satisfaction level in the presence of (statistical) ties in a static market. We adapt this to the contextual setting: when the minimum utility difference $\delta_{\min}$ is large -- making preference rankings and the unique player-optimal stable matching easy to identify -- we use the optimal stable share $\mU_i^*(t)$ as the benchmark. Conversely, when $\delta_{\min}$ is small, we adopt an $\alpha$-approximation of $\mU^{\varepsilon}_i(t)$, where $\alpha \in (0, 1]$ is an approximation ratio. Formally, 
given predetermined parameters $\Delta$ and $\varepsilon$ with $\varepsilon < \Delta$ -- typically set as $\varepsilon = \Delta / 2$ for simplicity -- we define the player's \emph{$\alpha$-approximate $\Delta$-optimal stable regret} as follows:
\begin{sizeddisplay}{\small}
\begin{align}\label{eq:context_refine_regret}
    Reg_i^{\alpha, \Delta}(T)
        =& \sum_{t = 1}^T \Biggl[\mU_i^*(t) \ind{\delta_{\min}(t) > \Delta} \\
        &+ \alpha \mU^{\varepsilon}_{i}(t) \ind{\delta_{\min}(t) \leq \Delta}\Biggr]
        - \mathbb{E} \left[\sum_{t = 1}^T y_i(t)\right].\nonumber
\end{align}
\end{sizeddisplay}
We assume access to an offline approximation oracle that, when given players' utility matrix $\mU$, the uncertainty set with radius $\gamma$, this oracle returns a matching whose expected utility is guaranteed to be at least an $\alpha$-fraction of $\mU^{\varepsilon}_{i}$, with an additive error bounded by $2 \gamma + \varepsilon$. For instance, \citet{lin2024stable} constructed an oracle with an approximation ratio of $1 / \floor{\log_2 N + 2}$. For more details, please refer to Appendix~\ref{sec:adeco_alg}. In our online learning algorithm, we call this offline oracle with estimated utilities and constructed confidence intervals to approach the benchmark utility when $\delta_{\min}$ is small. 

\subsection{Adaptive Explore-Choose Oracle Algorithm}\label{subsec:adeco_alg}
We propose the Adaptive Explore-Choose Oracle Algorithm (AdECO) for contextual bandits in matching markets with adversarial contexts. AdECO's exploration scheme follows the same adaptive principle as BARB:
When $\norm{\rx_j(t)}_{\left(\mV_i^{(t)}\right)^{-1}}$ is greater than the threshold $\xi$, we perform exploration through maximum cardinality matching on a bipartite graph $\gG_t$, and update the estimation with the contextual information and the collected rewards (Line~\ref{algline:adversarial_explore} - \ref{algline:adversarial_param_update}). Otherwise, the algorithm exploits by choosing between executing the Gale-Shapley algorithm (Line~\ref{algline:adversarial_gs}) or calling the $\alpha$-approximation oracle (Line~\ref{algline:adversarial_approx_oracle}), based on whether the confidence intervals for the utilities are sufficiently separated. \looseness=-1

\begin{algorithm}[t]
    \caption{Adaptive Explore-Choose Oracle (AdECO), abstract version}\label{alg:adeco_abstract}
    \begin{algorithmic}[1]
        \Require Time horizon $T$, preference profile $(P_a)_{a \in \gK}$, context dimension $d$, parameter $\Delta$, $\varepsilon$, and $\eta$, ridge penalty parameter $\lambda$.
        \State $\xi \leftarrow \frac{\Delta - \varepsilon}{4\eta}$; $\mV_i^{(1)} \leftarrow \lambda \mI_{d \times d}, \hat{\theta}_i^{(1)} \leftarrow \bm{0}_d, \forall i \in [N]$.
        \For{$t = 1, 2, \cdots, T$}
            \State Observe context $\rx_j(t)$ for arm $a_j$, $\forall j \in [K]$.
            \If{$\exists i, j$, s.t. $\norm{\rx_j(t)}_{\left(\mV_i^{(t)}\right)^{-1}} > \xi$}\label{algline:adversarial_explore}
                \State \multiline{Form a bipartite graph $G_t$ such that $\forall (i, j) \in G_t$ has $\norm{\rx_j(t)}_{\left(\mV_i^{(t)}\right)^{-1}} > \xi$.}
                \State \multiline{$\mu_t \leftarrow$ \textbf{Maximum Cardinality Matching} on $G_t$.}
                \State Update $\mV_i^{(t + 1)}$ and $\hat{\theta}^{(t + 1)}$ according to Eq.(\ref{eq:adversarial_update}).\label{algline:adversarial_param_update}
            \Else
                \State $\hat{\mU}_{i, j}(t) \leftarrow \left(\hat{\theta}_i^{(t)}\right)^{\top} \rx_j(t)$.
                \If{Distance between any two CIs $\geq \varepsilon$}
                    \State $\mu_t \leftarrow$ \textbf{Deferred Acceptance} with $\hat{\mU}(t)$.\label{algline:adversarial_gs}
                \Else
                    \State \multiline{$\mu_t \leftarrow$ \textbf{$\alpha$-Approximation Oracle} with $\hat{\mU}(t)$ and instability tolerance $\frac{\Delta + \varepsilon}{2}$.}\label{algline:adversarial_approx_oracle}
                \EndIf
            \EndIf
        \EndFor
    \end{algorithmic}
\end{algorithm}

The setting of the parameter $\eta$ follows the same idea as that employed in the stochastic contexts case. Take $\gG^{(i)}$ as the set of rounds in which player $p_i$ explores, i.e., $p_i$ is involved in the maximum cardinality matching in those rounds $s \in \gG^{(i)}$. In time $t$ of the exploration phase, we use the following update rule for the Gram matrix $\mV_i^{(t)}$ and the preference parameter $\hat{\theta}_i^{(t)}$:
\begin{sizeddisplay}{\small}
\begin{align}\label{eq:adversarial_update}
	\begin{split}
		\mV_i^{(t)} &= \lambda \mI + \sum_{s < t, s \in \gG^{(i)}} x_{i_s} x_{i_s}^{\top}, \\
		\hat{\theta}_i^{(t)} &= \left(\mV_i^{(t)}\right)^{-1} \sum_{s < t, s \in \gG^{(i)}} x_{i_s} y_{i, i_s}.
	\end{split}
\end{align}
\end{sizeddisplay}

\vspace{-.3cm}
\subsection{Theoretical Analysis}\label{subsec:theory_adversarial_context}
We now present the regret upper bound for each player by following the AdECO algorithm.
\begin{thm}[Upper Bound for Adversarial Contexts]\label{thm:adversarial_context_upper_bound}
	Following the AdECO algorithm with parameter $\Delta$ and $\varepsilon$, the $\alpha$-approximate $\Delta$-optimal stable regret for $p_i \in \gN$ is
    \begin{sizeddisplay}{\small}
        \begin{align}\label{eq:adversarial_context_upper_bound}
		Reg_i^{\alpha, \Delta}(T) = \gO \left(\frac{Nd \log^2 T}{(\Delta - \varepsilon)^2} + \frac{\Delta + \varepsilon}{2} \cdot T\right).
	\end{align}
    \end{sizeddisplay}
    $\Delta$ is an algorithmic input, and the above regret bound holds uniformly for any choice of $\Delta$.
	By setting $\Delta = \gO(T^{-1/3})$, $\varepsilon = \Delta / 2$, we have the regret upper bound as $\gO(T^{2/3})$.
\end{thm}
We provide the proof sketch for Theorem~\ref{thm:adversarial_context_upper_bound} as follows, while the detailed proof is deferred to Appendix~\ref{sec:proof_adversarial_upper_bound}.

\begin{proof}[Proof Sketch]
	Similar to the proof in Theorem~\ref{thm:stochastic_context_upper_bound}, in the exploration phase, by elliptical potential lemma~\citep{carpentier2020elliptical}, we have that $\abs{\gG} \leq \frac{Nd \log \left(\frac{T + d\lambda}{d\lambda}\right)}{\xi^2} = \frac{16 \eta^2 N d \log \left(\frac{T + d}{d}\right)}{(\Delta - \varepsilon)^2} = \gO \left(\frac{N d \log^2 T}{(\Delta - \varepsilon)^2}\right)$.
	
	In every exploitation round, if we call the Gale-Shapley algorithm, we incur 0 or negative regret since it returns the unique player-optimal stable matching when we have correct estimation of the preference ranking, while if we call the $\alpha$-approximation oracle with $\hat{\mU}(t)$ and instability tolerance $\frac{\Delta + \varepsilon}{2}$, we are in the scenario where $\delta_{\min}$ is small and we incur $\frac{\Delta + \varepsilon}{2}$ approximation regret.
	
	Therefore, the final regret is $\gO \left(\frac{Nd \log^2 T}{(\Delta - \varepsilon)^2} + \frac{\Delta + \varepsilon}{2} \cdot T\right)$.
\end{proof}

\begin{rema}\label{rema:adversarial_regret_independent_alpha}
    The $\alpha$-independent regret in Theorem~\ref{thm:adversarial_context_upper_bound} consists of exploration and exploitation terms. In the exploration phase with small $\delta_{\min}$, the benchmark is $\alpha \mU_i^{\varepsilon}(t)$, making the regret scales linearly with $\alpha$. Since $\alpha \leq 1$, we adopt the looser upper bound to keep exploration regret independent of $\alpha$. In the exploitation phase, the approximation oracle's guarantee (Definition~\ref{def:approx_oracle}) bounds the per-round loss in small-gap regimes by $(\Delta + \varepsilon)/2$, which is likewise independent of $\alpha$.
\end{rema}

\begin{rema}\label{rema:stochastic_adversarial_combine}
	Inspired by the adversarial context results, we could call back the stochastic context case, if the distribution of $\delta_{\min}$ satisfies Assumption~\ref{assume:cdf_linear}, then by the asymptotic regret upper bound (Theorem~\ref{thm:asymptotic_upper_bound}), we know that by setting a maximum exploration time of $\gO(T^{\frac{2}{3} + \zeta})$ for some small $\zeta$, it suffices to finish exploration. And if the exploration could not stop until $\gO(T^{\frac{2}{3} + \zeta})$ rounds, it is probably that the distribution of $\delta_{\min}$ is of poor condition, i.e., the probability of having small enough $\delta_{\min}$ is rather high. In this case, we could switch between the GS oracle and the approximation oracle as we propose in the adversarial context case, and gain a meaningful regret guarantee with respect to the $\alpha$-approximate $\Delta$-optimal stable regret.
\end{rema}

%% file: experiment.tex
In this section, we empirically validate the convergence of BARB (stochastic contexts) and AdECO (adversarial contexts). For the stochastic setting, we also benchmark performance against Batched-ETC (Appendix~\ref{sec:betc_alg}) and ETC~\citep{li2022dynamic}.

Each experiment runs for $T = 200k$ rounds, and results are averaged over 20 independent trials. Besides the averaged cumulative regret, we also plot the standard error, defined as the standard deviation divided by $\sqrt{20}$.

Due to space constraints, we present results for the stochastic setting with a large minimum preference gap $\Delta_{\min}$ and a small minimum eigenvalue of the context covariance matrix; additional comparisons are provided in Appendix~\ref{sec:add_experiment}.

We evaluate player-optimal stable regret in a market with $N = 4$ players and $K = 4$ arms. Contexts have dimension $d = 3$. Each arm's preference over players is a random permutation of $\left\{1, \cdots, N\right\}$. For preference and context generation, we first sample two orthonormal vectors $v, z \in \sR^d$. We set $\theta_i = v$ for all $i \in [N]$, ensuring $\norm{\theta_i} = 1$. Next, we define equally spaced values $\gamma_j = \frac{j - 1}{K - 1}$ for $j \in [K]$ and set $\tilde{\rx}_j = \gamma_j v + \sqrt{1 - \gamma_j^2} z$. At each time $t$, the $k$-th entry of the context vector $\rx_j(t)$ is independently drawn from a Gaussian distribution with mean $\tilde{\rx}_{j, k}$ and variance 0.01. The ground-truth utility for player $p_i$ and arm $a_j$ at time $t$ is $\mU_{i, j}(t) = \theta_i^{\top} \rx_j(t)$. When a match occurs, a reward is sampled from the Gaussian distribution with mean $\mU_{i, j}(t)$ and variance 0.05. Then we would truncate it so that the absolute value of the observed reward is bounded by $B_y = 1$.

\begin{figure}[t]
    \centering
    \includegraphics[width=.9\linewidth]{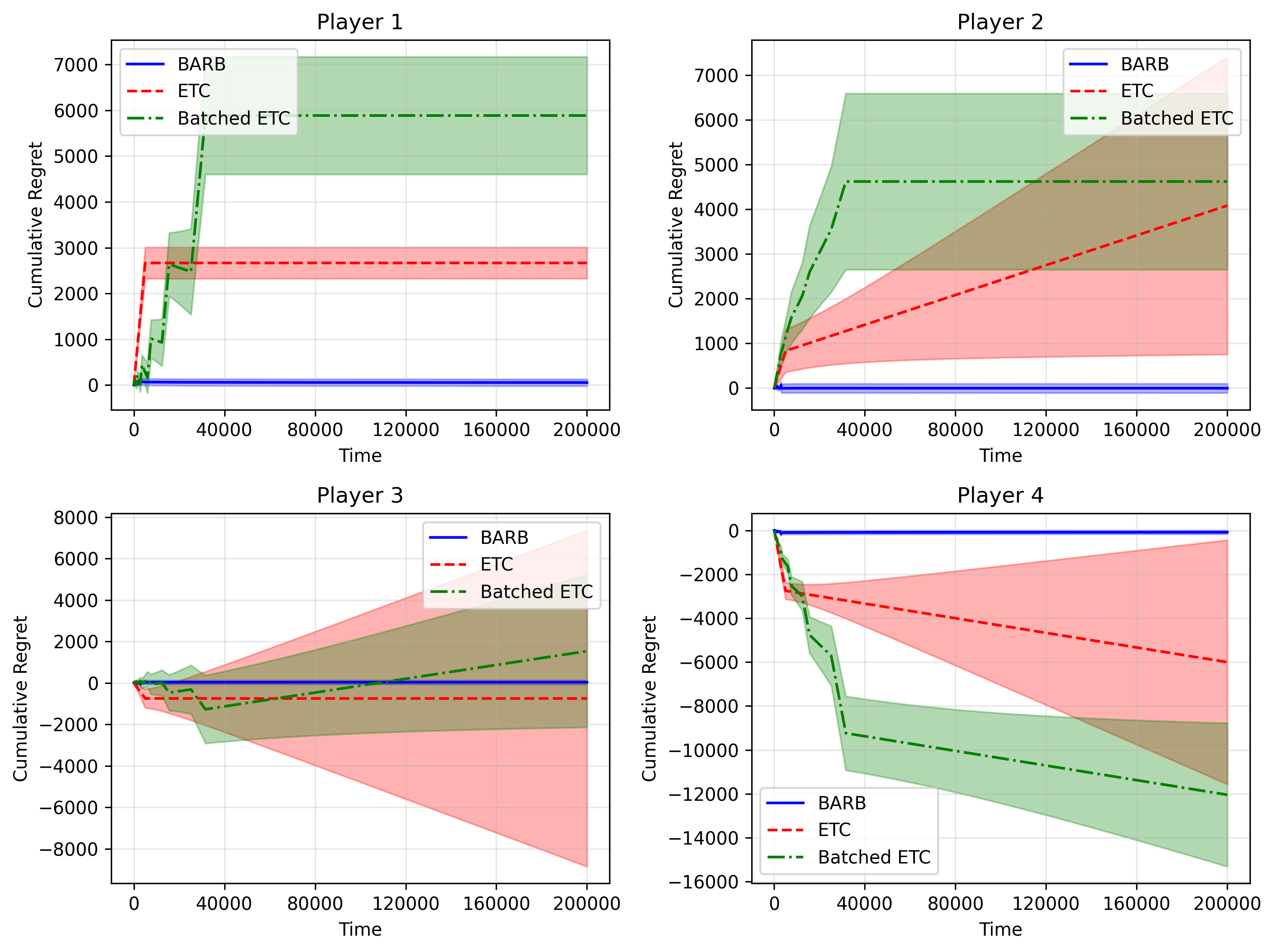}
    \caption{Experimental Comparison of BARB, Batched-ETC, and ETC in a market with 4 players and 4 arms.}
    \label{fig:orthonormal_context}
\end{figure}

The ridge penalty parameter is set to $\lambda = 1$ for all algorithms. For ETC, the exploration length is $h = 5000$; for Batched-ETC, the initial exploration length is $T_1 = 100$; and for BARB, the initial candidate gap is $\Delta_1 = 0.5$.

We present the player-optimal stable regret for each player in Figure~\ref{fig:orthonormal_context}. The results demonstrate that BARB outperforms both ETC and Batched-ETC in settings where the minimum eigenvalue $\lambda_{\min}$ of the context covariance matrix is small. In Appendix~\ref{sec:add_experiment}, we provide a complementary example where $\lambda_{\min}$ is large, using the same algorithmic parameters. There, BARB achieves performance comparable to ETC. Since the covariance structure of contexts is typically unknown in practice, the fully adaptive design of BARB and the fact that its regret upper bound does not depend on the covariance structure of the context distribution offer more robust performance across varying problem instances than the parameter-dependent ETC.

%% file: extension.tex
Algorithm~\ref{alg:barb_abstract} and \ref{alg:adeco_abstract} operate in centralized contextual matching markets, where a platform coordinates all players. In practice, however, decentralized markets where each player acts independently are more relevant. To accommodate decentralization within our existing framework, we introduce the following assumption for environment identification.

\begin{assume}\label{assume:decentralize}
 Let $\gE$ denote the set of environments. For any two distinct environments $e, e' \in \gE$, the top-$N$ ranked arms for each player $p_i$ are distinct; that is, the ranking vectors $\nu_i^e$ and $v\nu_i^{e'}$ differ in at least one of the first $N$ entries. \looseness=-1
\end{assume}

Here, an environment is defined by its preference ranking:  two contexts with the same ranking belong to the same environment. This assumption enables environment identification using only individual players' preference rankings, without direct communication. While it also appears in~\citet{parikh2024competing}, our requirement is weaker: they require each player to know the exact number of possible environments, whereas we allow it to be unknown. Furthermore, \citet{parikh2024competing} shows that this assumption is necessary by providing an example where sublinear regret is impossible without it.

Additionally, each player $p_i$ maintains two market-sharable flags: $F_i$ indicates whether the Mahalanobis norm $\norm{\rx_j}_{\mV_i^{-1}}$ for any arm $a_j$ falls below a given threshold, and $F_i'$ indicates whether the confidence intervals are disjoint in the current exploitation round.

For simplicity, we focus on the stochastic contexts setting. In each round, when multiple players propose to the same arm, only the highest-ranked player in that arm's preference profile receives a reward; the others receive zero.

The algorithm is presented from player $p_i$'s perspective.
The candidate gap $\Delta_k$ is public, so $\xi_k$ can be inferred locally. At round $t_k$, after observing each arm $a_j$'s context, $p_i$ computes $\norm{\rx_j(t_k)}_{\mV_i^{-1}}$ and compares it to $\xi_k$. If any $\rx_j$ yields a norm exceeding the threshold, $F_i$ is set to True.

If any player has $F_i = \text{True}$, the system explores. In that case, $p_i$ proposes to the arm with the largest $\norm{\rx_j(t_k)}_{\mV_i^{-1}}$ among those exceeding $\xi_k$. If none exists, $p_i$ proposes nothing that round. Updates for $\mV_i$ and $\hat{\theta}_i$ follow Eq. (\ref{eq:context_update}).

If all players have $F_i = \text{False}$, the system exploits. Player $p_i$ computes the estimated utilities and confidence intervals for every arm. If all CIs are disjoint, set $F_i' = \text{False}$; otherwise, set $F_i' = \text{True}$.
If any player has $F_i' = \text{True}$, the counter $N_{i,k} \leftarrow N_{i,k} + 1$. This counter is maintained locally but is identical across players. The player proposes to an arbitrary arm. Once $N_{i,k}$ exceeds the BARB threshold, the algorithm moves to the next batch, shrinking $\Delta_k$ by a factor of $\sqrt{2}$.
If instead all players have $F_i' = \text{False}$, the player identifies the environment from the current estimated utilities and proposes to the highest-ranked arm that has not yet rejected $p_i$ in that environment.

With the above implementation, the exploration regret matches that of BARB, while the exploitation regret incurs an additional $N^2 E$ term, where $E$ is the number of environments occuring over the horizon. The algorithm does not require knowledge of $E$, and this quantity is used only for theoretical analysis. Since this additional term is independent of $T$, the overall regret order remains unchanged.

%% file: conclusion.tex
This paper studies contextual bandits in matching markets under a linear utility model, with the objective of minimizing individual \emph{player-optimal stable regret}. The problem is complex: minor context shifts can drastically reconfigure the stable matching benchmark, causing disproportionate regret. We address this for both \emph{stochastic and adversarial contexts}, introducing a novel \emph{minimum preference gap} measure for the former and a tractable regret notion for the latter. We develop fully adaptive algorithms with theoretical guarantees for both settings. For the stochastic case, we also provide matched instance-independent regret bounds. While the proposed framework assumes linear contextual utilities, extending it to accommodate markets with more complex structures would be valuable. \looseness=-1

%% file: acknowledgement.tex
This research was supported by Israel Science Foundation research grant (ISF’s No. 4118/25) and the Maimonides Fund’s Future Scientists Center. 
Vianney Perchet's research was supported in part by the French National Research Agency (ANR) in the framework of the PEPR IA FOUNDRY project (ANR-23-PEIA-0003) and through the grant DOOM ANR-23-CE23-0002. It was also funded by the European Union (ERC, Ocean, 101071601). Views and opinions expressed are however those of the author(s) only and do not necessarily reflect those of the European Union or the European Research Council Executive Agency. Neither the European Union nor the granting authority can be held responsible for them.

%% file: barb_alg.tex
The full version of Batched Adaptive Regret-Balancing (BARB) algorithm is presented in Algorithm~\ref{alg:barb_full}.

\begin{algorithm}[!htp]
	\caption{Batched Adaptive Regret-Balancing (BARB), full version}\label{alg:barb_full}
	\begin{algorithmic}[1]
		\Require Time horizon $T$, preference profile $(P_a)_{a \in \gK}$, context dimension $d$, parameter $\eta$, initial candidate gap $\Delta_1$, ridge penalty parameter $\lambda$.
        \State $t \leftarrow 1$.
		\For{$k = 1, 2, \cdots$}
			\State $\xi_k \leftarrow \frac{\Delta_k}{\eta}$.
			\State $\mV_i^{(1)} \leftarrow \lambda \mI_{d\times d}$, $\hat{\theta}_i^{(1)} \leftarrow \bm{0}_d$, $\forall i \in [N]$.
			\State $N_k \leftarrow 0$. \Comment{Number of rounds with overlapping CIs during exploitation.}
			\For{$t_k = 1, 2, \cdots$}
				\State Observe context $\rx_j(t_k)$ for every arm $a_j$, $\forall j \in [K]$.
				\If{$\exists i \in [N], j \in [K]$, s.t. $\norm{\rx_j(t_k)}_{(\mV_i^{(t_k)})^{-1}} > \xi_k$} \Comment{Exploration}
					\State Collect all $(i, j)$ such that $\norm{\rx_j(t_k)}_{(\mV_i^{(t_k)})^{-1}} > \xi_k$, form a bipartite graph $G_{t_k}$.
					\State Find a \textbf{maximum cardinality matching} $\mu^c$ on $G_{t_k}$, let $M^c_{t_k} := \left\{i \in [N]: i \text{ is matched in } \mu^c\right\}$ match
					\begin{align*}
						\mu_{t_k}(i) \leftarrow \begin{cases}
							\mu^c(i) \quad & \text{if } i \in M^c_{t_k}, \\
							\emptyset \quad & \text{otherwise}.
						\end{cases}
					\end{align*}
					\State Update $\mV_i^{(t_k + 1)}$ and $\hat{\theta}_i^{(t_k + 1)}$, $\forall i \in M^c_{t_k}$ according to Eq.(\ref{eq:context_update}).
					\State $\mV_i^{(t_k)} \leftarrow \mV_i^{(t_k - 1)}$ and $\hat{\theta}_i^{(t_k)} \leftarrow \hat{\theta}_i^{(t_k - 1)}$, $\forall i \in [N] \setminus M^c_{t_k}$.
				\Else \Comment{Exploitation}
					\State $\hat{\theta}_i^{(t_k)} \leftarrow \hat{\theta}_i^{(t_k - 1)}$, $\forall i \in [N]$.
					\State $\hat{\mU}_{i, j}(t_k) = (\hat{\theta}_i^{(t_k)})^{\top} \rx_j(t_k)$, $F_i \leftarrow \text{False}$.
                    \State $\mu_{t_k}(i) \leftarrow \mu^{GS}(i)$, $\forall i \in [N]$. (Input $\hat{\mU}(t_k)$ and $(P_a)_{a \in \gK}$ for \textbf{Gale-Shapley})
					\For{$i = 1, 2, \cdots, N$}
				\State{$\hat{\mU}^{\text{sort}}_{i, \cdot}(t_k) \leftarrow \text{Sort} \left(\hat{\mU}_{i, \cdot}(t_k), \text{decreasing}\right)$}
						\State $\Delta_{i, \min} \leftarrow \min \left\{\hat{\mU}^{\text{sort}}_{i, j}(t_k) - \hat{\mU}^{\text{sort}}_{i, j + 1}(t_k)\right\}, j \in [N]$
						\If{$\Delta_{i, \min} > 2 \Delta_k$} 
						\State $F_i \leftarrow \text{True}$
						\EndIf
					\EndFor
					\If{$\exists i \in [N]$ s.t. $F_i == \text{False}$} \Comment{CIs are overlapped}
						\State $N_k \leftarrow N_k + 1$.
					\EndIf
				\EndIf
                \State $t \leftarrow t + 1$.
                \If{$t > T$}
                    \State Stop and return.
                \EndIf
				\If{$N_k > \frac{3 \log T}{16 \Delta_k^2}$} \Comment{Exploitation regret exceeds exploration regret}
					\State Break the loop and enter the next batch, set $\Delta_{k + 1} \leftarrow \frac{\Delta_k}{\sqrt{2}}$, 
				\EndIf
			\EndFor
		\EndFor
	\end{algorithmic}
\end{algorithm}

%% file: proof_stochastic_upper_bound.tex
\begin{proof}
    Denote $k^*$ as the largest batch number of Algorithm~\ref{alg:barb_abstract}. That is to say, the counter $N_k$ would not meet the pre-defined threshold in batch $k^*$. 
    In batch $k$, given $\Delta_k$, for the exploration phase, denote the estimation indices for player $p_i$ as $\gG_k^{(i)} = \left\{t_{k_1}^{(i)}, \cdots, t_{k_{\abs{\gG_k^{(i)}}}}^{(i)}\right\}$. Notice that for $t_k \in \gG_k^{(i)}$, we have $\norm{\rx_j(t_k)}_{\left(\mV_i^{(t_k)}\right)^{-1}} > \xi_k = \frac{\Delta_k}{\eta}$ for $j = \mu_{t_k}(i)$, and we only use the contexts and matching reward from $\gG_k^{(i)}$ to update the Gram matrix $\mV_i$ and the estimation of the preference parameter $\hat{\theta}_i$. 
    Take $\gG_k$ as the set of rounds in which the market explores, we have $\gG_k = \cup_{i \in [N]} \gG_k^{(i)}$.
    Define $\gF_k = \left\{\exists t_k \notin \gG_k, i \in [N], j \in [K] : \abs{\hat{\mU}_{i, j}(t) - \mU_{i, j}(t)} > \Delta_k\right\}$ as the bad event that some preference of players over arms is not estimated well during the exploitation phase. 
    When $k > k^*$, we define $\sP\left(\gF_k\right) = 0$.
    For the exploitation phase, using $\Delta_k$ as the width for the confidence interval constructed for the estimated utility, let $\gF_d^{(t)} := \left\{\hat{\mU}_{i, j}^{\text{sort}}(t) - \hat{\mU}_{i, j + 1}^{\text{sort}}(t) > 2 \Delta_k, \forall j \in [N], \forall i \in [N]\right\} \cap \left\{ \hat{\mU}_{i, N}^{\text{sort}}(t) - \hat{\mU}_{i, j}^{\text{sort}}(t) > 2 \Delta_k, \forall j \in [N + 1, K], \forall i \in [N]\right\}$ as the event that the first $(N + 1)$-ranked arms have disjoint confidence intervals for any player $p_i$ at time $t$.
    Denote $T'_k$ as the total length of the $k$-th batch. 
    Denote $\gF_{\Delta_{k^*}} := \left\{\Delta_{k^*} \geq \frac{\Delta_{\min}}{4 \sqrt{2}}\right\}$ as the event that the algorithm stops at an early batch.

    The optimal stable regret for each player $p_i$ by following BARB (Algorithm~\ref{alg:barb_abstract}) satisfies 
    \begin{align}
        Reg_i(T) =& \mathbb{E} \left[\sum_{t = 1}^T \left(\mU_i^*(t) - \mU_{i, \mu_t(i)}(t)\right)\right] \label{eq:expect_reward}\\
        =& \mathbb{E} \left[\sum_{t = 1}^T \left(\mU_i^*(t) - \mU_{i, \mu_t(i)}(t)\right) \ind{\forall k \in [T]: \urcorner \gF_k}\right] + \mathbb{E} \left[\sum_{t = 1}^T \left(\mU_i^*(t) - \mU_{i, \mu_t(i)}(t)\right) \ind{\exists k \in [T]: \gF_k}\right] \nonumber\\
        \leq& \mathbb{E} \left[\sum_{t = 1}^T \left(\mU_i^*(t) - \mU_{i, \mu_t(i)}(t)\right) \ind{\forall k \in [T]: \urcorner \gF_k}\right] + B_y T \sP \left(\exists k \in [T]: \gF_k\right) \label{eq:bound_y}\\
        \leq& \mathbb{E} \left[\sum_{t = 1}^T \left(\mU_i^*(t) - \mU_{i, \mu_t(i)}(t)\right) \ind{\forall k \in [T]: \urcorner \gF_k}\right] + B_y T \cdot \frac{N}{T} \label{eq:bad_event_prob}\\
        =& \mathbb{E} \left[\sum_{t = 1}^T \left(\mU_i^*(t) - \mU_{i, \mu_t(i)}(t)\right) \ind{\left\{\forall k \in [k^*]: \urcorner \gF_k\right\} \cap \gF_{\Delta_{k^*}}}\right] \nonumber\\
        &+ \mathbb{E} \left[\sum_{t = 1}^T \left(\mU_i^*(t) - \mU_{i, \mu_t(i)}(t)\right) \ind{\left\{\forall k \in [k^*]: \urcorner \gF_k\right\} \cap \urcorner\gF_{\Delta_{k^*}}}\right] + B_y N \nonumber\\
        \leq& \mathbb{E} \left[\sum_{t = 1}^T \left(\mU_i^*(t) - \mU_{i, \mu_t(i)}(t)\right) \ind{\left\{\forall k \in [k^*]: \urcorner \gF_k\right\} \cap \gF_{\Delta_{k^*}}}\right] \nonumber\\
        &+ B_y T \sP \left(\{\forall k \in [k^*]:\urcorner \gF_{k}\} \cap \urcorner\gF_{\Delta_{k^*}}\right) + B_y N \label{eq:bound_y2}\\
        \leq& \mathbb{E} \left[\sum_{k = 1}^{k^*} \left[ \sum_{t_k \in \gG_k} \left(\mU_i^*(t_k) - \mU_{i, \mu_t(i)}(t_k)\right) + \sum_{t_k \notin \gG_k} \left(\mU_i^*(t_k) - \mU_{i, \mu_t(i)}(t_k)\right)\right] \ind{\left\{\forall k \in [k^*]: \urcorner \gF_k\right\} \cap \gF_{\Delta_{k^*}}}\right] \nonumber \\
        &+ B_y T \cdot \frac{1}{T} + B_y N \label{eq:stay_exploit}\\
        \leq& \mathbb{E} \Biggl[\sum_{k = 1}^{k^*} \Biggl[B_y \abs{\gG_k} \ind{\left\{\forall k \in [k^*]: \urcorner \gF_k\right\} \cap \gF_{\Delta_{k^*}}}  \label{eq:bound_y3}\\
        &+ \sum_{t_k \notin \gG_k} \biggl[\left(\mU_i^*(t_k) - \mU_{i, \mu_t(i)}(t_k)\right) \ind{\left\{\forall k \in [k^*]: \urcorner \gF_k\right\} \cap \gF_{\Delta_{k^*}} \cap \gF_d^{(t_k)}} \nonumber\\
        &+ \left(\mU_i^*(t_k) - \mU_{i, \mu_t(i)}(t_k)\right) \ind{\left\{\forall k \in [k^*]: \urcorner \gF_k\right\} \cap \gF_{\Delta_{k^*}} \cap \urcorner \gF_d^{(t_k)}}\biggr]\Biggr]\Biggr] + B_y (1 + N) \nonumber\\
        \leq& \mathbb{E} \Biggl[\sum_{k = 1}^{k^*} \Biggl[B_y \abs{\gG_k} \ind{\left\{\forall k \in [k^*]: \urcorner \gF_k\right\} \cap \gF_{\Delta_{k^*}}} \nonumber \\
        &+ \sum_{t_k \notin \gG_k} \left(\mU_i^*(t_k) - \mU_{i, \mu_t(i)}(t_k)\right) \ind{\left\{\forall k \in [k^*]: \urcorner \gF_k\right\} \cap \gF_{\Delta_{k^*}} \cap \urcorner \gF_d^{(t_k)}}\Biggr]\Biggr] + B_y (1 + N) \label{eq:gs_give_oss}\\
        \leq& B_y \mathbb{E} \left[\sum_{k = 1}^{k^*} \left[\abs{\gG_k} 
        + N_k\right] \ind{\left\{\forall k \in [k^*]: \urcorner \gF_k\right\} \cap \gF_{\Delta_{k^*}}}\right] + B_y (1 + N) \label{eq:counter}\\
        \leq& B_y \mathbb{E} \left[\sum_{k = 1}^{k^*} \left(\frac{\eta^2 N d \log \left(\frac{T + d \lambda}{d \lambda}\right)}{\Delta_k^2} + \frac{3 \log T}{16 \Delta_k^2}\right) \ind{\left\{\forall k \in [k^*]: \urcorner \gF_k\right\} \cap \gF_{\Delta_{k^*}}}\right] + B_y (1 + N) \label{eq:explore_time_length}\\
        \leq& 2 B_y \mathbb{E} \left[\left(\frac{\eta^2 N d \log \left(\frac{T + d \lambda}{d \lambda}\right)}{\Delta_{k^*}^2} + \frac{3 \log T}{16 \Delta_{k^*}^2}\right) \ind{\left\{\forall k \in [k^*]: \urcorner \gF_k\right\} \cap \gF_{\Delta_{k^*}}}\right]  + B_y (1 + N) \label{eq:batch_dominate}\\
        \leq& 64 B_y \left(\frac{\eta^2 N d \log \left(\frac{T + d \lambda}{d \lambda}\right)}{\Delta_{\min}^2} + \frac{3 \log T}{\Delta_{\min}^2}\right) + B_y (1 + N) 
        = \gO \left(\frac{\log^2 T}{\Delta_{\min}^2} + \frac{\log T}{\Delta_{\min}^2}\right) \label{eq:stochastic_bound_order}
    \end{align}

    Eq.(\ref{eq:expect_reward}) holds according to the assumption that the noises are i.i.d. with mean 0.
    Eq.(\ref{eq:bound_y}), (\ref{eq:bound_y2}), and (\ref{eq:bound_y3}) come from the fact that the expected per-round regret is $\mathbb{E} \left[\mU^*_i(t) - y_i(t)\right] = \mU^*_i(t) - \mU_{i, \mu_t(i)}(t)$ and both terms lie in $\left[-B_{\theta}B_x, B_{\theta}B_x\right]$ under Assumptions~\ref{assume:bound_context} and \ref{assume:bound_preference}, and hence the expected regret per round is bounded by $2 B_{\theta}B_x = B_y$.
   Eq.(\ref{eq:bad_event_prob}) relies on Lemma~\ref{lem:bad_event_prob}.
   Eq.(\ref{eq:stay_exploit}) holds according to Lemma~\ref{lem:stay_exploit} and the fact that $\sP \left(\left\{\forall k \in [k^*]: \urcorner \gF_k\right\} \cap \urcorner \gF_{\Delta_{k^*}}\right) \leq \sP \left(\urcorner \gF_{\Delta_{k^*}}\right)$.
     Eq. (\ref{eq:gs_give_oss}) holds because, with Lemma~\ref{lem:top_n_gs}, under the good event $\urcorner \gF_k$ (ensuring all utilities are well-estimated) and the event $\gF_d^{(t_k)}$ (ensuring the top $(N + 1)$-ranked confidence intervals are disjoint at time $t_k$), the Gale-Shapley subroutine invoked by Algorithm~\ref{alg:barb_abstract} is guaranteed to output the OSS for that round.
     Eq.(\ref{eq:counter}) follows from the algorithm setting that $N_k$ counts the number of times during exploitation that the top $(N + 1)$-ranked CIs are overlapped.
     Eq.(\ref{eq:explore_time_length}) holds according to Lemma~\ref{lem:explore_length} and the algorithm setting that when $N_k > \frac{3 \log T}{16\Delta_k^2}$, we would move to the next batch.
     Eq.(\ref{eq:batch_dominate}) holds based on Lemma~\ref{lem:batch_dominate}.
     Eq.(\ref{eq:stochastic_bound_order}) follows from the parameter setting that $\eta = \gO \left(\sqrt{\log T}\right)$.
\end{proof}

\begin{lem}\label{lem:bad_event_prob}
    Let $\delta = T^{-2}$, $\eta = R \sqrt{d \log \left(\frac{1 + T B_x^2 / \lambda}{\delta}\right)} + \lambda^{1/2} B_\theta$, we have $\sP\left(\exists k \in [T]: \gF_k\right) \leq \frac{N}{T}$.
\end{lem}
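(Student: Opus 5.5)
The plan is to reduce the event $\gF_k$ to a failure of the self-normalized confidence ellipsoid of \citet{abbasi2011improved}, applied separately to each player, and then take a union bound over the $N$ players.

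\textbf{Step 1: the estimator within a batch.} Fix a batch $k$ and a player $p_i$. Within batch $k$, $\hat{\theta}_i^{(t_k)}$ is the ridge estimator built from the rounds in $\gG_k^{(i)}$ only. By Assumption~\ref{assume:bound_noise}, the regressors $\rx_{i_s}$ are predictable and the noise is conditionally $R$-subgaussian. Theorem 2 of \citet{abbasi2011improved} then gives, with probability at least $1-\delta$ and simultaneously for all $t_k$,
\begin{align*}
\norm{\hat{\theta}_i^{(t_k)} - \theta_i}_{\mV_i^{(t_k)}} \leq R\sqrt{2\log(1/\delta) + d\log\bigl(1 + |\gG_k^{(i)}| B_x^2/(d\lambda)\bigr)} + \lambda^{1/2}B_\theta .
\end{align*}
Since $|\gG_k^{(i)}|\le T$, this is at most $\eta$ as defined in the statement, after absorbing constants; the $\log(1/\delta)$ inside the $d\log(\cdot/\delta)$ form dominates $2\log(1/\delta)$ once $d\ge 2$, or up to a constant otherwise.

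\textbf{Step 2: from the ellipsoid to $\gF_k$.} Every exploitation round $t_k\notin\gG_k$ satisfies $\norm{\rx_j(t_k)}_{(\mV_i^{(t_k)})^{-1}}\le \xi_k=\Delta_k/\eta$ for all $i,j$. Cauchy--Schwarz then gives
\[
|\hat{\mU}_{i,j}-\mU_{i,j}| \le \norm{\hat{\theta}_i-\theta_i}_{\mV_i}\,\norm{\rx_j}_{\mV_i^{-1}} \le \eta\cdot\Delta_k/\eta = \Delta_k .
\]
So on the good ellipsoid event for all $i$, $\gF_k$ fails, and $\sP(\gF_k)\le N\delta$.

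\textbf{Step 3: union over batches.} The main subtlety is the union over batches $k\in[T]$. A naive union bound gives $TN\delta = N/T$ with $\delta=T^{-2}$, which is exactly the claimed bound. Since each batch restarts $\mV_i$ and the event $\gF_k$ is vacuous ($\sP=0$) for $k>k^*$, at most $T$ batches are nonempty. Applying the anytime bound to each restarted sequence, with the filtration shifted to the batch's random start time, is valid because batch start times are stopping times. Hence $\sP(\exists k:\gF_k)\le \sum_{k\le T}N T^{-2}\le N/T$.

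The step needing the most care is justifying the self-normalized bound under the random, data-dependent selection of rounds $\gG_k^{(i)}$ and random batch boundaries. I would handle this by noting that membership of round $s$ in $\gG_k^{(i)}$ is $F_{s-1}$-measurable. Setting the regressor to zero on non-selected rounds keeps the martingale structure intact, so the theorem applies verbatim.
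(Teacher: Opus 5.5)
Your proposal is correct and follows the paper's proof essentially verbatim. Both arguments take a union bound over the $T$ batches and $N$ players, combine Cauchy--Schwarz with the exploitation criterion $\norm{\rx_j(t_k)}_{(\mV_i^{(t_k)})^{-1}} \le \Delta_k/\eta$, and apply the confidence ellipsoid of Lemma~\ref{lem:confidence_ellip} with $\delta = T^{-2}$ to get $TN\delta = N/T$. Your extra remarks go beyond what the paper makes explicit, since it simply cites the lemma as stated: the selected rounds are predictable, batch starts are stopping times, and the $d\log(\cdot/\delta)$ form dominates $2\log(1/\delta)$ only for $d \ge 2$. These remarks are sound and do not change the argument.
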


\begin{proof}
    \begin{align}
        \sP\left(\exists k \in [T]: \gF_k\right) =& \sP \left(\exists k \in [T], \exists t_k \notin \gG_k, i \in [N], j \in [K]: \abs{\hat{\mU}_{i, j}(t_k) - \mU_{i, j}(t_k)} > \Delta_k\right) \nonumber \\
        \leq& \sum_{k = 1}^T \sum_{i = 1}^N \sP \left(\exists j \in [K], \exists t_k \notin \gG_k: \abs{\hat{\mU}_{i, j}(t_k) - \mU_{i, j}(t_k)} > \Delta_k\right) \label{eq:union_bound}\\
        \leq& \sum_{k = 1}^T \sum_{i = 1}^N \sP \left(\exists j \in [K], \exists t_k \notin \gG_k:\norm{\hat{\theta}_i - \theta_i}_{\mV_i^{(t_k)}} \norm{\rx_j(t)}_{\left(\mV_i^{(t_k)}\right)^{-1}} > \Delta_k\right) \label{eq:cauchy_schwarz}\\
        \leq& \sum_{k = 1}^T \sum_{i = 1}^N \sP \left(\exists t_k \notin \gG_k: \norm{\hat{\theta}_i - \theta_i}_{\mV_i^{(t_k)}} > \eta\right) \label{eq:exploit_context_set}\\
        \leq& TN\delta = \frac{N}{T}. \label{eq:conf_set}
    \end{align}

    Eq.(\ref{eq:union_bound}) follows from the union bound.
    Eq.(\ref{eq:cauchy_schwarz}) holds according to Cauchy-Schwarz inequality.
    Eq.(\ref{eq:exploit_context_set}) comes from the fact that for any $t_k \notin \gG_k$, we have $\norm{\rx_j(t_k)}_{\left(\mV_i^{(t_k)}\right)^{-1}} \leq \xi_k = \frac{\Delta_k}{\eta}$ in Algorithm~\ref{alg:barb_abstract}.
    Eq.(\ref{eq:conf_set}) holds according to Lemma~\ref{lem:confidence_ellip} (and notice it trivially holds if $k>k^*$). 
\end{proof}

\begin{lem}\label{lem:stay_exploit}
    It holds that
        $\sP \left(\left\{\forall k \in [k^*]: \urcorner \gF_k\right\} \cap \urcorner \gF_{\Delta_{k^*}}\right) \leq \frac{1}{T}.$
\end{lem}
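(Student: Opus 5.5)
The event in question says that every batch up to $k^*$ has good estimates, yet the final candidate gap satisfies $\Delta_{k^*} < \Delta_{\min}/(4\sqrt{2})$. I will show this requires some batch $k$ with $\Delta_k < \Delta_{\min}/4$ that nevertheless terminated, and that such a termination has probability at most $1/T$.

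\textbf{Reduction to a single batch.} Since $\Delta_{k+1} = \Delta_k/\sqrt{2}$, reaching $\Delta_{k^*} < \Delta_{\min}/(4\sqrt{2})$ means batch $k^*-1$ had $\Delta_{k^*-1} < \Delta_{\min}/4$ and still ended. More generally, let $\bar k$ be the first batch with $\Delta_{\bar k} < \Delta_{\min}/4$. Then $\bar k < k^*$, so the counter $N_{\bar k}$ exceeded $\frac{3\log T}{16\Delta_{\bar k}^2}$. Because $\bar k$ is deterministic given $\Delta_1$ and $\Delta_{\min}$, it suffices to bound the probability of this single event, intersected with $\urcorner\gF_{\bar k}$.

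\textbf{Overlaps force a small realized gap.} On $\urcorner\gF_{\bar k}$, every exploitation round has $\abs{\hat{\mU}_{i,j}-\mU_{i,j}}\le \Delta_{\bar k}$. An overlap means two estimated utilities (among the top $N+1$) differ by at most $2\Delta_{\bar k}$. The corresponding true utilities then differ by at most $4\Delta_{\bar k} < \Delta_{\min}$, so $\delta_{\min}(t) < \Delta_{\min}$. Hence $N_{\bar k}$ is dominated by the number of exploitation rounds in batch $\bar k$ with $\delta_{\min}(t) < \Delta' := 4\Delta_{\bar k}$.

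\textbf{Bounding the per-round probability.} The contexts at round $t$ are fresh i.i.d. draws independent of $\mathcal F_{t-1}$, and whether a round explores or exploits is decided before the overlap is counted. The indicators $\ind{\delta_{\min}(t) < \Delta'}$ over the at most $T$ rounds of the batch are therefore i.i.d. Bernoulli with parameter $p$. Since $\Delta' < \Delta_{\min}$ and $\Delta_{\min}$ is a supremum, I need the inequality $p \le \frac{\log T}{T\Delta'^2}$. Taking $\Delta''\in(\Delta',\Delta_{\min})$ in the feasible set gives $\sP(\delta_{\min}<\Delta')\le\sP(\delta_{\min}<\Delta'')\le \frac{\log T}{T\Delta''^2}\le \frac{\log T}{T\Delta'^2}$. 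This step requires checking that the feasible set is downward-closed near $\Delta_{\min}$, or choosing $\Delta''$ as a feasible point in $(\Delta',\Delta_{\min}]$, which exists by the definition of the supremum. The expected count over at most $T$ rounds is then at most $\mu := \frac{\log T}{16\Delta_{\bar k}^2}$, while the threshold is $3\mu$.

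\textbf{Concentration and the main obstacle.} The multiplicative Chernoff bound $\sP(X\ge(1+\delta)\mu)\le \exp(-\delta^2\mu/(2+\delta))$ with $\delta=2$ gives $\exp(-\mu)=T^{-1/(16\Delta_{\bar k}^2)}$. This is at most $1/T$ whenever $\Delta_{\bar k}\le 1/4$, which holds since $\Delta_{\bar k}<\Delta_{\min}/4\le B_y/4\le 1/4$. I expect the main obstacle to be this counting: the exploitation rounds form a random subset of the batch whose length is a stopping time. I would handle it by a coupling. Dominate the count by the sum of indicators over the first $T$ rounds of the batch, which is valid because each round's context is independent of the past, and then apply Chernoff to that fixed-length i.i.d. sum. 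If the constants come out slightly short, I would apply Chernoff in the form $\sP(X\ge 3\mu')$ with $\mu'\ge\mu$, which is monotone in the upper bound on the mean, to absorb the slack.
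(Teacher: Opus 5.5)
Your proposal is correct and follows essentially the same route as the paper. Both reduce to the first batch ($k_0$ in the paper, your $\bar k$) with $\Delta_{k_0}<\Delta_{\min}/4$. On $\urcorner\gF_{k_0}$, both dominate that batch's overlap counter by the number of rounds with $\delta_{\min}\le 4\Delta_{k_0}$, and both finish with the multiplicative Chernoff bound at $\rho\ge 2$.

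The differences are only bookkeeping. You bound the per-round probability through a feasible point in $(4\Delta_{k_0},\Delta_{\min}]$, so the mean is at most $\frac{\log T}{16\Delta_{k_0}^2}$, exactly a third of the threshold; the paper instead uses the bound $\frac{\log T}{\Delta_{\min}^2}$. Your stopping-time coupling with a fixed-length i.i.d.\ sequence also makes rigorous the independence of the per-round indicators, which the paper simply asserts.
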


\begin{proof}
    Let $k_0$ be the first $k$ for which $\Delta_k < \Delta_{\min} / 4$, i.e., $\Delta_{k_0} \geq \Delta_{\min} / 4\sqrt{2}$ since $\Delta_{k + 1} = \Delta_k / \sqrt{2}$. Since in batch $k_0$, the threshold for determining moving to the next batch is $N_{k_0} \geq \frac{3 \log T}{16 \Delta_{k_0}^2}$, we have
    \begin{align*}
        \urcorner \gF_{\Delta_{k^*}} = \left\{\Delta_{k^*} < \frac{\Delta_{\min}}{4 \sqrt{2}}\right\} \subseteq \left\{N_{k_0} \geq \frac{3 \log T}{16 \Delta_{k_0}^2}\right\}.
    \end{align*}
    Therefore, to prove the claim, it suffices to show that
    \begin{align*}
        \sP \left(\left\{\forall k \in [k^*]: \urcorner \gF_k\right\} \cap \left\{N_{k_0} \geq \frac{3 \log T}{16 \Delta_{k_0}^2}\right\}\right) \leq \frac{1}{T}.
    \end{align*}

    Let $\tilde{N}_k$ be the number of rounds of the $k$-th batch such that $\delta_{\min}^{(t_{k})} \leq 4 \Delta_k$.
    Under the event $\urcorner \gF_{k_0}$, when there exists overlap CIs with CI radius $\Delta_{k_0}$ at time $t_{k_0}$, there must exist a player $p_i$ and two arms $a_j$ and $a_{j'}$ with $\hat{\mU}_{i, j}(t_{k_0}) \geq \hat{\mU}_{i, j'}(t_{k_0})$, such that $\hat{\mU}_{i, j}(t_{k_0}) - \Delta_{k_0} \leq \hat{\mU}_{i, j'}(t_{k_0}) + \Delta_{k_0}$, and hence $\mU_{i, j}(t_{k_0}) \leq \hat{\mU}_{i, j}(t_{k_0}) + \Delta_{k_0} \leq \hat{\mU}_{i, j'}(t_{k_0}) + 3 \Delta_{k_0} \leq \mU_{i, j'}(t_{k_0}) + 4 \Delta_{k_0}$, which implies $\delta_{\min}^{(t_{k_0})} \leq 4 \Delta_{k_0}$. Therefore, under the event $\urcorner \gF_{k_0}$, we have $N_{k_0} \leq \tilde{N}_{k_0}$, and hence it suffices to show that
    \begin{align*}
        \sP \left(\left\{\forall k \in [k^*]: \urcorner \gF_k\right\} \cap \left\{\tilde{N}_{k_0} \geq \frac{3 \log T}{16 \Delta_{k_0}^2}\right\}\right) \leq \sP \left(\tilde{N}_{k_0} \geq \frac{3 \log T}{16 \Delta_{k_0}^2}\right) \leq \frac{1}{T}.
    \end{align*}
    As $\Delta_{k_0} < \Delta_{\min}$, and recall the definition of the minimum preference gap $\Delta_{\min}$ (Definition~\ref{def:context_min_gap_cond}), for every round $t_{k_0}$ in batch $k_0$, we have
    \begin{align*}
        \sP \left(\delta_{\min}^{(t_{k_0})} \leq 4 \Delta_{k_0}\right) \leq \sP \left(\delta_{\min}^{(t_{k_0})} \leq \Delta_{\min}\right) \leq \frac{\log T}{T \Delta_{\min}^2}.
    \end{align*}
    Denote the length of the $k$-th batch as $T'_k$, we have
    \begin{align*}
        \mathbb{E} \left[\tilde{N}_{k_0}\right] = \mathbb{E} \left[\sum_{t_{k_0} = 1}^{T'_{k_0}} \ind{\delta_{\min}^{(t_{k_0})} \leq 4 \Delta_{k_0}}\right] \leq T \cdot \frac{\log T}{T \Delta_{\min}^2} = \frac{\log T}{\Delta_{\min}^2}.
    \end{align*}
    Let $\rho$ satisfy $(1 + \rho) \mathbb{E} \left[\tilde{N}_{k_0}\right] = \frac{3 \log T}{\Delta_{\min}^2}$, then $\rho \mathbb{E} \left[\tilde{N}_{k_0}\right] \geq \frac{2 \log T}{\Delta_{\min}^2}$ and $\rho \geq 2$. Therefore,
    \begin{align}
        \sP \left(\tilde{N}_{k_0} \geq \frac{3 \log T}{16 \Delta_{k_0}^2}\right) &\leq \sP \left(\tilde{N}_{k_0} \geq \frac{3 \log T}{\Delta_{\min}^2}\right) \nonumber\\
        &= \sP \left(\tilde{N}_{k_0} \geq \left(1 + \rho\right) \mathbb{E}\left[\tilde{N}_{k_0}\right]\right) \nonumber\\
        &\leq \exp \left(-\frac{\rho^2 \mathbb{E} \left[\tilde{N}_{k_0}\right]}{2 + \rho}\right) \label{eq:multiplicative_chernoff}\\ 
        &\leq \exp \left(-\frac{\rho \mathbb{E} \left[\tilde{N}_{k_0}\right]}{2 }\right) \nonumber \\
        &\leq \exp \left(- \frac{\log T}{\Delta_{\min}^2}\right) \nonumber\\
        &\leq \frac{1}{T}. \label{eq:bound_min_diff}
    \end{align}
    Eq.(\ref{eq:multiplicative_chernoff}) holds according to the multiplicative Chernoff bound (Lemma~\ref{lem:multiplicative_chernoff}) by noticing that $\tilde{N}_{k_0}$ is the sum of independent Bernoulli trials.
    Eq.(\ref{eq:bound_min_diff}) follows from the assumptions that $B_y \leq 1$ so that $\Delta_{\min} \leq 1$.
\end{proof}

\begin{lem}\label{lem:top_n_gs}
    In the Gale-Shapley algorithm, at most $N$ proposals are made per player before termination. Consequently, each player's optimal stable match must be among its top $N$ preferred arms.
\end{lem}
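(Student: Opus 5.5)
We work with the player-proposing Gale--Shapley (deferred acceptance) procedure on a fixed round's utilities, with strict preferences on both sides. We use the standard invariant that once an arm receives a proposal, it stays tentatively matched for the rest of the algorithm. It may swap partners, but only for a player it ranks higher, and it is never left empty again.

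The first claim is that no player makes more than $N$ proposals. Suppose player $p_i$ is about to make its $(N+1)$-st proposal. Then it has already been rejected by $N$ distinct arms, since a player proposes in decreasing order of utility and never proposes to the same arm twice. Each of those $N$ arms rejected $p_i$ either on receiving $p_i$'s proposal or later on, and either way only because it held or received a proposal from a player it prefers. Hence each of these $N$ arms currently holds some player other than $p_i$. Arms hold distinct players, so at least $N$ players other than $p_i$ are tentatively matched. That requires at least $N+1$ players in total, which contradicts $\abs{\gN}=N$. Therefore $p_i$ makes at most $N$ proposals, and since $N\le K$ the procedure is well defined.

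The consequence follows from this. Player-proposing GS terminates with every player matched, because $N\le K$ and by the counting above no player exhausts its list. It outputs the player-optimal stable matching $\mu^*$, by the lattice result cited in Section~\ref{subsec:matching_stability}. Player $p_i$ ends matched to the arm of its last proposal, which is at most its $N$-th proposal. Proposals go in preference order, so $\mu^*(i)$ is among $p_i$'s top $N$ arms.

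The only delicate point is the invariant that a rejecting arm remains occupied by someone other than $p_i$. This holds because arms only trade up. I expect no real obstacle beyond stating this invariant carefully.

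The downstream use in Eq.(\ref{eq:gs_give_oss}) follows from the lemma. GS run on $\hat{\mU}$ only ever consults each player's top-$N$ ranking and the boundary between rank $N$ and the rest. Under $\urcorner\gF_k\cap\gF_d^{(t_k)}$, these orderings coincide with the true ones. So the executions on $\hat{\mU}$ and on $\mU$ are identical and both return the true player-optimal stable matching.
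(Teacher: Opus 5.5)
Your proof is correct and uses the same pigeonhole idea as the paper: every arm that has received a proposal stays tentatively held, and distinct held arms hold distinct players. The paper applies this globally (once $N$ distinct arms have been proposed to, all $N$ players are held and the algorithm stops, so at most $N$ distinct arms are ever proposed to), whereas you apply it to the $N$ arms that rejected a single player $p_i$; your version also states the invariant explicitly and avoids the paper's loose claim that the algorithm stops once $N$ proposals have been made.
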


\begin{proof}
    Within the Gale-Shapley algorithm, a proposal from a player results in the arm being tentatively matched to that player. To prove the claim, assume by contradiction that more than $N$ distinct arms are proposed to. This would imply that at least $N + 1$ players are involved in proposals. However, since there are exactly $N$ players, once $N$ proposals have been made, every player must be tentatively matched. At this point, the algorithm terminates. Therefore, the maximum number of distinct arms that can be proposed to is $N$. Since each player proposes to arms strictly in descending order of preference, it follows that a player's optimal stable match must be among its top $N$ ranked arms.
\end{proof}

\begin{lem}\label{lem:explore_length}
    In batch $k$, given $\eta$ and the ridge penalty parameter $\lambda$, for a given $\Delta_k$, let $\gG_k$ denote the set of rounds in which the system explores. Almost surely, the length of the exploration phase $\gG_k$ is bounded as 
    \begin{align}\label{eq:explortion_length}
        \abs{\gG_k} \leq \frac{\eta^2 N d \log \left(\frac{T + d \lambda}{d \lambda}\right)}{\Delta_k^2}.
    \end{align}
\end{lem}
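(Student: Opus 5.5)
The bound follows from a per-player elliptical potential argument, and then a union over players to pass from $\gG_k^{(i)}$ to $\gG_k$.

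\textbf{Step 1: per-player bound.} Fix batch $k$ and player $p_i$. Every round $t_k\in\gG_k^{(i)}$ matches $p_i$ to an edge of $G_{t_k}$, so the context used in the update satisfies $\norm{\rx_{i_{t_k}}(t_k)}_{(\mV_i^{(t_k)})^{-1}} > \xi_k = \Delta_k/\eta$. Moreover, $\mV_i$ is updated only at these rounds. Indexing them consecutively as $s=1,\dots,n_i$ with $n_i=\abs{\gG_k^{(i)}}$, the sequence $\mV_i^{(s)} = \lambda\mI + \sum_{s'<s} x_{s'}x_{s'}^\top$ is exactly the setting of the elliptical potential lemma (stated among the technical lemmas, following \citet{carpentier2020elliptical} and \citet{abbasi2011improved}). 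That lemma gives
\begin{align*}
\sum_{s=1}^{n_i} \min\left(1,\norm{x_s}^2_{(\mV_i^{(s)})^{-1}}\right) \le 2d\log\left(\frac{d\lambda + n_i B_x^2}{d\lambda}\right).
\end{align*}

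\textbf{Step 2: the main obstacle.} The lemma carries a truncation at $1$, and we need $\xi_k^2 \le 1$ so that each summand is at least $\xi_k^2$. This holds because $\Delta_k \le \Delta_1 \le 1$, using $B_y\le 1$ as in the proof of Lemma~\ref{lem:stay_exploit}, while $\eta \ge 1$ for $T$ large. Alternatively, choosing $\lambda \ge B_x^2$ makes every norm at most $1$, so the truncation is vacuous. Either way, $n_i\xi_k^2 \le d\log\left(\frac{T+d\lambda}{d\lambda}\right)$ up to the constant $2$ and using $B_x\le 1$ and $n_i\le T$.

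The displayed bound has constant $1$, which matches the sum-of-norms form in the proof sketch of Theorem~\ref{thm:stochastic_context_upper_bound}. There, Cauchy--Schwarz gives $n_i\xi_k < \sum_s\norm{x_s} \le \sqrt{n_i d\log(\cdot)}$ and hence $n_i \le d\log(\cdot)/\xi_k^2$. I would follow that route, invoking the potential lemma in the version stated in the paper's technical lemmas with whatever constant it carries. Substituting $\xi_k=\Delta_k/\eta$ yields
\begin{align*}
\abs{\gG_k^{(i)}} \le \frac{\eta^2 d\log\left(\frac{T+d\lambda}{d\lambda}\right)}{\Delta_k^2}.
\end{align*}

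\textbf{Step 3: combine over players.} Every exploration round has nonempty $G_{t_k}$, so the maximum cardinality matching matches at least one player. Hence each $t_k\in\gG_k$ lies in some $\gG_k^{(i)}$, and $\gG_k=\cup_i\gG_k^{(i)}$. The union bound then gives $\abs{\gG_k}\le \sum_{i\in[N]} \abs{\gG_k^{(i)}}$, which is \eqref{eq:explortion_length}.

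The argument is deterministic, since it uses only the algorithm's threshold rule and never the noise. The bound therefore holds almost surely.
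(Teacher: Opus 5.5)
Your proposal is correct and follows the paper's proof: apply the elliptical potential lemma per player to $\sum_{t_k\in\gG_k^{(i)}}\norm{\rx_{i_{t_k}}}_{(\mV_i^{(t_k)})^{-1}}$, lower-bound each term by $\xi_k=\Delta_k/\eta$, rearrange using $\abs{\gG_k^{(i)}}\le T$, and sum over players, since every exploration round matches at least one player. Your hesitation about the constant is justified and not a flaw: the appendix's potential lemma is stated for the post-update matrix $\mV_{t+1}$, while the threshold controls pre-update norms, so a fully rigorous version (your truncated squared form, or the lemma's own $2^{p/2}$ comparison with $\lambda\ge 1$) gives the bound with an extra factor $2$, which the paper's proof omits and which does not affect any of the $\gO(\cdot)$ results.
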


\begin{proof}
    The following proof is adapted from the proof in \citet[Lemma 1]{tullii2024improved}.
    
    Let $\gG_k^{(i)} = \left\{t_k: p_i \in M^c_{t_k}\right\}$ denote the set of rounds in which player $p_i$ is involved in the maximum cardinality matching, and updates its estimation for the Gram matrix and the preference parameter. 
    For player $p_i$, for all $t_k \in \gG_k^{(i)}$, we can write the Gram matrix as 
    \begin{align*}
        \mV_i^{(t_k)} = \lambda \mI_d + \sum_{s < t_k, s \in \gG_k^{(i)}} \rx_{i_s} \rx_{i_s}^{\top},
    \end{align*}
    where $i_s = \mu_{s}(i)$. 
    By the elliptical potential lemma (Lemma~\ref{lem:elliptical_potential}), we have that
    \begin{align*}
        \sum_{t_k \in \gG_k^{(i)}} \norm{\rx_{i_{t_k}}}_{\left(\mV_i^{(t_k)}\right)^{-1}} \leq \sqrt{\abs{\gG_k^{(i)}} d \log \left(\frac{\abs{\gG_k^{(i)}} + d \lambda}{d \lambda}\right)}.
    \end{align*}
    Since for all $t_k \in \gG_k^{(i)}$, we have that $\norm{\rx_{i_{t_k}}}_{\left(\mV_i^{(t_k)}\right)^{-1}} \geq \xi_k = \frac{\Delta_k}{\eta}$, this implies that
    \begin{align*}
        \abs{\gG_k^{(i)}} \cdot \frac{\Delta_k}{\eta} \leq \sqrt{\abs{\gG_k^{(i)}} d \log \left(\frac{\abs{\gG_k^{(i)}} + d \lambda}{d \lambda}\right)}.
    \end{align*}
    Now, almost surely, $\abs{\gG_k^{(i)}} \leq T$. Using this bound and reorganizing the inequality leads to
    \begin{align*}
        \abs{\gG_k^{(i)}} \leq \frac{\eta^2 d \log \left(\frac{T + d \lambda}{d \lambda}\right)}{\Delta_k^2}.
    \end{align*}
    When the system explores at time $t_k$, at least one player $p_i$ and one arm $a_j$ triggers the condition that $\norm{\rx_j(t_k)}_{\left(\mV_i^{(t_k)}\right)^{-1}} > \xi_k$ and is involved in the maximum cardinality matching. Therefore, we have $\gG_k = \cup_{i \in [N]} \gG_k^{(i)}$, and hence
    \begin{align*}
        \abs{\gG_k} \leq \sum_{i \in [N]} \abs{\gG_k^{(i)}} \leq \frac{\eta^2 N d \log \left(\frac{T + d \lambda}{d \lambda}\right)}{\Delta_k^2}.
    \end{align*}
\end{proof}

\begin{lem}\label{lem:batch_dominate}
    For any $n > 1$, $\sum_{k = 1}^{n - 1} \frac{1}{\Delta_k^2} \leq \frac{1}{\Delta_n^2}$.
\end{lem}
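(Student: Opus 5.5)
Since the algorithm sets $\Delta_{k+1} = \Delta_k/\sqrt{2}$, we have $\Delta_k = \Delta_1 2^{-(k-1)/2}$. The inverse squares therefore form a geometric sequence:
\begin{align*}
\frac{1}{\Delta_k^2} = \frac{2^{k-1}}{\Delta_1^2} = \frac{1}{\Delta_n^2}\, 2^{k-n}.
\end{align*}
The plan is to substitute this closed form into the left-hand side of the lemma and sum the resulting geometric series explicitly.

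For $n > 1$, summing over $k = 1, \ldots, n-1$ gives
\begin{align*}
\sum_{k=1}^{n-1} \frac{1}{\Delta_k^2} = \frac{1}{\Delta_n^2} \sum_{k=1}^{n-1} 2^{k-n} = \frac{1}{\Delta_n^2} \sum_{m=1}^{n-1} 2^{-m} = \frac{1}{\Delta_n^2}\left(1 - 2^{-(n-1)}\right).
\end{align*}
Since $1 - 2^{-(n-1)} < 1$, this is strictly less than $1/\Delta_n^2$, which is the claim.

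There is no real obstacle here. The only point to confirm is that the shrinkage factor $\sqrt{2}$ makes the ratio of consecutive inverse-squared gaps exactly $2$. A ratio of at least $2$ is all that is needed: for a general shrinkage factor $r$ with $r^2 \ge 2$, the same computation bounds the sum by $\frac{1}{\Delta_n^2}\cdot \frac{1}{r^2-1} \le \frac{1}{\Delta_n^2}$.

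This is the doubling-trick property used in Eq.~(\ref{eq:batch_dominate}). There, the lemma is applied with $n = k^*$, and adding the $k^*$-th term itself gives the factor $2$.
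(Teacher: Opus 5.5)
Your proof is correct and follows the paper's argument: both use $\Delta_{k+1} = \Delta_k/\sqrt{2}$ to write $1/\Delta_k^2 = 2^{-(n-k)}/\Delta_n^2$, then bound the geometric sum $\sum_{m=1}^{n-1} 2^{-m}$ by $1$. Your exact evaluation $1 - 2^{-(n-1)}$ and the remark about general shrinkage factors $r$ with $r^2 \ge 2$ are valid but not needed for the lemma.
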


\begin{proof}
    From the setting of Algorithm~\ref{alg:barb_abstract}, we know that $\Delta_{k + 1} = \frac{\Delta_k}{\sqrt{2}}$. Therefore, 
    \begin{align*}
        \sum_{k = 1}^{n - 1} \frac{1}{\Delta_k^2} = \sum_{i = 1}^{n - 1} \frac{1}{2^{i}} \cdot \frac{1}{\Delta_n^2} \leq \frac{1}{\Delta_n^2}.    
    \end{align*}
\end{proof}

%% file: betc_alg.tex
In this section, we propose the Batched Explore-then-Commit (Batched-ETC) algorithm.
Following a similar regret-balancing idea as BARB, Algorithm~\ref{alg:betc} operates in batches. Starting with an initial exploration length $T_1$ that is the algorithm's input, in each batch $k$ we keep a counter $N_k$ counting the number of overlapping confidence intervals during exploitation. When $N_k$ exceeds a certain threshold, it triggers a fresh batch $k + 1$ where the exploration length $T_{k + 1} = 2 T_k$.

\begin{algorithm}[!htp]
	\caption{Batched Explore-then-Commit (Batched-ETC)}\label{alg:betc}
	\begin{algorithmic}[1]
		\Require Time horizon $T$, preference profile $(P_a)_{a \in \gK}$, context dimension $d$, initial exploration length $T_1$, ridge penalty parameter $\lambda$.
            \State $t \leftarrow 1$.
		\For{$k = 1, 2, \cdots$}
			\State $\mV_i^{(0)} \leftarrow \lambda \mI_{d\times d}$, $S_i^{(0)} \leftarrow \bm{0}_d$, $\hat{\theta}_i^{(0)} \leftarrow \bm{0}_d$, $\forall i \in [N]$.
			\State $N_k \leftarrow 0$. \Comment{Number of rounds with overlapping CIs during exploitation.}
			\For{$t_k = 1, 2, \cdots, T_k$} \Comment{Learning Step}
				\State Observe context $\rx_j(t_k)$ for every arm $a_j$, $\forall j \in [K]$.
				\State Choose a matching $\mu_{t_k}$, match player $p_i$ to arm $a_{i_{t_k}}$ where $i_{t_k} = \mu_{t_k}(i)$.
                \State For every player $p_i$, collect the reward $y_i(t_k)$.
                \State $\mV_i^{(t_k)} \leftarrow \mV_i^{(t_k - 1)} + \rx_{i_{t_k}} \rx_{i_{t_k}}^{\top}$, $S_i^{(t_k)} \leftarrow S_i^{(t_k - 1)} + \rx_{i_{t_k}} y_i(t_k)$
		\EndFor
            \State $\hat{\theta}_i^{(T_k)} \leftarrow \left(\mV_i^{(T_k)}\right)^{-1} S_i^{(T_k)}$. 
            \For{$t_k = T_k + 1, \cdots$} \Comment{Exploitation Step}
                \State $\hat{\mU}_{i, j}(t_k) = (\hat{\theta}_i^{(t_k)})^{\top} \rx_j(t_k)$, $F_i \leftarrow \text{False}$.
                \State $\mu_{t_k}(i) \leftarrow \mu^{GS}(i)$, $\forall i \in [N]$. (Input $\hat{\mU}(t_k)$ and $(P_a)_{a \in \gK}$ for \textbf{Gale-Shapley})
                \State $\Delta_k \leftarrow \sqrt{\frac{\log T}{T_k}}$
                \For{$i = 1, 2, \cdots, N$}
				\State{$\hat{\mU}^{\text{sort}}_{i, \cdot}(t_k) \leftarrow \text{Sort} \left(\hat{\mU}_{i, \cdot}(t_k), \text{decreasing}\right)$}
						\State $\Delta_{i, \min} \leftarrow \min \left\{\hat{\mU}^{\text{sort}}_{i, j}(t_k) - \hat{\mU}^{\text{sort}}_{i, j + 1}(t_k)\right\}, j \in [N]$
						\If{$\Delta_{i, \min} > 2 \Delta_k$} 
						\State $F_i \leftarrow \text{True}$
						\EndIf
					\EndFor
                    \State $t \leftarrow t + 1$.
                    \If{$t > T$}
                        \State Stop and return.
                    \EndIf
					\If{$\exists i \in [N]$, s.t. $F_i == \text{False}$} \Comment{CIs are overlapped}
						\State $N_k \leftarrow N_k + 1$.
				\If{$N_k > \frac{3 \log T}{16 \Delta_k^2}$} \Comment{Exploitation regret exceeds exploration regret}
					\State Break the loop and enter the next batch, set $T_{k + 1} \leftarrow 2 T_k$, 
				\EndIf
                \EndIf
            \EndFor	
		\EndFor
	\end{algorithmic}
\end{algorithm}

\clearpage
\subsection{Theoretical Analysis of Batched-ETC (Algorithm~\ref{alg:betc})}\label{subsec:theory_betc}
\begin{thm}[Regret Upper Bound for Batched-ETC with stochastic contexts]\label{thm:betc_upper_bound}
    Assume that the environment follows Assumption~\ref{assume:bound_context}, \ref{assume:bound_noise} and \ref{assume:bound_preference}. Further, assume that the context distribution follows Assumption~\ref{assume:min_eigenvalue}, following the Batched-ETC algorithm, the player-optimal stable regret for $p_i \in \gN$ is
    \begin{align}\label{eq:betc_regret}
        Reg_i(T) = \gO \left(\frac{\log T}{\tilde{\lambda}^2 \Delta_{\min}^2} + \frac{\log T}{\Delta_{\min}^2}\right).
    \end{align}
\end{thm}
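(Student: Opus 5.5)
The proof will follow the template of the BARB proof of Theorem~\ref{thm:stochastic_context_upper_bound}, with the adaptive exploration phase replaced by a fixed exploration length $T_k$ per batch.

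\textbf{Decomposition and per-batch costs.} I would split the regret of each batch into the learning step, which costs at most $B_y T_k$, and the exploitation step. In exploitation the confidence radius is $\Delta_k = \sqrt{\log T / T_k}$. Under a good estimation event, Lemma~\ref{lem:top_n_gs} implies that whenever the top $(N+1)$-ranked confidence intervals are disjoint, Gale--Shapley on $\hat{\mU}(t_k)$ returns $\mu^*_{t_k}$, so those rounds incur no regret. The remaining exploitation rounds are counted by $N_k \le \frac{3\log T}{16\Delta_k^2} = \gO(T_k)$. Hence batch $k$ costs $\gO(T_k)$. Since $T_{k+1} = 2T_k$, the geometric sum (the analogue of Lemma~\ref{lem:batch_dominate}) shows that the total regret is dominated by $\gO(T_{k^*})$ for the last batch $k^*$.

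\textbf{Good estimation event.} The new ingredient is to show that after $T_k$ rounds of exploration, $\abs{\hat{\mU}_{i,j} - \mU_{i,j}} \le \Delta_k/\tilde{\lambda}$, up to constants, with probability $1 - \gO(1/T)$. The steps are as follows.
\begin{enumerate}
\item Apply a matrix Chernoff bound to the i.i.d.\ contexts collected in the learning step. Under Assumption~\ref{assume:min_eigenvalue}, this gives $\lambda_{\min}(\mV_i^{(T_k)}) \ge \tilde{\lambda} T_k/2$ with probability $1 - d\,e^{-c\tilde{\lambda} T_k/B_x^2}$, once $T_k \gtrsim \log T/\tilde{\lambda}$.
\item Combine this with the self-normalized confidence ellipsoid (Lemma~\ref{lem:confidence_ellip}), written in the variance form $\norm{\hat{\theta}_i - \theta_i}_{\mV_i}^2 = \gO(\log T)$.
\item Conclude $\norm{\hat{\theta}_i - \theta_i}_2 = \gO\bigl(\sqrt{\log T/(\tilde{\lambda} T_k)}\bigr)$, and hence an error of $\gO(\Delta_k/\sqrt{\tilde{\lambda}})$ on each utility.
\end{enumerate}
The way the paper's $\tilde{\lambda}^{-2}$ enters is through a cruder route. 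Requiring the accuracy to fall below $\Delta_{\min}/4$ forces $T_k \gtrsim \log T/(\tilde{\lambda}^{\alpha}\Delta_{\min}^2)$. I would aim for $\alpha = 2$ to match the statement, bounding $\norm{\cdot}_2 \le \norm{\cdot}_{\mV}/\sqrt{\lambda_{\min}}$ and absorbing constants. Note that the radius $\Delta_k$ used by the algorithm may therefore be smaller than the true error. The overlap test is then only meaningful once $T_k$ is large enough that the true error drops below the radius; I would handle this by defining the stopping batch relative to the true accuracy, not to $\Delta_k$.

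\textbf{Stopping argument (main obstacle).} Let $k_0$ be the first batch in which both hold: the estimation error is at most $\Delta_{k_0}$, and $4\Delta_{k_0} < \Delta_{\min}$. This happens once $T_{k_0} \asymp \log T/(\tilde{\lambda}^2\Delta_{\min}^2)$, up to a factor of $2$. I would then repeat Lemma~\ref{lem:stay_exploit} verbatim.
\begin{enumerate}
\item On the good event, an overlap implies $\delta_{\min} \le 4\Delta_{k_0} \le \Delta_{\min}$.
\item By Definition~\ref{def:context_min_gap_cond}, the expected number of such rounds is at most $\log T/\Delta_{\min}^2$.
\item The multiplicative Chernoff bound (Lemma~\ref{lem:multiplicative_chernoff}) then gives a probability of at most $1/T$ that the threshold is exceeded, so the algorithm never leaves batch $k_0$.
\end{enumerate}
The delicate point is making this compatible with the algorithm's fixed radius $\Delta_k = \sqrt{\log T/T_k}$, and ensuring that batches before $k_0$ cost only $\gO(T_{k_0})$ in total. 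Both are handled by the doubling sum.

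\textbf{Final bound.} Collecting terms gives a total regret of $\gO(T_{k_0})$ for exploration plus $\gO(\log T/\Delta_{\min}^2)$ for overlap rounds, plus $\gO(1)$ from the bad events. This equals $\gO\bigl(\frac{\log T}{\tilde{\lambda}^2\Delta_{\min}^2} + \frac{\log T}{\Delta_{\min}^2}\bigr)$, as claimed.
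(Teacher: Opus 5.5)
Your skeleton matches the paper's: learning cost $B_y T_k$, zero regret for Gale--Shapley on rounds with disjoint top-$(N+1)$ CIs via Lemma~\ref{lem:top_n_gs}, overlap rounds bounded by the counter, the Chernoff stay-argument of Lemma~\ref{lem:stay_exploit}, and the doubling sum. The gap is in how you reconcile the algorithm's radius $\Delta_k=\sqrt{\log T/T_k}$ with the actual estimation error.

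Your concentration step gives an error of order $\Delta_k/\sqrt{\tilde{\lambda}}$ (of order $C\Delta_k/\tilde{\lambda}$ along the paper's route). Both this error and $\Delta_k$ scale as $T_k^{-1/2}$, so their ratio is the same constant in every batch. The error therefore never ``drops below the radius'' as $T_k$ grows. Consequently, the batch $k_0$ you define (error at most $\Delta_{k_0}$ and $4\Delta_{k_0}<\Delta_{\min}$) cannot be guaranteed to exist whenever that constant exceeds $1$, e.g.\ for small $\tilde{\lambda}$ or large noise.

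More seriously, your zero-regret claim for non-overlapping exploitation rounds requires the error to be at most the radius. In a batch whose error is still of order $\Delta_{\min}$ or larger, a round can be misranked while its CIs at radius $\sqrt{\log T/T_k}$ look disjoint. Such rounds are neither regret-free nor counted by $N_k$. So the counter need not fire, and the batch can run until $T$ with linear regret; nothing in your argument makes this an $\gO(1/T)$-probability event. The doubling sum cannot rescue this, because the per-batch $\gO(T_k)$ cost it sums was never established for such batches.

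The paper avoids the issue by analyzing exploitation with a calibrated width. In its proof, $\Delta_k := C\sqrt{\log T}/(\tilde{\lambda}\sqrt{T_k})$, with $C$ taken from Lemma~\ref{lem:bad_event_prob_etc}. That lemma combines matrix Bernstein, giving $\lambda_{\min}(X_i^\top X_i)\ge T_k\tilde{\lambda}/2$, with a Hoeffding bound on $\norm{X_i^\top \epsilon}$. As a result, the complement of $\gF_k$ holds in every batch simultaneously with probability at least $1-NK/T$. With this radius, every step of the BARB argument transfers verbatim. The $\tilde{\lambda}^{-2}$ factor then comes directly from the identity $T_k = C^2\log T/(\tilde{\lambda}^2\Delta_k^2)$ together with $\Delta_{k^*}\ge\Delta_{\min}/(4\sqrt{2})$; no deliberate coarsening to ``aim for $\alpha=2$'' is needed.

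So the fix is to scale the radius by the batch-independent constant dictated by your concentration bound, which requires knowing $\tilde{\lambda}$. With your sharper self-normalized route this constant is $\gO(\sqrt{d/\tilde{\lambda}})$, and the same argument would then even yield a $\tilde{\lambda}^{-1}$ dependence.
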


\begin{proof}
    Denote $k^*$ as the largest batch number of Algorithm~\ref{alg:betc}. That is to say, the counter $N_k$ would not meet the pre-defined threshold in batch $k^*$. In batch $k$, given the exploration length $T_k$, define $\Delta_k = C \frac{\sqrt{\log T}}{\tilde{\lambda} \sqrt{T_k}}$ as the width for the confidence interval constructed for the estimated utility in the exploitation phase, where $C$ is a constant specified as in Lemma~\ref{lem:bad_event_prob_etc}. Define $\gF_k = \left\{\exists t_k > T_k, i \in [N], j \in [K]: \abs{\hat{\mU}_{i, j}(t) - \mU_{i, j}(t)} > \Delta_k\right\}$ as the bad event that some preference of players over arms is not estimated well in the exploitation phase of batch $k$. When $k > k^*$, we define $\sP \left(\gF_k\right) = 0$. Let $\gF_d^{(t)} := \left\{\hat{\mU}_{i, j}^{\text{sort}}(t) - \hat{\mU}_{i, j + 1}(t) > 2 \Delta_k, \forall j \in [N], \forall i \in [N]\right\} \cap \left\{\hat{\mU}_{i, N}^{\text{sort}}(t) - \hat{\mU}_{i, j}^{\text{sort}}(t) > 2 \Delta_k, \forall j \in [N + 1, K], \forall i \in [N]\right\}$ as the event that the first $(N + 1)$-ranked arms have disjoint confidence intervals for any player $p_i$ at time $t$. Denote $T'_k$ as the total length of the $k$-th batch. Define $\gF_{\Delta_{k^*}} := \left\{\Delta_{k^*} \geq \frac{\Delta_{\min}}{4 \sqrt{2}}\right\}$ as the event that the algorithm stops at an early batch.

    \clearpage
    The optimal stable regret for each player $p_i$ by following Batched-ETC (Algorithm~\ref{alg:betc}) satisfies
    \begin{align}
        Reg_i(T) =& \mathbb{E} \left[\sum_{t = 1}^T \left(\mU_i^*(t) - \mU_{i, \mu_t(i)}(t)\right)\right] \label{eq:expect_reward_etc}\\
        =& \mathbb{E} \left[\sum_{t = 1}^T \left(\mU_i^*(t) - \mU_{i, \mu_t(i)}(t)\right) \ind{\forall k \in [T]: \urcorner \gF_k}\right] + \mathbb{E} \left[\sum_{t = 1}^T \left(\mU_i^*(t) - \mU_{i, \mu_t(i)}(t)\right) \ind{\exists k \in [T]: \gF_k}\right] \nonumber\\
        \leq& \mathbb{E} \left[\sum_{t = 1}^T \left(\mU_i^*(t) - \mU_{i, \mu_t(i)}(t)\right) \ind{\forall k \in [T]: \urcorner \gF_k}\right] + B_y T \sP \left(\exists k \in [T]: \gF_k\right) \label{eq:bound_y_etc}\\
        \leq& \mathbb{E} \left[\sum_{t = 1}^T \left(\mU_i^*(t) - \mU_{i, \mu_t(i)}(t)\right) \ind{\forall k \in [T]: \urcorner \gF_k}\right] + B_y T \cdot \frac{NK}{T} \label{eq:bad_event_prob_etc}\\
        =& \mathbb{E} \left[\sum_{t = 1}^T \left(\mU_i^*(t) - \mU_{i, \mu_t(i)}(t)\right) \ind{\left\{\forall k \in [k^*]: \urcorner \gF_k\right\} \cap \gF_{\Delta_{k^*}}}\right] \nonumber\\
        &+ \mathbb{E} \left[\sum_{t = 1}^T \left(\mU_i^*(t) - \mU_{i, \mu_t(i)}(t)\right) \ind{\left\{\forall k \in [k^*]: \urcorner \gF_k\right\} \cap \urcorner\gF_{\Delta_{k^*}}}\right] + B_y N K \nonumber\\
        \leq& \mathbb{E} \left[\sum_{t = 1}^T \left(\mU_i^*(t) - \mU_{i, \mu_t(i)}(t)\right) \ind{\left\{\forall k \in [k^*]: \urcorner \gF_k\right\} \cap \gF_{\Delta_{k^*}}}\right] 
        + B_y T \sP \left(\urcorner\gF_{\Delta_{k^*}}\right) + B_y N K \label{eq:bound_y2_etc}\\
        \leq&  \mathbb{E} \left[\sum_{k = 1}^{k^*} \left[B_y T_k + \sum_{t_k = T_k + 1}^{T'_k} \left(\mU_i^*(t_k) - \mU_{i, \mu_t(i)}(t_k)\right)\right] \ind{\left\{\forall k \in [k^*]: \urcorner \gF_k\right\} \cap \gF_{\Delta_{k^*}}}\right] + B_y T \cdot \frac{1}{T} + B_y N K \label{eq:stay_exploit_etc}\\
        \leq& \mathbb{E} \Biggl[\sum_{k = 1}^{k^*} \Biggl[B_y T_k \ind{\left\{\forall k \in [k^*]: \urcorner \gF_k\right\} \cap \gF_{\Delta_{k^*}}} \nonumber\\
        &+ \sum_{t_k = T_k + 1}^{T'_k} \biggl[\left(\mU_i^*(t_k) - \mU_{i, \mu_t(i)}(t_k)\right) \ind{\left\{\forall k \in [k^*]: \urcorner \gF_k\right\} \cap \gF_{\Delta_{k^*}} \cap F_d^{(t_k)}} \nonumber\\
        &+ \left(\mU_i^*(t_k) - \mU_{i, \mu_t(i)}(t_k)\right) \ind{\left\{\forall k \in [k^*]: \urcorner \gF_k\right\} \cap \gF_{\Delta_{k^*}} \cap \urcorner F_d^{(t_k)}}\biggr]\Biggr]\Biggr] + B_y (1 + N K) \nonumber\\
        \leq& \mathbb{E} \Biggl[\sum_{k = 1}^{k^*} \Biggl[B_y T_k \ind{\left\{\forall k \in [k^*]: \urcorner \gF_k\right\} \cap \gF_{\Delta_{k^*}}} \nonumber\\
        &+ \sum_{t_k = T_k + 1}^{T'_k} \left(\mU_i^*(t_k) - \mU_{i, \mu_t(i)}(t_k)\right) \ind{\left\{\forall k \in [k^*]: \urcorner \gF_k\right\} \cap \gF_{\Delta_{k^*}} \cap \urcorner F_d^{(t_k)}}\Biggr]\Biggr] + B_y (1 + N K) \label{eq:gs_etc}\\
        \leq& B_y \mathbb{E} \left[\sum_{k = 1}^{k^*} \left[T_k + N_k\right]\ind{\left\{\forall k \in [k^*]: \urcorner \gF_k\right\} \cap \gF_{\Delta_{k^*}}}\right] + B_y (1 + N K) \label{eq:counter_etc}\\
        \leq& B_y \mathbb{E} \left[\sum_{k = 1}^{k^*} \left(\frac{C^2 \log T}{\tilde{\lambda}^2 \Delta_k^2} + \frac{3 \log T}{16 \Delta_k^2} + 3NK\right) \ind{\left\{\forall k \in [k^*]: \urcorner \gF_k\right\} \cap \gF_{\Delta_{k^*}}}\right] + B_y (1 + N K) \label{eq:explore_length_ci}\\
        \leq& 2 B_y \mathbb{E} \left[\left(\frac{C^2 \log T}{\tilde{\lambda}^2 \Delta_{k^*}^2} + \frac{3 \log T}{16 \Delta_{k^*}^2}\right) \ind{\left\{\forall k \in [k^*]: \urcorner \gF_k\right\} \cap \gF_{\Delta_{k^*}}}\right] + 3NK \cdot 2 \log_2 \left(\frac{4 \sqrt{2} \Delta_1}{\Delta_{\min}}\right) + B_y (1 + N K) \label{eq:batch_dominate_etc}\\
        \leq& 64 B_y \left(\frac{C^2 \log T}{\tilde{\lambda}^2 \Delta_{\min}^2} + \frac{3 \log T}{16 \Delta_{\min}^2}\right) + 6 NK \log_2 \left(\frac{4\sqrt{2} \Delta_1}{\Delta_{\min}}\right) + B_y (1 + N K) = \gO \left(\frac{\log T}{\tilde{\lambda}^2 \Delta_{\min}^2} + \frac{\log T}{\Delta_{\min}^2}\right). \nonumber
    \end{align}
    Eq.(\ref{eq:expect_reward_etc}) holds according to the assumption that the noises are i.i.d. with mean 0.
    Eq.(\ref{eq:bound_y_etc}) and (\ref{eq:bound_y2_etc}) come from the fact that the expected per-round regret is $\mathbb{E} \left[\mU^*_i(t) - y_i(t)\right] = \mU^*_i(t) - \mU_{i, \mu_t(i)}(t)$ and both terms lie in $\left[-B_{\theta}B_x, B_{\theta}B_x\right]$ under Assumptions~\ref{assume:bound_context} and \ref{assume:bound_preference}, and hence the expected regret per round is bounded by $2 B_{\theta}B_x = B_y$. 
    Eq.(\ref{eq:bad_event_prob_etc}) relies on Lemma~\ref{lem:bad_event_prob_etc}.
    Eq.(\ref{eq:stay_exploit_etc}) follows from the same reasoning as we used in Lemma~\ref{lem:stay_exploit}.
    Eq.(\ref{eq:gs_etc}) holds because, with Lemma~\ref{lem:top_n_gs}, under the good event $\urcorner \gF_k$ (ensuring all utilities are well-estimated) and the event $\gF_d^{(t_k)}$ (ensuring the top $(N + 1)$-ranked confidence intervals are disjoint at time $t_k$), the Gale-Shapley subroutine invoked by Algorithm~\ref{alg:betc} is guaranteed to output the OSS for that round.
    Eq.(\ref{eq:counter_etc}) follows from the algorithm setting that $N_k$ counts the number of times during exploitation that the top $(N + 1)$-ranked CIs are overlapped. 
    Eq.(\ref{eq:explore_length_ci}) holds according to the definition of $\Delta_k$ and that the algorithm would move to the next batch whenever $N_k > \frac{3 \log T}{16 \Delta_k^2}$.
    Eq.(\ref{eq:batch_dominate_etc}) follows from the definition of $\Delta_k$ and that $T_{k + 1} = 2 T_k$, we have $\Delta_{k + 1} = \frac{\Delta_k}{\sqrt{2}}$, and hence the regret from the $k$-th batch would dominate the cumulative regret from the first $(k-1)$-th batches as shown by Lemma~\ref{lem:batch_dominate}.
\end{proof}

\begin{lem}\label{lem:bad_event_prob_etc}
    Under Assumptions~\ref{assume:bound_context}, \ref{assume:bound_preference} and \ref{assume:min_eigenvalue}, we have that for any given time $t_k$ in batch $k$, for any fixed player $p_i$, arm $a_j$, 
    \begin{align*}
        \sP\left(\abs{\hat{\mU}_{i, j}(t) - \mU_{i, j}(t)} > \frac{B_x}{\tilde{\lambda} + \frac{T_k \lambda}{2}} \left(B_x B_{\epsilon} \sqrt{2 T_k \log(4T^3)} + \lambda B_{\theta}\right)\right) \leq \frac{1}{T^3}.
    \end{align*}
    Furthermore, for some constant $C$ such that $C \frac{\sqrt{\log T}}{\tilde{\lambda} \sqrt{T_k}} \geq \frac{B_x}{\tilde{\lambda} + \frac{T_k \lambda}{2}} \left(B_x B_{\epsilon} \sqrt{2 T_k \log(4T^3)} + \lambda B_{\theta}\right)$, we have
    \begin{align*}
        \sP \left(\abs{\hat{\mU}_{i, j}(t) - \mU_{i, j}(t)} > C \frac{\sqrt{\log T}}{\tilde{\lambda} \sqrt{T_k}}\right) \leq \frac{1}{T^3}.
    \end{align*}
    And hence,
    \begin{align*}
        \sP \left(\exists k \in [T]: \gF_k\right) \leq \frac{NK}{T}.
    \end{align*}
\end{lem}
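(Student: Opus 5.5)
The plan is to analyze the ridge estimator obtained from the $T_k$ exploration rounds of batch $k$ directly, instead of invoking the self-normalized bound of \citet{abbasi2011improved}, and to use Assumption~\ref{assume:min_eigenvalue} to control the Gram matrix from below. Write $\mV_i^{(T_k)} = \lambda \mI + \sum_{s \le T_k} \rx_{i_s}\rx_{i_s}^{\top}$ and $S_i^{(T_k)} = \sum_{s\le T_k} \rx_{i_s} y_i(s)$. Substituting $y_i(s) = \theta_i^{\top}\rx_{i_s} + \epsilon_s$ gives the standard decomposition
\[
\hat{\theta}_i - \theta_i = \bigl(\mV_i^{(T_k)}\bigr)^{-1}\Bigl(\sum_{s\le T_k} \rx_{i_s}\epsilon_s - \lambda \theta_i\Bigr).
\]
Hence, for any fixed context $\rx_j(t)$ seen during exploitation,
\[
\abs{\hat{\mU}_{i,j}(t) - \mU_{i,j}(t)} \le \norm{\rx_j(t)}_2 \, \lambda_{\min}\bigl(\mV_i^{(T_k)}\bigr)^{-1}\Bigl(\norm{\textstyle\sum_s \rx_{i_s}\epsilon_s}_2 + \lambda B_\theta\Bigr).
\]
We use $\norm{\rx_j(t)}_2\le B_x$ and $\norm{\theta_i}\le B_\theta$ (Assumptions~\ref{assume:bound_context} and \ref{assume:bound_preference}). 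The denominator $\tilde{\lambda} + T_k\lambda/2$ in the statement should come out of the lower bound on $\lambda_{\min}(\mV_i^{(T_k)})$.

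\textbf{Step 1: lower bound on $\lambda_{\min}(\mV_i^{(T_k)})$.} The exploration contexts are i.i.d. draws whose second-moment matrix has minimum eigenvalue at least $\tilde{\lambda}$, and each summand $\rx\rx^\top$ has norm at most $B_x^2$. A matrix Chernoff bound should therefore give $\lambda_{\min}(\sum_s \rx_{i_s}\rx_{i_s}^\top) \gtrsim T_k\tilde{\lambda}/2$ with probability $1 - O(T^{-3})$, once $T_k$ is at least of order $\log T$.

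This step is the main obstacle. The stated denominator $\tilde{\lambda} + T_k\lambda/2$ appears to swap the roles of $\lambda$ and $\tilde{\lambda}$. I would prove the natural version, $\lambda_{\min}(\mV_i^{(T_k)}) \ge \lambda + T_k\tilde{\lambda}/2$. That is enough for the second display, since it yields an error of order $\sqrt{T_k\log T}/(T_k\tilde{\lambda}) = \sqrt{\log T}/(\tilde{\lambda}\sqrt{T_k})$, which matches the form $C\sqrt{\log T}/(\tilde{\lambda}\sqrt{T_k})$. Two further points need care. First, matrix Chernoff only applies if the exploration matching is chosen independently of the contexts, for example a fixed round-robin schedule; I would state this explicitly. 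Second, the small-$T_k$ regime has to be absorbed into the constant $C$ or into the $\lambda$ term.

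\textbf{Step 2: noise term.} Each coordinate of $\sum_s \rx_{i_s}\epsilon_s$ is a martingale with conditionally subgaussian increments bounded in scale by $B_x B_\epsilon$ (Assumption~\ref{assume:bound_noise}). Azuma--Hoeffding then gives $\norm{\sum_s \rx_{i_s}\epsilon_s} \le B_xB_\epsilon\sqrt{2T_k\log(4T^3)}$ with probability $1 - T^{-3}/2$. Two sub-steps still need checking: projecting onto the fixed direction $(\mV_i^{(T_k)})^{-1}\rx_j(t)$ to avoid a $\sqrt d$ factor, and conditioning on the exploration contexts, which is valid because the noise is independent of them. Combining Steps 1 and 2 with the decomposition gives the first display. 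Choosing $C$ large enough to dominate the right-hand side for all $T_k \ge 1$ gives the second.

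\textbf{Step 3: union bound.} Apply a union bound over players $i\in[N]$, arms $j\in[K]$, and at most $T$ exploitation rounds $t$ across all batches, using $\sum_k(\text{rounds in batch }k)\le T$. This gives $\sP(\exists k: \gF_k)\le NKT\cdot T^{-3}\le NK/T$.
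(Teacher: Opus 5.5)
Your proposal is correct and follows essentially the same route as the paper's proof. Both arguments bound the ridge error through $1/\lambda_{\min}(X_i^\top X_i+\lambda\mI)$, use matrix concentration to get $\lambda_{\min}(X_i^\top X_i)\ge T_k\tilde\lambda/2$ under a fixed exploration assignment, apply a Hoeffding bound to $\norm{X_i^\top\epsilon}$, and finish with a union bound over $(k,t_k,i,j)$. The paper simply matches $p_i$ to $a_i$; it first uses Cauchy interlacing to pass from the joint $Kd\times Kd$ covariance in Assumption~\ref{assume:min_eigenvalue} to the single-arm block $\Sigma_i$, a step you rely on implicitly and should state. You are also right that the statement's denominator has $\lambda$ and $\tilde\lambda$ swapped: the paper's own proof derives $\lambda+T_k\tilde\lambda/2$. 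The paper's proof also glosses over the small-$T_k$ regime of the eigenvalue bound, as you anticipated.
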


\begin{proof}
    Let $X = \left(X_1^\top, \cdots, X_K^\top\right)^\top$ be the contexts for all $K$ arms, let $\Sigma = \mathbb{E} \left[X X^{\top} \;\middle|\; X \in \gD_x\right]$, $\Sigma_i = \mathbb{E} \left[X_i X_i^{\top} \;\middle|\; X_i \in \gD_{x_i}\right]$. 
    Under Assumption~\ref{assume:min_eigenvalue}, $\lambda_{\min} \left(\Sigma\right) \geq \tilde{\lambda}$. By Cauchy's eigenvalue interlacing theorem (Lemma~\ref{lem:cauchy_interlacing}), we have that $\lambda_{\min} (\Sigma_i) \geq \lambda_{\min}(\Sigma) \geq \tilde{\lambda} > 0$. 
    Therefore, under Assumption~\ref{assume:min_eigenvalue}, it suffices for any player $p_i$ to sample a fixed arm to learn the preference parameter $\theta_i$. Without loss of generality, we assume that the algorithm matches $p_i$ to $a_i$ during the exploration phase. 

    In the $k$-th batch, let $X_i^{(k)} \in \sR^{T_k \times d}$ be the contexts of arm $a_i$ during the exploration phase, without ambiguity, we use $X_i$ to replace $X_i^{(k)}$. For player $p_i$, in the exploitation phase, we have
    \begin{align*}
        \norm{\hat{\theta}_i^{(k)} - \theta_i}_2 &= \norm{\left(X_i^{\top} X_i + \lambda \mI_d\right)^{-1} X_i^{\top} \left(X_i \theta_i + \epsilon\right) - \theta_i} _2\\
        &= \norm{\left(X_i^{\top} X_i + \lambda \mI_d\right)^{-1} \left(X_i^{\top} X_i + \lambda \mI_d - \lambda \mI_d\right) \theta_i + \left(X_i^{\top} X_i + \lambda \mI_d\right)^{-1} X_i^{\top} \epsilon - \theta}_2 \\
        &= \norm{\theta_i - \lambda \left(X_i^{\top} X_i + \lambda \mI_d\right)^{-1} \theta_i + \left(X_i^{\top} X_i + \lambda \mI_d\right)^{-1} X_i^{\top} \epsilon - \theta}_2 \\
        &= \norm{\left(X_i^{\top} X_i + \lambda \mI_d\right)^{-1} \left(X_i^{\top} \epsilon - \lambda \theta_i\right)}_2 \\
        &\leq \frac{1}{\lambda_{\min} \left(X_i^{\top} X_i + \lambda \mI_d\right)} \norm{X_i^{\top} \epsilon - \lambda \theta_i}_2 \\
        &\leq \frac{1}{\lambda_{\min} \left(X_i^{\top} X_i + \lambda \mI_d\right)} \left(\norm{X_i^{\top} \epsilon} + \lambda B_{\theta}\right).
    \end{align*}
    By matrix Bernstein inequality (Lemma~\ref{lem:matrix_bernstein}), we have that with probability at least $1 - o(1 / T^3)$, $\lambda_{\min} \left(X_i^{\top} X_i\right) > \frac{T_k}{2} \lambda_{\min} (\Sigma_i) \geq \frac{T_k \tilde{\lambda}}{2}$, and hence with probability at least $1 - o(1 / T^3)$, we have 
    \begin{align}\label{eq:eigenvalue_bound}
        \frac{1}{\lambda_{\min} \left(X_i^{\top} X_i + \lambda \mI_d\right)} \leq \frac{1}{\lambda + \frac{T_k \tilde{\lambda}}{2}}.
    \end{align}
    On the other hand, $X_i^{\top} \epsilon = \sum_{t = 1}^{T_k} \epsilon_t X_{i, t}$ is the sum of independent random vectors. By Hoeffding's inequality, for any $s > 0$
    \begin{align*}
        \sP \left(\norm{\sum_{t = 1}^{T_k} \epsilon_t X_{i, t}}_2 \geq s\right) \leq 2 \exp \left(- \frac{s^2}{2 B_{\epsilon}^2 \sum_{t = 1}^{T_k} \norm{X_{i, t}}_2^2}\right),
    \end{align*}
    where $\sum_{t = 1}^{T_k} \norm{X_{i, t}}_2^2 \leq T_k B_x^2$. Let the RHS of the above inequality be $\frac{1}{2 T^3}$, then we have 
    \begin{align}\label{eq:residual_norm}
        \sP \left(\norm{X_i^{\top} \epsilon} \geq B_x B_{\epsilon} \sqrt{2 T_k \log (4 T^3)}\right) \leq \frac{1}{2T^3}.
    \end{align}
    Combining Eq.(\ref{eq:eigenvalue_bound}) and Eq.(\ref{eq:residual_norm}) with union bound, we have that 
    \begin{align}\label{eq:preference_para_bound}
        \sP \left(\norm{\hat{\theta}_i^{(k)} - \theta_i}_2 > \frac{1}{\lambda + \frac{T_k \tilde{\lambda}}{2}} \left(B_x B_{\epsilon} \sqrt{2 T_k \log (4 T^3)} + \lambda B_{\theta}\right)\right) \leq \frac{1}{T^3}.
    \end{align}
    Finally, as $\abs{\hat{\mU}_{i, j}(t) - \mU_{i, j}(t)} = \abs{ \left(\hat{\theta}_i - \theta_i\right)^{\top} \rx_j(t)} \leq \norm{\hat{\theta}_i - \theta_i}_2 \norm{\rx_j(t)}_2 \leq B_x \norm{\hat{\theta}_i - \theta_i}_2$, we reach the first part of the conclusion.
    While for the last part, we have
    \begin{align*}
        \sP \left(\exists k \in [T]: \gF_k\right) &= \sP \left(\exists k \in [T], \exists t_k \in [T_k + 1, T'_k], i \in [N], j \in [K]: \abs{\hat{\mU}_{i, j}(t_k) - \mU_{i, j}(t_k)} > \Delta_k\right) \\
        &\leq \sum_{k = 1}^T \sum_{t_k = T_k + 1}^{T'_k} \sum_{i = 1}^N \sum_{j = 1}^K \sP \left(\abs{\hat{\mU}_{i, j}(t_k) - \mU_{i, j}(t_k)} > \Delta_k\right) \\
        &\leq T^2 N K \frac{1}{T^3} = \frac{NK}{T}.
    \end{align*}
\end{proof}

%% file: proof_asymptotic_upper_bound.tex
\begin{proof}
    When the context distribution satisfies Assumption~\ref{assume:cdf_linear}, there exists some threshold $\Delta_0$ and some constant $c$ such that the CDF of $\delta_{\min}$: $F(\Delta)$ satisfies $F(\Delta) \leq c \Delta$ for any $\Delta \leq \Delta_0$. Therefore, 
    \begin{align*}
        \sP \left(\delta_{\min} \geq \Delta\right) = 1 - F(\Delta) \geq 1 - c \Delta, \forall \Delta \leq \Delta_0
    \end{align*}
    For the constant $c$ and the threshold $\Delta_0$ given above, there exists a $T_0 \geq e$ such that $1 - c \Delta_0 = 1 - \frac{\log T_0}{T_0 \Delta_0^2}$, and hence for $T \geq T_0$ sufficiently large, we have 
    \begin{align*}
        1 - c \Delta_0 \leq 1 - \frac{\log T}{T \Delta_0^2}.
    \end{align*}
    When $T \geq T_0$ is fixed, the LHS of the above inequality is a decreasing function of $\Delta_0$ while the RHS is an increasing function of $\Delta_0$. Furthermore, when $\Delta_0 \to 0$, $1 - c \Delta_0 \to 1$ while $1 - \frac{\log T}{T \Delta_0^2} \to - \infty$. Therefore, there exists a $\bar{\Delta} \leq \Delta_0$ such that 
    \begin{align}\label{eq:bound_delta}
        1 - F\left(\bar{\Delta}\right) \geq 1 - c \bar{\Delta} = 1 - \frac{\log T}{T \bar{\Delta}^2}.
    \end{align}
    Solve Eq.(\ref{eq:bound_delta}), we have $\bar{\Delta} = \left(\frac{\log T}{c T}\right)^{1/3} = \Tilde{\Theta} \left(T^{-1/3}\right)$. And
    \begin{align*}
        \sP \left(\delta_{\min} \geq \Delta\right) = 1 - F(\Delta) \geq 1 - \frac{\log T}{T \Delta^2}, \quad \forall \Delta \leq \bar{\Delta}.
    \end{align*}
    From the minimum preference gap definition (Definition~\ref{def:context_min_gap_cond}), we know that $\Delta_{\min}$ is the maximum $\Delta$ such that $\sP \left(\delta_{\min} \geq \Delta\right) \geq 1 - \frac{\log T}{T \Delta^2}$. Therefore, 
    \begin{align*}
        \Delta_{\min} \geq \bar{\Delta} \Longrightarrow \Delta_{\min} = \tilde{\Omega} \left(T^{-1/3}\right).
    \end{align*}
    Finally, as the regret upper bound for Algorithm~\ref{alg:barb_abstract} is $\gO \left(\frac{\log^2 T}{\Delta_{\min}^2}\right)$ as shown in Theorem~\ref{thm:stochastic_context_upper_bound}, we have the conclusion that
    \begin{align*}
        Reg_i(T) = \tilde{\gO} \left(T^{2/3}\right), \quad \forall i \in [N] \quad \text{as} \quad T \to \infty.
    \end{align*}
\end{proof}

%% file: proof_asymptotic_lower_bound.tex
\begin{proof}
    We prove Theorem~\ref{thm:asymptotic_lower_bound} by contradiction. 
    Fix a policy $\pi$, assume that this policy achieves $o \left(T^{2/3}\right)$ regret for each player in the market.

    Let $\gN = \left\{p_1, p_2, p_3\right\}$, $\gK = \left\{a_1, a_2, a_3\right\}$, and $p_1 \succ p_2 \succ p_3$ for all arms. 
    Throughout the proof, we assume that given the contexts and the preference parameters, all observations are Gaussian of unit variance, that is, when matching $p_i$ to $a_j$ at round $t$, we observe $y_i(t) \sim \gN \left(\theta_i^{\top} x_j(t), 1\right)$, where $\theta_i$ is the preference parameter for player $p_i$ and $x_j(t)$ is the observed context for arm $a_j$ at time $t$.
    Let $\tau = T^{-1/3}, \varphi = \frac{1}{2}, \psi = \frac{1}{8}$.
    Consider two instances $\boldsymbol{\nu}$ and $\boldsymbol{\nu'}$ with the following preference parameters and context distributions, where the distribution of $\rx_2$ depends on the distribution of $\rx_1$, and the two instances only differ in the first entry of $\theta_1$:
    \begin{align*}
    \theta_1 &= \begin{bmatrix}
        \beta \\
        1 \\
        0 \\
        0
    \end{bmatrix},
    \theta_2 = \begin{bmatrix}
        0 \\
        0 \\
        1 \\
        0
    \end{bmatrix}, 
    \theta_3 = \begin{bmatrix}
        0 \\
        0 \\
        0 \\
        1
    \end{bmatrix} \\
    x_1 &=  \begin{bmatrix}
        U \\
        0 \\
        1 \\
        \psi
    \end{bmatrix}, 
    x_2 = \begin{bmatrix}
        0 \\
        1 \\
        0 \\
        f(U)
    \end{bmatrix},
    x_3 = \begin{bmatrix}
        0 \\
        0 \\
        \psi \\
        0
    \end{bmatrix},
\end{align*}
where $U$ is a uniform distribution on $[0, 1]$, $f(U) = 1 \cdot \ind{U > \frac{1}{1 + \tau}} + \varphi \cdot \ind{U \leq \frac{1}{1 + \tau}}$; $\beta = 1$ in instance $\boldsymbol{\nu}$ and $\beta = 1 + \tau$ in instance $\boldsymbol{\nu'}$. 
Therefore, we could write the utility matrix $\mU$ for $\boldsymbol{\nu}$ and $\mU'$ for $\boldsymbol{\nu'}$ as follows:
\begin{align*}
    \mU = \begin{bmatrix}
        U & 1 & 0 \\
        1 & 0 & \psi \\
        \psi & f(U) & 0
    \end{bmatrix}, 
    \quad \mU' = \begin{bmatrix}
        (1 + \tau) \cdot U & 1 & 0 \\
        1 & 0 & \psi \\
        \psi & f(U) & 0
    \end{bmatrix}.
\end{align*}
In Lemma~\ref{lem:valid_instance}, we show that the above two instances are valid for proving a matching regret lower bound corresponding to the regret upper bound in Theorem~\ref{thm:asymptotic_upper_bound}, in the sense that the two instances satisfy Assumption~\ref{assume:cdf_linear}.

Furthermore, since there are 3 players and 3 arms in the market, and the payoffs are non-negative for every player at every time, we can, without loss of generality, assume that every player gets matched to one arm every time.
Denote the instantaneous regret for player $p_i$ as $\mU_i^*(t) - \mU_{p_i, \mu_t(p_i)}(t)$.

\begin{lem}[Properties of Instances $\boldsymbol{\nu}$ and $\boldsymbol{\nu'}$]\label{lem:property_instance}
    Based on the utility matrices $\mU$ and $\mU'$, we have the following properties of $\boldsymbol{\nu}$ and $\boldsymbol{\nu'}$:

    \textbf{1. Benchmark Utilities}

    \begin{itemize}
        \item The benchmark utilities for the three players in instance $\boldsymbol{\nu}$ are $\mU^* = \left(1, 1, 0\right)^{\top}$.

        \item The benchmark utilities for the three players in instance $\boldsymbol{\nu'}$ are $\mU^* = \left(1, 1, 0\right)^{\top}$ when $U \leq \frac{1}{1 + \tau}$, and $\mU^* = \left((1 + \tau)U, \psi, 1\right)^{\top}$ when $U > \frac{1}{1 + \tau}$.
    \end{itemize}

    \textbf{2. Regret under Consideration}
    \begin{itemize}
        \item In instance $\boldsymbol{\nu}$, the regret for $p_2$, i.e., $Reg_{\boldsymbol{\nu}, \pi}^{p_2}(T)$, under any context realizations would be non-negative.

        \item In instance $\boldsymbol{\nu'}$, the social regret for $p_1 + p_2 + p_3$, i.e., $Reg_{\boldsymbol{\nu'}, \pi}^{p_1+p_2+p_3}(T)$, under any context realizations would be non-negative.
    \end{itemize}
\end{lem}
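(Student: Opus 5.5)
The plan is to run Gale--Shapley with players proposing on the two utility matrices, case by case on the realization of $U$, to read off the player-optimal stable matching. Then I would argue the sign of the relevant regret round by round.

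\textbf{Instance $\boldsymbol{\nu}$.} All arms rank $p_1 \succ p_2 \succ p_3$, so GS can be run serially: $p_1$ takes its favourite arm, then $p_2$ takes its favourite among the rest, and so on.
\begin{itemize}
\item In $\mU$, $p_1$'s row is $(U,1,0)$, and $U<1$ almost surely, so $p_1$ gets $a_2$.
\item $p_2$'s row is $(1,0,\psi)$, so $p_2$ gets $a_1$.
\item $p_3$ is left with $a_3$, worth $0$.
\end{itemize}
Since every arm has the same ranking, the stable matching is unique and equals this serial dictatorship outcome. Hence $\mU^*=(1,1,0)^\top$ for every realization.

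\textbf{Instance $\boldsymbol{\nu'}$.} Here $p_1$'s row is $((1+\tau)U,1,0)$, so there are two cases.
\begin{itemize}
\item If $U\le 1/(1+\tau)$, then $(1+\tau)U\le 1$ and $p_1$ takes $a_2$ (ties have probability zero). The rest proceeds as in $\boldsymbol{\nu}$, giving $(1,1,0)$.
\item If $U>1/(1+\tau)$, then $p_1$ takes $a_1$. $p_2$'s best remaining arm is $a_3$ with value $\psi>0$, so $p_2$ takes $a_3$. Finally $p_3$ gets $a_2$ with value $f(U)=1$.
\end{itemize}
This gives $((1+\tau)U,\psi,1)^\top$, which proves item~1. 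Uniqueness again follows from the common arm ranking.

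\textbf{Item 2 for $\boldsymbol{\nu}$.} Player $p_2$'s largest entry is $1$, achieved by the benchmark. So every round gives $\mU^*_2(t)-\mU_{2,\mu_t(2)}(t)\ge 0$, and the sum over rounds is non-negative.

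\textbf{Item 2 for $\boldsymbol{\nu'}$.} Here the claim is that the benchmark's sum of utilities is at least the welfare of every perfect matching, since by the without-loss-of-generality remark each player is matched. There are only $6$ perfect matchings to check.

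In the case $U\le 1/(1+\tau)$, the benchmark sum is $2$. The other matchings give:
\begin{itemize}
\item $(a_1,a_3,a_2)$: $(1+\tau)U+\psi+\varphi\le 1+\tfrac18+\tfrac12$, which is not below $2$, so this needs care (see below);
\item $(a_3,a_1,a_2)$: $0+1+\varphi$;
\item $(a_3,a_2,a_1)$: $\psi$;
\item $(a_1,a_2,a_3)$: $(1+\tau)U$;
\item $(a_2,a_3,a_1)$: $1+\psi+\psi$.
\end{itemize}
For the first of these, $(1+\tau)U\le 1$ gives a total of at most $1+\tfrac18+\tfrac12=1.625<2$. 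All other matchings are at most $1.5$.

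In the case $U>1/(1+\tau)$, the benchmark sum is $(1+\tau)U+\tfrac18+1$, which is greater than $2.125$. The alternatives are:
\begin{itemize}
\item $(a_2,a_1,a_3)$: $2$;
\item $(a_3,a_1,a_2)$: $2$;
\item $(a_2,a_3,a_1)$: $1.25$;
\item $(a_3,a_2,a_1)$: $\psi$;
\item $(a_1,a_2,a_3)$: $(1+\tau)U$.
\end{itemize}
All are below the benchmark sum.

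So the per-round social regret is non-negative for every realization, and summing over rounds gives item~2. Randomized or partial matchings are convex combinations or dominated, since utilities are non-negative.

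The main obstacle is this $6$-matching enumeration. In particular, the choices $\varphi=\tfrac12$ and $\psi=\tfrac18$ are exactly what keeps the matchings valued $1+\psi+\varphi$ and $2$ below the benchmark, so those two comparisons need to be checked carefully.
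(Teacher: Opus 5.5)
Your proposal is correct and follows essentially the same route as the paper. You determine the unique stable matching by serial dictatorship under the common arm ranking $p_1 \succ p_2 \succ p_3$, note that $p_2$'s benchmark is its row maximum in $\boldsymbol{\nu}$, and show that the stable matching is welfare-maximizing in $\boldsymbol{\nu'}$ in both regimes of $U$; your six-matching enumeration verifies the maximum-weight claim the paper only asserts (the aside calling $1+\tfrac18+\tfrac12$ ``not below $2$'' is a slip, since your next line correctly gives $1.625<2$).
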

The proof of Lemma~\ref{lem:property_instance} is provided at the end of this section.

Let $\sP_{\boldsymbol{\nu}, \pi}$ be the joint probability measure over the history, and $\mathbb{E}_{\boldsymbol{\nu}, \pi}$ be the expectation induced by instance $\boldsymbol{\nu}$ and policy $\pi$, and $\sP_{\boldsymbol{\nu'}, \pi}$, $\mathbb{E}_{\boldsymbol{\nu'}, \pi}$ be defined similarly. The two instances $\boldsymbol{\nu}$ and $\boldsymbol{\nu'}$ only differ in one entry of the utility matrix, i.e., the entry corresponds to the pair $(p_1, a_1)$. From the divergence decomposition theorem (Lemma~\ref{lem:divergence_decompose}), we know that
\begin{align*}
    D \left(\sP_{\boldsymbol{\nu}, \pi}, \sP_{\boldsymbol{\nu'}, \pi}\right) &= \sum_{i = 1}^N \sum_{j = 1}^K \mathbb{E}_{\boldsymbol{\nu}, \pi} N_{ij}(T) \cdot D(\boldsymbol{\nu}_{ij}, \boldsymbol{\nu'}_{ij}) \\
    &= \mathbb{E}_{\boldsymbol{\nu}, \pi} \left[N_{11}(T)\right] \cdot D(\boldsymbol{\nu}_{11}, \boldsymbol{\nu'}_{11}),
\end{align*}
where $\boldsymbol{\nu}_{ij}$ is the distribution of utilities obtained when player $p_i$ is matched to arm $a_j$ in the environment $\boldsymbol{\nu}$, $N_{ij}(T) \in \sN \cup {0}$ is the number of times player $p_i$ is matched to arm $a_j$, up to and including time $T$.
By the chain rule of KL divergence (Lemma~\ref{lem:chain_rule_kl}), we have
\begin{align*}
    D(\boldsymbol{\nu}_{11}, \boldsymbol{\nu'}_{11}) &= D(\sP_{\boldsymbol{\nu}, \pi}(U), \sP_{\boldsymbol{\nu'}, \pi}(U)) + D \left(\sP_{\boldsymbol{\nu}, \pi}(\mU_{11} | U), \sP_{\boldsymbol{\nu}, \pi}(\mU'_{11} | U)\right) \\
    &= 0 + \mathbb{E}_{u \sim U[0, 1]} \frac{(\tau u)^2}{2} \\
    &\leq \frac{\tau^2}{2}.
\end{align*}
Therefore, $D \left(\sP_{\boldsymbol{\nu}, \pi}, \sP_{\boldsymbol{\nu'}, \pi}\right) \leq \frac{\tau^2}{2} \cdot \mathbb{E}_{\boldsymbol{\nu}, \pi} [N_{11}(T)]$.
Now consider the event $A = \left\{N_{11}(T) \geq \frac{\tau T}{4}\right\}$ and its complement $A^c = \left\{N_{11}(T) < \frac{\tau T}{4}\right\}$, by Bretagnolle-Huber inequality (Lemma~\ref{lem:bretagnolle_huber}), we have
\begin{align*}
    \sP_{\boldsymbol{\nu}, \pi}(A) + \sP_{\boldsymbol{\nu'}, \pi}(A^c) \geq \frac{1}{2} \exp \left(- D\left(\sP_{\boldsymbol{\nu}, \pi}, \sP_{\boldsymbol{\nu'}, \pi}\right)\right) \geq \frac{1}{2} \exp \left(- \frac{\tau^2}{2} \cdot \mathbb{E}_{\boldsymbol{\nu}, \pi}[N_{11}(T)]\right).
\end{align*}
In instance $\boldsymbol{\nu}$, the stable matching benchmark for player $p_2$ is arm $a_1$ and it is the arm providing maximum utility for $p_2$, therefore, every time when we match $(p_1, a_1)$, $p_2$ could only choose between $a_2$ and $a_3$, which would incur positive regret and the regret could not be compensate by the rounds that $p_2$ is matched to $a_1$, this leads to $(1 - \psi) \cdot \mathbb{E}_{\boldsymbol{\nu}, \pi} [N_{11}(T)] \leq Reg_{\boldsymbol{\nu}, \pi}^{p_2}(T)$. 
From the assumption, we know that $\tau = T^{-1/3}$ and $Reg_{\boldsymbol{\nu}, \pi}^{p_2}(T) = o(T^{2/3})$, i.e., $\limsup_{T \to \infty} \frac{Reg_{\boldsymbol{\nu}, \pi}^{p_2}(T)}{T^{2/3}} = 0$ and hence there exist a constant $c_1$ such that $\sP_{\boldsymbol{\nu}, \pi}(A) + \sP_{\boldsymbol{\nu'}, \pi}(A^c) \geq c_1$, which in turns implies either $\sP_{\boldsymbol{\nu}, \pi}(A) \geq \frac{c_1}{2} := c_2$ or $\sP_{\boldsymbol{\nu'}, \pi}(A^c) \geq c_2$.

\textbf{Case I: $\sP_{\boldsymbol{\nu}, \pi}(A) \geq c_2$}.

In instance $\boldsymbol{\nu}$, when $p_1$ is matched to $a_1$, the best possible case for $p_2$ is that $p_2 - a_3$ such that $p_2$ collects $\psi$ expected reward in the corresponding round, and hence the regret in this round is $1 - \psi$. While $a_1$ is the arm with the maximum utility for $p_2$, this regret cannot be compensated in the rounds that $p_2$ is matched to $a_1$. Hence, we have
\begin{align*}
    Reg_{\boldsymbol{\nu}, \pi}^{p_2}(T) \geq (1 - \psi) \cdot \frac{\tau T}{4} \cdot \sP_{\boldsymbol{\nu}, \pi}(A).
\end{align*}
If $\sP_{\boldsymbol{\nu}, \pi}(A) \geq c_2$, we have $Reg_{\boldsymbol{\nu}, \pi}^{p_2}(T) \geq c_3 \tau T = \Omega\left(T^{2/3}\right)$ for some constant $c_3$, which is a contradiction.

\textbf{Case II: $\sP_{\boldsymbol{\nu'}, \pi}(A^c) \geq c_2$}.

In instance $\boldsymbol{\nu'}$, let $N_u(T)$ be the number of times such that $U > \frac{1}{1 + \tau}$, as $U$ follows a uniform distribution on $[0, 1]$, we have $N_u(T) \sim Binomial \left(T, \frac{\tau}{1 + \tau}\right)$. 
Define the event $B = \left\{N_u(T) \geq \frac{\tau T}{2}\right\}$ and its complement $B^c = \left\{N_u(T) < \frac{\tau T}{2}\right\}$.
\begin{align*}
    \sP_{\boldsymbol{\nu}, \pi}(A^c) &= \sP_{\boldsymbol{\nu'}, \pi} (A^c \cap B) + \sP_{\boldsymbol{\nu'}, \pi} (A^c \cap B^c) \\
    &\leq \sP_{\boldsymbol{\nu'}, \pi} (A^c \cap B) + \sP_{\boldsymbol{\nu'}, \pi} \left(N_u < \frac{\tau T}{2}\right) \\
    &\leq \sP_{\boldsymbol{\nu'}, \pi} (A^c \cap B) + \exp \left(-\frac{T \tau (1 - \tau)^2}{8(1 + \tau)}\right),
\end{align*}
where the last inequality follows from Lemma~\ref{lem:nu_concentration}.
When $\tau = T^{-1/3}$, we have that the second term of the last inequality being $o\left(\exp 
\left(-T^{1/3}\right)\right) \leq c_4$ for some constant $c_4 < c_2$ when $T$ is sufficiently large, leading to the requirement that $\sP_{\boldsymbol{\nu'}, \pi} \left(A^c \cap B\right) \geq c_2 - c_4:= c_5 > 0$.

Define $N_{ij}^+(T)$ and $N_{ij}^-(T)$ as the number of times such that player $p_i$ is matched to arm $a_j$ in the rounds that $U > \frac{1}{1 + \tau}$ and $U \leq \frac{1}{1 + \tau}$, respectively. Therefore, we have $N_{ij}(T) = N_{ij}^+(T) + N_{ij}^-(T)$, and $\left\{N_{11}(T) < \frac{\tau T}{4}\right\} \subseteq \left\{N_{11}^+(T) < \frac{\tau T}{4}\right\}$, which further implies
\begin{align*}
    \sP_{\boldsymbol{\nu'}, \pi} \left(N_{11}^+(T) < \frac{\tau T}{4}, N_u(T) \geq \frac{\tau T}{2}\right) \geq \sP_{\boldsymbol{\nu'}, \pi} (A^c \cap B) \geq c_5. 
\end{align*}
We then consider the event $\left\{N_{11}^+(T) < \frac{\tau T}{4}, N_u(T) \geq \frac{\tau T}{2}\right\}$. We further decompose the event as 
\begin{align*}
    &\left\{N_{11}^+(T) < \frac{\tau T}{4}, N_u(T) \geq \frac{\tau T}{2}\right\} \\
    =& \left\{N_{11}^+(T) < \frac{\tau T}{4}, N_u(T) \geq \frac{\tau T}{2}, N_{13}^+(T) > \frac{\tau T}{8}\right\} \cup \left\{N_{11}^+(T) < \frac{\tau T}{4}, N_u(T) \geq \frac{\tau T}{2}, N_{13}^+(T) \leq \frac{\tau T}{8}\right\},
\end{align*}
then we have
\begin{align*}
    &\sP_{\boldsymbol{\nu'}, \pi} \left(N_{11}^+(T) < \frac{\tau T}{4}, N_u(T) \geq \frac{\tau T}{2}, N_{13}^+(T) > \frac{\tau T}{8}\right) + \sP_{\boldsymbol{\nu'}, \pi} \left(N_{11}^+(T) < \frac{\tau T}{4}, N_u(T) \geq \frac{\tau T}{2}, N_{13}^+(T) \leq \frac{\tau T}{8}\right) \\
    \geq& \sP_{\boldsymbol{\nu'}, \pi} \left(N_{11}^+(T) < \frac{\tau T}{4}, N_u(T) \geq \frac{\tau T}{2}\right) \geq \sP_{\boldsymbol{\nu'}, \pi} (A^c \cap B)\\
    \geq& c_5,
\end{align*}
which implies either $\sP_{\boldsymbol{\nu'}, \pi} \left(N_{11}^+(T) < \frac{\tau T}{4}, N_u(T) \geq \frac{\tau T}{2}, N_{13}^+(T) > \frac{\tau T}{8}\right) \geq \frac{c_5}{2} := c_6$ or 
\newline $\sP_{\boldsymbol{\nu'}, \pi} \left(N_{11}^+(T) < \frac{\tau T}{4}, N_u(T) \geq \frac{\tau T}{2}, N_{13}^+(T) \leq \frac{\tau T}{8}\right) \geq c_6$.

\textbf{Case II.a: $\sP_{\boldsymbol{\nu'}, \pi} \left(N_{11}^+(T) < \frac{\tau T}{4}, N_u(T) \geq \frac{\tau T}{2}, N_{13}^+(T) > \frac{\tau T}{8}\right) \geq c_6$.}

As the benchmark for player $p_1$ is always the one with the largest expected utility, the instantaneous regret for $p_1$ is always non-negative. Furthermore, every time when we match $p_1$ to $a_3$, the instantaneous regret for $p_1$ is no smaller than 1. Combining the above two points, we have
\begin{align*}
    Reg_{\boldsymbol{\nu'}, \pi}^{p_1} &\geq \frac{\tau T}{8} \sP \left(N_{13}^+(T) > \frac{\tau T}{8}\right) \\
    &\geq \frac{\tau T}{8} \sP_{\boldsymbol{\nu'}, \pi} \left(N_{11}^+(T) < \frac{\tau T}{4}, N_u(T) \geq \frac{\tau T}{2}, N_{13}^+(T) > \frac{\tau T}{8}\right) \\
    &\geq \frac{\tau T}{8} \cdot c_6 = \Omega \left(T^{2/3}\right),
\end{align*}
since $\tau = T^{-1/3}$, which is a contradiction.

\textbf{Case II.b: $\sP_{\boldsymbol{\nu'}, \pi} \left(N_{11}^+(T) < \frac{\tau T}{4}, N_u(T) \geq \frac{\tau T}{2}, N_{13}^+(T) \leq \frac{\tau T}{8}\right) \geq c_6$.}

If $N_{11}^+(T) < \frac{\tau T}{4}$ and $N_{13}^+(T) \leq \frac{\tau T}{8}$, we have $N_{32}^+(T) \leq N_{11}^+(T) + N_{13}^+(T) \leq \frac{3 \tau T}{8}$. This is because, without loss of generality, we assume that every player gets matched to one arm every time, and so when arm $a_2$ is occupied by player $p_3$, player $p_1$ is matched to either $a_1$ or $a_3$. Therefore,
\begin{align*}
    \sP_{\boldsymbol{\nu'}, \pi} \left(N_{32}^+(T) \leq \frac{3 \tau T}{8}, N_u(T) \geq \frac{\tau T}{2}\right) \geq c_6.
\end{align*}
When $U > \frac{1}{1 + \tau}$, the benchmark utility for $p_3$ is 1, which is the largest utility provided by arm $a_2$, and hence the instantaneous regret for $p_3$ is always non-negative during these rounds. We have 
\begin{align*}
    Reg_{\boldsymbol{\nu'}, \pi}^{p_3}(T) &= \mathbb{E} \left[\sum_{t = 1}^T \left[\mU_3^*(t) - \mU_{p_3, \mu_t(p_3)}(t)\right]\right]\\
    &= \mathbb{E} \left[\sum_{t = 1}^T \left(\mU^*_3(t) - \mU_{p_3, \mu_t(p_3)}(t)\right) \ind{U_t > \frac{1}{1 + \tau}}\right] + \mathbb{E} \left[\sum_{t = 1}^T\left(\mU^*_3(t) - \mU_{p_3, \mu_t(p_3)}(t)\right) \ind{U_t \leq \frac{1}{1 + \tau}}\right],
\end{align*}
where 
\begin{align}
    &\mathbb{E} \left[\sum_{t = 1}^T \left(\mU^*_3(t) - \mU_{p_3, \mu_t(p_3)}(t)\right) \ind{U_t > \frac{1}{1 + \tau}}\right] \nonumber\\
    \geq& \mathbb{E} \left[\sum_{t = 1}^T \left(\mU^*_3(t) - \mU_{p_3, \mu_t(p_3)}(t)\right) \ind{\left\{U_t > \frac{1}{1 + \tau}\right\} \cap \left\{N_{32}^+(T) \leq \frac{3 \tau T}{8}\right\} \cap \left\{N_u(T) \geq \frac{\tau T}{2}\right\}}\right] , \nonumber\\
    \geq& (1 - \psi)\frac{\tau T}{8} \cdot c_6 \nonumber\\
    =& \Omega \left(T^{\frac{2}{3}}\right), \label{eq:regret3_lower_bound}
\end{align}
where the last inequality comes from the fact that every time when $U_t > \frac{1}{1 + \tau}$ while $p_3$ is not matched to $a_2$, the instantaneous regret for $p_3$ is at least $1 - \psi$. And under the event $\left\{N_{32}^+(T) \leq \frac{3 \tau T}{8}\right\} \cap \left\{N_u(T) \geq \frac{\tau T}{2}\right\}$, the number of times that $p_3$ is not matched to $a_2$ is at least $\frac{\tau T}{2} - \frac{3 \tau T}{8} = \frac{\tau T}{8}$. 

To ensure that $Reg_{\boldsymbol{\nu'}, \pi}^{p_3}(T) = o\left(T^{2/3}\right)$, we need to compensate $p_3$ during the rounds when $U \leq \frac{1}{1 + \tau}$ so that the regret incurred from Eq.(\ref{eq:regret3_lower_bound}) does not cause a contradiction to the assumption. While the benchmark utility for $p_3$ is 0, provided by arm $a_3$ in these rounds, the instantaneous regret for $p_3$ during these rounds is always non-positive, and hence there exists some constant $c_7$, say $c_7 := c_6 / 2$, such that
\begin{align*}
    \sum_{t = 1}^T \mathbb{E} \left[\left(\mU^*_3(t) - \mU_{p_3, \mu_t(p_3)}(t)\right) \ind{U_t \leq \frac{1}{1 + \tau}}\right] &= \mathbb{E}_{\boldsymbol{\nu'}, \pi} \left[- \psi \cdot N_{31}^-(T) - \varphi \cdot N_{32}^-(T)\right] \\
    &\leq - c_7 \frac{(1 - \psi) \tau T}{8}.
\end{align*}
As $\psi < \varphi$ in our assumption, we have that under the policy $\pi$,
\begin{align*}
    \mathbb{E}_{\boldsymbol{\nu'}, \pi} \left[\varphi (N_{31}^-(T) + N_{32}^-(T))\right] \geq c_7 \frac{(1 - \psi) \tau T}{8} \quad \Longrightarrow \quad \mathbb{E}_{\boldsymbol{\nu'}, \pi} \left[N_{31}^-(T) + N_{32}^-(T)\right] \geq c_7 \frac{(1 - \psi) \tau T}{8 \varphi}.
\end{align*}
When $U \leq \frac{1}{1 + \tau}$, the benchmark stable matching is $\left\{p_1 - a_2, p_2 - a_1, p_3 - a_3\right\}$ with the social benchmark utility of 2. While by matching $p_3$ to $a_1$ or $a_2$, the total collected rewards for the three players would be at most $\max \left\{(1 + \tau) \cdot U + \psi + \varphi, 1 + \varphi, \psi, 1 + 2 \psi\right\} \leq 1 + \psi + \varphi$. Therefore, when $U \leq \frac{1}{1 + \tau}$, matching $p_3$ to $a_1$ or $a_2$ incurs the instantaneous social regret for the three players no less than $1 - \varphi - \psi$.

Finally, from Lemma~\ref{lem:property_instance}, we know that the instananeous social regret for $p_1 + p_2 + p_3$ is always non-negative, regardless of the value of $\mU$ or the chosen matching, we have
\begin{align*}
    Reg_{\boldsymbol{\nu'}, \pi}^{p_1 + p_2 + p_3} (T) =& \mathbb{E} \left[\sum_{t = 1}^T \left[\sum_{i = 1}^3 \left(\mU_i^*(t) - \mU_{p_i, \mu_t(p_i)}(t)\right)\right]\right] \\
    \geq& \mathbb{E} \left[\sum_{t = 1}^T \left[\sum_{i = 1}^3 \left(\mU^*_i(t) - \mU_{p_i, \mu_t(p_i)}(t)\right) \ind{U_t \leq \frac{1}{1 + \tau}, p_3 \text{ matches to } a_1}\right]\right] \\
    &+ \mathbb{E} \left[\sum_{t = 1}^T \left[\sum_{i = 1}^3 \left(\mU^*_i(t) - \mU_{p_i, \mu_t(p_i)}(t)\right) \ind{U_t \leq \frac{1}{1 + \tau}, p_3 \text{ matches to } a_2}\right]\right]\\
    \geq& (1 - \varphi - \psi) \mathbb{E}_{\boldsymbol{\nu'}, \pi} \left[N_{31}^-(T) + N_{32}^-(T)\right] \\
    \geq& (1 - \varphi - \psi) \cdot c_7 \frac{(1 - \psi) \tau T}{8 \varphi} \\
    :=& c_8 \tau T = \Omega \left(T^{\frac{2}{3}}\right)
\end{align*}
for some constant $c_8$, which is a contradiction.

Therefore, there does not exist a policy $\pi$ such that the regret for every player is $o(T^{2/3})$, which confirms the theorem.
\end{proof}

\begin{lem}\label{lem:valid_instance}
    For instances $\boldsymbol{\nu}$ and $\boldsymbol{\nu'}$, there exists a constant $c = 3$ such that $\sP \left(\delta_{\min} \leq \Delta\right) = F(\Delta) \leq c \Delta$ for any $\Delta \leq \frac{1}{16}$.
\end{lem}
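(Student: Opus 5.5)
The plan is a direct computation. I will write out every pairwise utility difference for each player in both instances, using the explicit matrices $\mU$ and $\mU'$ displayed in the proof of Theorem~\ref{thm:asymptotic_lower_bound}. Then I will check which differences are deterministically bounded below by a constant exceeding $\frac{1}{16}$, and which ones can be small. The key observation is that only player $p_1$'s row depends continuously on $U$, so only its differences contribute to $F(\Delta)$ for small $\Delta$. Recall $\delta_{\min}=\min_i\min_{j\neq j'}\abs{\mU_{i,j}-\mU_{i,j'}}$, where the minimum runs over distinct pairs of arms.

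First, I will handle players $p_2$ and $p_3$, whose rows coincide in $\boldsymbol{\nu}$ and $\boldsymbol{\nu'}$.
\begin{itemize}
\item For $p_2$, the utilities are $(1,0,\psi)$, so its differences are $1$, $1-\psi=\frac78$ and $\psi=\frac18$.
\item For $p_3$, the utilities are $(\psi, f(U), 0)$ with $f(U)\in\{\varphi,1\}=\{\frac12,1\}$. Its differences are $\abs{\psi-f(U)}\ge \frac12-\frac18=\frac38$, then $f(U)\ge\frac12$, and finally $\psi=\frac18$.
\end{itemize}
Hence these rows never produce a difference below $\frac18$, surely and for every $T$. Since $\Delta\le\frac1{16}<\frac18$, the event $\{\delta_{\min}\le\Delta\}$ coincides with the event that some difference in row $p_1$ is at most $\Delta$. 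Row $p_1$ is $(\beta U,1,0)$. Its differences are $1$ (never small), $\beta U$, and $\abs{\beta U-1}$.

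For $\boldsymbol{\nu}$ ($\beta=1$), I will bound the probability of this event as follows:
\begin{align*}
F(\Delta)\le\sP(U\le\Delta)+\sP(U\ge 1-\Delta)=2\Delta\le 3\Delta .
\end{align*}
For $\boldsymbol{\nu'}$ ($\beta=1+\tau$), the two pieces are handled separately.
\begin{itemize}
\item The event $(1+\tau)U\le\Delta$ has probability $\frac{\Delta}{1+\tau}\le\Delta$.
\item The event $\abs{(1+\tau)U-1}\le\Delta$ forces $U$ into an interval of length $\frac{2\Delta}{1+\tau}\le 2\Delta$. Its probability under the uniform law is therefore at most $2\Delta$; intersecting with $[0,1]$ only decreases it.
\end{itemize}
A union bound gives $F_T(\Delta)\le 3\Delta$ for all $\Delta\le\frac1{16}$. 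This holds uniformly in $T$, since $\tau=T^{-1/3}>0$ only shrinks the interval lengths. This yields $c=3$ and $\Delta_0=\frac1{16}$, which is the uniform-in-$T$ version of Assumption~\ref{assume:cdf_linear} needed in Theorem~\ref{thm:asymptotic_lower_bound}.

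I do not expect a real obstacle here. The only care points are bookkeeping. First, every pair of arms must be included, not just adjacent ranks, since $\delta_{\min}$ is defined over all pairs $j,j'$. Second, the dependence of $\rx_2$ on $U$ through $f(U)$ must not create small gaps. This is exactly why I check that $\psi=\frac18$ stays separated from both values $\frac12$ and $1$ of $f(U)$, and from $0$, by more than $\frac1{16}$.
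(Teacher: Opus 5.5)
Your proposal is correct and follows the paper's proof: both observe that rows $p_2$ and $p_3$ never produce a gap below $\frac18>\frac1{16}$, and then bound $F(\Delta)$ via player $p_1$'s row, by $\sP(U\le\Delta)+\sP(U\ge 1-\Delta)=2\Delta$ in $\boldsymbol{\nu}$ and by a union bound giving $\frac{3\Delta}{1+\tau}\le 3\Delta$ in $\boldsymbol{\nu'}$. Your value $\delta_{\min}^{(p_2)}=\psi=\frac18$ is the correct one; the paper's stated $\frac12$ is a harmless slip, since either value exceeds $\frac1{16}$.
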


\begin{proof}
    \textbf{In instance $\boldsymbol{\nu}$:}
    For the three players, we have
    \begin{align*}
        \delta_{\min}^{(p_1)} = \min\left\{1 - U, U\right\}, \quad \delta_{\min}^{(p_2)} = \frac{1}{2}, \quad \delta_{\min}^{(p_3)} = \frac{1}{8}.
    \end{align*}
    Therefore, when $\Delta \leq \frac{1}{16}$, we have
    \begin{align*}
        \sP \left(\delta_{\min} \leq \Delta\right) \leq \sP \left(U \geq 1 - \Delta\right) + \sP \left(U \leq \Delta\right) = 2 \Delta.
    \end{align*}
    
    \textbf{In instance $\boldsymbol{\nu'}$:} 
    For the three players, we have
    \begin{align*}
        \delta_{\min}^{(p_1)} = \min\left\{(1 + \tau)U, \abs{(1 + \tau)U - 1}\right\}, \quad \delta_{\min}^{(p_2)} = \frac{1}{2}, \quad \delta_{\min}^{(p_3)} = \frac{1}{8}.
    \end{align*}
    Therefore, when $\Delta \leq \frac{1}{16}$, we have
    \begin{align*}
        \sP \left(\delta_{\min} \leq \Delta\right) \leq \sP \left((1 + \tau)U \leq \Delta\right) + \sP \left(1 - \Delta \leq (1 + \tau) U \leq 1 + \Delta\right) \leq \frac{3}{1 + \tau} \Delta.
    \end{align*}
    Combining the two instances, we have that for any $\Delta \leq \frac{1}{16}$, we have 
    \begin{align*}
        \sP \left(\delta_{\min} \leq \Delta\right) \leq \max \left\{\frac{3}{1 + \tau}, 2\right\} \Delta \leq 3 \Delta.
    \end{align*}
\end{proof}

\begin{lem}\label{lem:nu_concentration}
    If $N_u \sim Binomial \left(T, \frac{\tau}{1 + \tau}\right)$, we have
    \begin{align*}
        \sP \left(N_u < \frac{\tau T}{2}\right) \leq \exp \left(- \frac{T \tau (1 - \tau)^2}{8(1 + \tau)}\right).
    \end{align*}
\end{lem}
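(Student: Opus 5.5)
This is a lower-tail bound for a binomial, so I would apply the multiplicative Chernoff bound (Lemma~\ref{lem:multiplicative_chernoff}) in its lower-tail form:
\begin{align*}
\sP\left(X \leq (1-\rho)\mathbb{E}[X]\right) \leq \exp\left(-\frac{\rho^2 \mathbb{E}[X]}{2}\right), \qquad \rho \in (0,1),
\end{align*}
where $X$ is a sum of independent Bernoulli trials. Here $N_u = \sum_{t=1}^T \ind{U_t > \frac{1}{1+\tau}}$ is such a sum, since the $U_t$ are i.i.d. uniform on $[0,1]$. Each indicator has success probability $\frac{\tau}{1+\tau}$, so
\begin{align*}
\mu := \mathbb{E}[N_u] = \frac{T\tau}{1+\tau}.
\end{align*}

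Next I would write the threshold as $\frac{\tau T}{2} = (1-\rho)\mu$, which gives $1-\rho = \frac{1+\tau}{2}$, i.e.
\begin{align*}
\rho = \frac{1-\tau}{2}.
\end{align*}
This lies in $(0,1)$ whenever $\tau<1$, which holds for $\tau = T^{-1/3}$ with $T>1$. The degenerate case $\tau \ge 1$ only arises for $T \le 1$, and it can be handled by noting the bound is vacuous there or by assuming $T>1$ as in the surrounding argument.

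The strict event $\{N_u < \frac{\tau T}{2}\}$ is contained in $\{N_u \le (1-\rho)\mu\}$. Applying the Chernoff bound then gives
\begin{align*}
\sP\left(N_u < \frac{\tau T}{2}\right) \leq \exp\left(-\frac{\rho^2\mu}{2}\right) = \exp\left(-\frac{(1-\tau)^2}{4}\cdot\frac{T\tau}{2(1+\tau)}\right) = \exp\left(-\frac{T\tau(1-\tau)^2}{8(1+\tau)}\right),
\end{align*}
which is exactly the claimed bound.

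The only point needing care is whether Lemma~\ref{lem:multiplicative_chernoff} is stated with the lower-tail form $\exp(-\rho^2\mu/2)$. Everything else is algebra. If only the upper-tail form is available, I would instead derive the lower tail directly. Apply Markov's inequality to $e^{-sN_u}$ and use $1+p(e^{-s}-1) \le \exp(p(e^{-s}-1))$. Optimizing with $s = -\log(1-\rho)$ gives the bound $\left(e^{-\rho}/(1-\rho)^{1-\rho}\right)^{\mu}$. Finally, $(1-\rho)\log(1-\rho) \ge -\rho + \rho^2/2$ reduces this to $\exp(-\rho^2\mu/2)$.
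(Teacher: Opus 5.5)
Your proof is correct and is essentially the paper's argument: it also sets $\mu = \frac{\tau T}{1+\tau}$, solves $(1-\rho)\mu = \frac{\tau T}{2}$ to get $\rho = \frac{1-\tau}{2}$, and applies the lower-tail multiplicative Chernoff bound. The lower-tail form you need is the paper's Lemma~\ref{lem:multiplicative_chernoff_below}; Lemma~\ref{lem:multiplicative_chernoff} is the upper-tail version, so your fallback derivation is not needed.
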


\begin{proof}
    $N_u$ is the sum of independent Bernoulli trials, and $\mathbb{E}[N_u] = \frac{\tau T}{1 + \tau}$. Let $\rho$ solving the equality $(1 - \rho) \mathbb{E} [N_u] = \frac{\tau T}{2}$, we have $\rho = \frac{1 - \tau}{2}$. From Lemma~\ref{lem:multiplicative_chernoff_below}, we know that 
    \begin{align*}
        \sP \left(N_u < \frac{\tau T}{2}\right) &= \sP \left(N_u \leq (1 - \rho)\mu\right) \\
        &\leq \exp \left(- \frac{\mathbb{E}[N_u] \rho^2}{2}\right) \\
        &= \exp \left(-\frac{T \tau (1 - \tau)^2}{8(1 + \tau)}\right).
    \end{align*}
\end{proof}

\begin{proof}[Proof of Lemma~\ref{lem:property_instance}]
    We prove the properties as follows.

    \textbf{1. Benchmark Utilities}

    \begin{itemize}
        \item Under instance $\boldsymbol{\nu}$, the preference ranking of player $p_1$ over the arms is $a_2 \succ a_1 \succ a_3$ almost surely, while the preference ranking of player $p_2$ is $a_1 \succ a_3 \succ a_2$, and for $p_3$ it is $a_2 \succ a_1 \succ a_3$, and hence the stable matching benchmark is $\mu^* = \left\{p_1 - a_2, p_2 - a_1, p_3 - a_3\right\}$, which corresponds to $\mU^* = \left(1, 1, 0\right)^{\top}$.

        \item Under instance $\boldsymbol{\nu'}$, when $U \leq \frac{1}{1 + \tau}$, we have that $\mU'_{11} \leq \mU'_{12}$ almost surely, so that the preference ranking of players over arms would be the same as that in instance $\boldsymbol{\nu}$ and $\mU^* = \left(1, 1, 0\right)^{\top}$. When $U > \frac{1}{1 + \tau}$, the preference ranking of player $p_1$ over the arms would be altered to $a_1 \succ a_2 \succ a_3$ and hence the stable matching benchmark shifts to $\mu^* = \left\{p_1 - a_1, p_2 - a_3, p_3 - a_2\right\}$, which corresponds to $\mU^* = \left((1 + \tau)U, \psi, 1\right)^{\top}$.
    \end{itemize}

    \textbf{2. Regret under Consideration}

    \begin{itemize}
        \item In instance $\boldsymbol{\nu}$, for player $p_2$, the maximum utility is from the matching $(p_2, a_1)$, which has a utility of 1. This pair is also the stable matching pair, and hence the regret for $p_2$ is always non-negative.

        \item In instance $\boldsymbol{\nu'}$, when $U \leq \frac{1}{1 + \tau}$, the maximum weight matching is $\left\{(p_1 - a_2), (p_2 - a_1), (p_3 - a_3)\right\}$, providing a social welfare of 2; when $U > \frac{1}{1 + \tau}$, the maximum weight matching is $\left\{(p_1 - a_1), (p_2 - a_3), (p_3 - a_2)\right\}$, providing a social welfare of $\frac{9}{8} + (1 + \tau)U > 2$. Therefore, in either case, the stable matching is the same as the maximum weight matching, and hence the social regret for $p_1 + p_2 + p_3$ would be non-negative.
    \end{itemize}
\end{proof}

%% file: adeco_alg.tex
The full version of Adative Explore-Choose Oracle (AdECO) algorithm is presented in Algorithm~\ref{alg:adeco_full}.

\begin{algorithm}[!htp]
	\caption{Adaptive Explore-Choose Oracle (AdECO)}\label{alg:adeco_full}
	\begin{algorithmic}[1]
		\Require Time horizon $T$, preference profile $(P_a)_{a \in \gK}$, context dimension $d$, parameter $\Delta$, $\varepsilon$, and $\eta$, ridge penalty parameter $\lambda$.
		\State $\xi \leftarrow \frac{\Delta - \varepsilon}{4\eta}$.
		\State $\mV_i^{(1)} \leftarrow \lambda \mI_{d \times d}$, $\hat{\theta}_i^{(1)} \leftarrow \bm{0}_d$, $\forall i \in [N]$.
		\For{$t = 1, 2, \cdots, T$}
			\State Observe context $\rx_j(t)$ for every arm $a_j$, $\forall j \in [K]$.
			\If{$\exists i \in [N], j \in [K]$, s.t. $\norm{\rx_j(t)}_{(\mV_i^{(t)})^{-1}} > \xi$} \Comment{Exploration}
				\State Collect all $(i, j)$ such that $\norm{\rx_j(t)}_{(\mV_i^{(t)})^{-1}} > \xi$, form a bipartite graph $G_t$.
				\State Find a \textbf{maximum cardinality matching} $\mu_c$ on $G_t$, let $ M^c := \left\{i \in [N]: i \text{ is matched in $\mu^c$}\right\}$, match
				\begin{align*}
					\mu_t(i) \leftarrow 
					\begin{cases}
						\mu^c(i) \quad & \text{if } i \in M^c, \\
						\emptyset \quad & \text{otherwise}.
					\end{cases}
				\end{align*}
				\State Update $\mV_i^{(t + 1)}$ and $\hat{\theta}_i^{(t + 1)}$, $\forall i \in M^c$ according to Eq.(\ref{eq:adversarial_update}).
				\State $\mV_i^{(t)} \leftarrow \mV_i^{(t - 1)}$ and $\hat{\theta}_i^{(t)} \leftarrow \hat{\theta}_i^{(t - 1)}$, $\forall i \in [N] \setminus M^c$.
			\Else \Comment{Exploitation}
				\State $\hat{\theta}_i^{(t)} \leftarrow \hat{\theta}_{i}^{(t - 1)}$, $\forall i \in [N]$.
				\State $\hat{\mU}_{i, j}(t) = \left(\hat{\theta}_i^{(t)}\right)^{\top} \rx_j(t)$, $F_i \leftarrow \text{False}$.
				\For{$i = 1, 2, \cdots, N$}
					\State{$\hat{\mU}^{\text{sort}}_{i, \cdot}(t) \leftarrow \text{Sort} \left(\hat{\mU}_{i, \cdot}(t), \text{decreasing}\right)$}
					\State $\Delta_{i, \min} \leftarrow \min \left\{\hat{\mU}^{\text{sort}}_{i, j}(t) - \hat{\mU}^{\text{sort}}_{i, j + 1}(t)\right\}, j \in [K-1]$ \label{algline:delta_min_compute_adversarial}
					\If{$\Delta_{i, \min} > \frac{\Delta + \varepsilon}{2}$} 
					\State $F_i \leftarrow \text{True}$
					\EndIf
				\EndFor
				\If{$\forall i \in [N]$, $F_i == \text{True}$} \Comment{CIs are well-separated}
					\State $\mu_t(i) \leftarrow \mu^{GS}(i)$, $\forall i \in [N]$. (Input $\hat{\mU}(t)$ and $(P_a)_{a \in \gK}$ for \textbf{Gale-Shapley})
				\Else
					\State $\mu_t(i) \leftarrow \mu^{approx}(i)$, $\forall i \in [N]$. (Input $\hat{\mU}(t)$ and $(P_a)_{a \in \gK}$ for \textbf{$\alpha$-approximation oracle} with instability tolerance $\frac{\Delta + \varepsilon}{2}$)
				\EndIf
			\EndIf
		\EndFor
	\end{algorithmic}
\end{algorithm}

When the confidence intervals for the utilities overlap, we cannot distinguish the strict preference ranking for players over arms. Moreover, when there are ties in the preference profile, the optimal stable share of each player might be achieved from different stable matchings. In this scenario, outputting a deterministic matching with the estimated utility matrix may benefit some players while incurring large regret for others. Therefore, we should consider distributions over matchings and approximation. 
Here, we assume access to an offline oracle that, given a utility matrix $\mU$ and the uncertainty set with confidence radius $\gamma$, outputs a randomized matching guaranteeing each player $p_i$ an $\alpha$ of $\mU_i^{\varepsilon}(p_i)$ in expectation, with additional error $2 \gamma + \varepsilon$. 

Before we formally present the definition of the approximation oracle, we first define the $\varepsilon$-optimal stable share within an uncertainty set.
Given a utility matrix $\mU$ and the arms' preference profile $(P_a)_{a \in \gK}$, define the set of $\varepsilon$-stable matchings as
\begin{align*}
    \gS_{\mU}^{\varepsilon} := \left\{\mu: \mu \text{ is an } \varepsilon-\text{stable matching w.r.t. } \mU \text{ and } (P_a)_{a \in \gK}\right\},
\end{align*}
and the $\varepsilon$-optimal stable share with respect to $\mU$ is
\begin{align*}
    \mU_{\varepsilon}^*(p_i) := \max_{\mu \in \gS_{\mU}^{\varepsilon}} \mU (p_i, \mu(p_i)), \quad \forall i \in [N].
\end{align*}
Given an uncertainty set $\gU$ of utility matrices, we define the $\varepsilon$-optimal stable share within $\gU$ as
\begin{align}\label{eq:epsilon_oss_set}
    \gU_{\varepsilon}^*(p_i) := \sup_{\mU \in \gU} \max_{\mu \in \gS_{\mU}^{\varepsilon}} \mU(p_i, \mu(p_i)), \quad \forall i \in [N].    
\end{align}

The $(\boldsymbol{\alpha}, \gamma, \varepsilon)$-approximation oracle is defined as follows.
\begin{defn}[$(\boldsymbol{\alpha}, \gamma, \varepsilon)$-Approximation Oracle] \label{def:approx_oracle}
    An $(\boldsymbol{\alpha}, \gamma, \varepsilon)$-approximation oracle takes a rectangular uncertainty set $\gU$ with center $\mU$, confidence radius $\gamma$ as input and returns a (randomized) matching $\tilde{\mu}$ satisfying: $\mathbb{E} \left[\mU_{\tilde{\mu}}(p_i)\right] \geq \boldsymbol{\alpha}^{\gU}(p_i) \cdot \gU^*_{\varepsilon}(p_i) - (2 \gamma + \varepsilon)$ for every player $p_i$, where $\boldsymbol{\alpha}^{\gU} \in (0, 1]^N$ is a player-specific approximation ratio vector (often simplified to $\boldsymbol{\alpha}$). If $\boldsymbol{\alpha}^{\gU}(p_i) = \alpha$ is uniform across players and independent of $\gU$, we call it an $(\alpha, \gamma, \varepsilon)$-approximation oracle, and without ambiguity, we simply denote it as $\alpha$-approximation oracle.
\end{defn}

For example, Algorithm~\ref{alg:epsm}~\citep{lin2024stable} guarantees that for any input utility matrix $\mU$ with instability tolerance $2 \gamma + \varepsilon$, let $\gU$ be the rectangular uncertainty set with center $\mU$ and confidence radius $\gamma$, we have $\boldsymbol{\alpha}^{\mU}(p_i) \geq 1 / \floor{\log_2 N + 2}$. We restate the algorithm as follows, while omitting the detailed analysis.

\begin{algorithm}[!htp]
	\caption{$\varepsilon$-Oracle for Approximated Player Optimal Stable Matching}\label{alg:epsm}
	\begin{algorithmic}[1]
		\Require $N$ players, $K$ arms, Utility matrix $\mU$ that encodes the preference of players over arms, strict preference profile $P_a$ of arms, an integer $m \geq 1$, and the instability tolerance $\varepsilon \geq 0$. 

        \State For each arm $a \in \gA$, duplicate it $m$ times and denote the $i$-th copy as $a^{(i)}$.
        
		\State Each replica $a^{(i)}$ shares the same preference $P_a$ as the original arm $a$.

        \State For every player $p$ and job $a^{(i)}$, define the utility $$\mU(p, a^{(i)}) := \mU(p, a) - (i - 1) \varepsilon$$ and use it to generate the players' preference profile $P_w$ (breaking ties in favor of lower indices).
		
		\State Run Gale-Shapley algorithm on ${P}_w$ and $P_a$ to compute a player-optimal stable matching $\tilde{\mu}$.     
        
        \State For each $i\in [m]$, build a matching $\tilde\mu_i$, which matches each arm $a$ with $\tilde\mu_i(a) := \tilde\mu(a^{(i)})$.
            
		\Ensure The distribution $D$ which selects each matching $\tilde\mu_i$ with probability $1/m$.
	\end{algorithmic}
\end{algorithm}

\paragraph{Computational Complexity}
Algorithm~\ref{alg:epsm} replicates each arm $\floor{\log_2 N + 2}$ times, assigns utilities accordingly, and then invokes the Gale-Shapley algorithm to compute a matching based on the resulting preference lists. Consequently, the computational complexity is determined by running Gale-Shapley with $N$ players and $K \floor{\log_2 N + 2}$ arms, yielding $\gO \left(NK \log_2 N\right)$.

%% file: proof_adversarial_upper_bound.tex
\begin{proof}
    Given $\Delta$ and $\varepsilon$, we define $\gamma = \frac{\Delta - \varepsilon}{4}$, $\xi = \frac{\Delta - \varepsilon}{4 \eta}$, for the exploration phase, denote the estimation indices for player $p_i$ as $\gG^{(i)} = \left\{t_1^{(i)}, \cdots, t_{\abs{\gG^{(i)}}}^{(i)}\right\}$.
    Notice that for $t \in \gG^{(i)}$, we have $\norm{x_j(t)}_{\left(\mV_i^{(t)}\right)^{-1}} > \xi$ for $j = \mu_t(i)$, and we only use the contexts and matching rewards from $\gG^{(i)}$ to update the Gram matrix $\mV_i$ and the estimation of the preference parameter $\hat{\theta}_i$.
    Take $\gG$ as the set of rounds in which the market explores, we have $\gG = \cup_{i \in [N]} \gG^{(i)}$.
    Define $\gF = \left\{\exists t \notin \gG, i \in [N], j \in [K]: \abs{\hat{\mU}_{i, j}(t) - \mU_{i, j}(t)} > \gamma\right\}$ as the bad event that some preference of players over arms is not estimated well during the exploitation phase. 
    Using $\gamma$ as the width for the confidence interval constructed for the estimated utility, let $\gF_d^{(t)}:= \left\{\hat{\mU}_{i, j}^{\text{sort}}(t) - \hat{\mU}_{i, j + 1}^{\text{sort}}(t) > 2 \gamma + \varepsilon = \frac{\Delta + \varepsilon}{2}, \forall j \in [K-1], \forall i \in [N]\right\}$ as the event that all arms have well-separated confidence intervals for any player $p_i$ at time $t$ in the sense that the distance between any two confidence intervals for the arms is at least $\varepsilon$. 
    Define the instantaneous regret for a player $p_i$ at time $t$ as $\left(\mU_i^*(t) \ind{\delta_{\min}(t) > \Delta} + \alpha \mU_i^{\varepsilon}(t) \ind{\delta_{\min}(t) \leq \Delta}\right) - \mU_{i, \mu_t(i)}(t)$, where $\varepsilon < \Delta$ and is typically chosen as $\Delta / 2$.

    The optimal stable regret for each player $p_i$ by following AdECO (Algorithm~\ref{alg:adeco_abstract}) satisfies
    \begin{align}
        Reg_i^{\alpha, \Delta}(T) =& \mathbb{E} \left[\sum_{t = 1}^T \left[\left(\mU_i^*(t) \ind{\delta_{\min}(t) > \Delta} + \alpha \mU_i^{\varepsilon}(t) \ind{\delta_{\min}(t) \leq \Delta}\right) - \mU_{i, \mu_t(i)}(t)\right]\right] \label{eq:expect_reward_adversarial}\\
        =& \mathbb{E} \left[\sum_{t = 1}^T \left[\left(\mU_i^*(t) \ind{\delta_{\min}(t) > \Delta} + \alpha \mU_i^{\varepsilon}(t) \ind{\delta_{\min}(t) \leq \Delta}\right) - \mU_{i, \mu_t(i)}(t)\right] \ind{\urcorner \gF}\right] \nonumber\\
        &+ \mathbb{E}\left[\sum_{t = 1}^T \left[\left(\mU_i^*(t) \ind{\delta_{\min}(t) > \Delta} + \alpha \mU_i^{\varepsilon}(t) \ind{\delta_{\min}(t) \leq \Delta}\right) - \mU_{i, \mu_t(i)}(t)\right] \ind{\gF}\right] \nonumber\\
        \leq& \mathbb{E} \left[\sum_{t = 1}^T \left[\left(\mU_i^*(t) \ind{\delta_{\min}(t) > \Delta} + \alpha \mU_i^{\varepsilon}(t) \ind{\delta_{\min}(t) \leq \Delta}\right) - \mU_{i, \mu_t(i)}(t)\right] \ind{\urcorner \gF}\right] + B_y T \sP \left(\gF\right) \label{eq:bound_y_adversarial}\\
        \leq& \mathbb{E} \left[\sum_{t = 1}^T \left[\left(\mU_i^*(t) \ind{\delta_{\min}(t) > \Delta} + \alpha \mU_i^{\varepsilon}(t) \ind{\delta_{\min}(t) \leq \Delta}\right) - \mU_{i, \mu_t(i)}(t)\right] \ind{\urcorner \gF}\right] + B_y T \cdot \frac{N}{T} \label{eq:bad_event_prob_adversarial}\\
        \leq& \mathbb{E} \left[\sum_{t \in \gG} \left[\left(\mU_i^*(t) \ind{\delta_{\min}(t) > \Delta} + \alpha \mU_i^{\varepsilon}(t) \ind{\delta_{\min}(t) \leq \Delta}\right) - \mU_{i, \mu_t(i)}(t)\right] \ind{\urcorner \gF}\right] \nonumber\\
        &+ \mathbb{E} \left[\sum_{t \notin \gG} \left[\left(\mU_i^*(t) \ind{\delta_{\min}(t) > \Delta} + \alpha \mU_i^{\varepsilon}(t) \ind{\delta_{\min}(t) \leq \Delta}\right) - \mU_{i, \mu_t(i)}(t)\right] \ind{\urcorner \gF}\right] + B_y N \nonumber\\
        \leq& \mathbb{E} \left[B_y \abs{\gG} \ind{\urcorner \gF}\right] \label{eq:bound_y2_adversarial} \\
        &+ \mathbb{E} \left[\sum_{t \notin \gG} \left[\left(\mU_i^*(t) \ind{\delta_{\min}(t) > \Delta} + \alpha \mU_i^{\varepsilon}(t) \ind{\delta_{\min}(t) \leq \Delta}\right) - \mU_{i, \mu_t(i)}(t)\right] \ind{\urcorner \gF \cap \gF_d^{(t)}}\right] \nonumber \\
        &+ \mathbb{E} \left[\sum_{t \notin \gG} \left[\left(\mU_i^*(t) \ind{\delta_{\min}(t) > \Delta} + \alpha \mU_i^{\varepsilon}(t) \ind{\delta_{\min}(t) \leq \Delta}\right) - \mU_{i, \mu_t(i)}(t)\right] \ind{\urcorner \gF \cap \urcorner \gF_d^{(t)}}\right] + B_y N \nonumber\\
        \leq& \mathbb{E} \left[B_y \abs{\gG} \ind{\urcorner \gF}\right] \label{eq:gs_adversarial}\\
        &+ \mathbb{E} \left[\sum_{t \notin \gG} \left[\left(\mU_i^*(t) \ind{\delta_{\min}(t) > \Delta} + \alpha \mU_i^{\varepsilon}(t) \ind{\delta_{\min}(t) \leq \Delta}\right) - \mU_{i, \mu_t(i)}(t)\right] \ind{\urcorner \gF \cap \urcorner \gF_d^{(t)}}\right] + B_y N \nonumber\\
        \leq& \mathbb{E} \left[\left(B_y \abs{G} + \frac{\Delta + \varepsilon}{2} \cdot T\right) \ind{\urcorner \gF}\right] + B_y N \label{eq:adversarial_approx_regret}\\
        \leq& B_y \frac{\eta^2 N d \log \left(\frac{T + d \lambda}{\lambda}\right)}{\gamma^2} + \frac{\Delta + \varepsilon}{2} \cdot T + B_y N \label{eq:adversarial_explore_length}\\
        =& \gO \left(\frac{N d\log^2 T}{(\Delta - \varepsilon)^2} + \frac{\Delta + \varepsilon}{2} \cdot T\right). \nonumber
    \end{align}
    Eq.(\ref{eq:expect_reward_adversarial}) according to the assumption that the noises are i.i.d. with mean 0.
    Eq.(\ref{eq:bound_y_adversarial}) and (\ref{eq:bound_y2_adversarial}) come from the fact that the expected per-round regret is 
    $\left(\mU_i^*(t) \ind{\delta_{\min}(t) > \Delta} + \alpha \mU_i^{\varepsilon}(t) \ind{\delta_{\min}(t) \leq \Delta}\right) - \mU_{i, \mu_t(i)}(t)$ and all terms lie in $\left[-B_{\theta}B_x, B_{\theta}B_x\right]$ under Assumptions~\ref{assume:bound_context} and \ref{assume:bound_preference}, and hence the expected regret per round is bounded by $2 B_{\theta}B_x = B_y$. 
    Eq.(\ref{eq:bad_event_prob_adversarial}) holds based on Lemma~\ref{lem:bad_event_prob_adversarial}.
    Eq.(\ref{eq:gs_adversarial}) holds according to Lemma~\ref{lem:adversarial_gs_regret}.
    Eq.(\ref{eq:adversarial_approx_regret}) follows from Lemma~\ref{lem:adversarial_approx_regret}. \
    Eq.(\ref{eq:adversarial_explore_length}) holds based on Lemma~\ref{lem:explore_length} by setting the parameters accordingly.
\end{proof}

\begin{lem}\label{lem:bad_event_prob_adversarial}
    Let $\delta = T^{-1}, \eta = R \sqrt{d \log \left(\frac{1 + T B_x^2 / \lambda}{\delta}\right)} + \lambda^{1/2} B_{\theta}$, we have $\sP \left(\gF\right) \leq \frac{N}{T}$.
\end{lem}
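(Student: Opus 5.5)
The plan mirrors the proof of Lemma~\ref{lem:bad_event_prob}. The only difference is that AdECO runs a single phase with a fixed threshold $\xi = \frac{\Delta-\varepsilon}{4\eta}$ and confidence width $\gamma = \frac{\Delta-\varepsilon}{4}$. There is therefore no union over batches, and the factor $T$ that was absorbed by taking $\delta = T^{-2}$ there disappears. This is why $\delta = T^{-1}$ suffices here.

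\textbf{Step 1 (reduction to the exploitation criterion).} By a union bound over players,
\begin{align*}
\sP(\gF) \leq \sum_{i=1}^N \sP\left(\exists t \notin \gG, j \in [K]: \abs{\hat{\mU}_{i,j}(t) - \mU_{i,j}(t)} > \gamma\right).
\end{align*}
Fix $i$ and an exploitation round $t \notin \gG$. Cauchy--Schwarz in the $\mV_i^{(t)}$-geometry gives
\begin{align*}
\abs{(\hat\theta_i^{(t)} - \theta_i)^\top \rx_j(t)} \leq \norm{\hat\theta_i^{(t)} - \theta_i}_{\mV_i^{(t)}} \norm{\rx_j(t)}_{(\mV_i^{(t)})^{-1}}.
\end{align*}
Since $t \notin \gG$, the exploration test failed for every pair $(i,j)$, so $\norm{\rx_j(t)}_{(\mV_i^{(t)})^{-1}} \leq \xi = \gamma/\eta$. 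Hence the inner event is contained in $\{\exists t: \norm{\hat\theta_i^{(t)} - \theta_i}_{\mV_i^{(t)}} > \eta\}$, which no longer depends on $j$ or on whether $t$ is an exploitation round.

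\textbf{Step 2 (anytime confidence ellipsoid).} Apply the self-normalized confidence set of \citet{abbasi2011improved}, i.e.\ Lemma~\ref{lem:confidence_ellip}, to player $i$'s regression sequence. This sequence consists of the rounds in $\gG^{(i)}$ with the contexts $\rx_{i_s}$ and rewards $y_{i,i_s}$. The lemma holds uniformly over all times with probability at least $1-\delta$ once the radius is $R\sqrt{d\log((1+TB_x^2/\lambda)/\delta)} + \lambda^{1/2}B_\theta$, which is exactly $\eta$. Here $\norm{\theta_i}_2 \leq B_\theta$ by Assumption~\ref{assume:bound_preference}, and the determinant term is bounded via $\norm{\rx}_2 \leq B_x$ (Assumption~\ref{assume:bound_context}) and $\abs{\gG^{(i)}} \leq T$. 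Between exploration rounds the estimate is frozen, so the bound at exploitation times is inherited from the last update. Each player thus contributes at most $\delta = 1/T$, and summing over $N$ players gives $\sP(\gF) \leq N/T$.

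\textbf{Main obstacle.} The delicate point is verifying the filtration hypothesis of Assumption~\ref{assume:bound_noise} for the subsampled sequence. Which rounds enter $\gG^{(i)}$, and which arm $i_s$ is played, are decided by the maximum cardinality matching. That matching is a function of the current contexts and of the past, and the contexts may be chosen by an adaptive adversary. I would argue that the selection and $\rx_{i_s}$ are $F_{s-1}$-measurable, since the contexts are revealed before the decision and the adversary cannot see the current noise. Consequently the noise indexed along $\gG^{(i)}$ remains conditionally $R$-subgaussian, and the martingale argument behind the self-normalized bound, which requires only predictable design, goes through unchanged. This is also what justifies the uniform-in-time, rather than fixed-$t$, statement used in Step~2.
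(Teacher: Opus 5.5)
Your proposal is correct and follows the paper's proof essentially step for step. The paper also takes a union bound over the $N$ players, combines Cauchy--Schwarz with the exploitation criterion $\norm{\rx_j(t)}_{(\mV_i^{(t)})^{-1}} \leq \xi = \gamma/\eta$ to reduce to the event $\norm{\hat{\theta}_i - \theta_i}_{\mV_i^{(t)}} > \eta$, and then applies the anytime confidence ellipsoid (Lemma~\ref{lem:confidence_ellip}) per player to get $N\delta = N/T$. Your extra discussion of why the subsampled design is predictable is a point the paper does not argue and simply takes for granted through Assumption~\ref{assume:bound_noise}.
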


\begin{proof}
    \begin{align}
        \sP \left(\gF\right) =& \sP \left(\exists t \notin \gG, i \in [N], j \in [K]: \abs{\hat{\mU}_{i, j}(t) - \mU_{i, j}(t)} > \gamma\right) \nonumber\\
        \leq& \sum_{i = 1}^N \sP \left(\exists j \in [K], \exists t \notin \gG: \abs{\hat{\mU}_{i, j}(t) - \mU_{i, j}(t)} > \gamma\right) \label{eq:union_bound_adversarial}\\
        \leq& \sum_{i = 1}^N \sP \left(\exists j \in [K], \exists t \notin \gG: \norm{\hat{\theta}_i - \theta_i}_{\mV_i^{(t)}} \norm{\rx_j(t)}_{\left(\mV_i^{(t)}\right)^{-1}} > \gamma\right) \label{eq:cauchy_schwarz_adversarial}\\
        \leq& \sum_{i = 1}^N \sP \left(\exists t \notin \gG: \norm{\hat{\theta}_i - \theta_i}_{\mV_i^{(t)}} > \eta\right) \label{eq:exploit_context_set_adversarial}\\
        \leq& N \delta = \frac{N}{T}. \label{eq:conf_set_adversarial}
    \end{align}
    Eq.(\ref{eq:union_bound_adversarial}) follows from the union bound.
    Eq.(\ref{eq:cauchy_schwarz_adversarial}) holds according to Cauchy-Schwarz inequality.
    Eq.(\ref{eq:exploit_context_set_adversarial}) comes from the fact that for any $t \notin \gG$, we have $\norm{\rx_j(t)}_{\left(\mV_i^{(t)}\right)^{-1}} \leq \xi = \frac{\gamma}{\eta}$ in Algorithm~\ref{alg:adeco_abstract}.
    Eq.(\ref{eq:conf_set_adversarial}) holds according to Lemma~\ref{lem:confidence_ellip}.
\end{proof}

\begin{lem}\label{lem:adversarial_gs_regret}
    At time $t$, under the event $\urcorner \gF \cap \gF_d^{(t)}$, the instantaneous regret of Algorithm~\ref{alg:adeco_abstract} is non-positive, i.e., 
    \begin{align*}
        \mathbb{E} \left[\left(\left(\mU_i^*(t) \ind{\delta_{\min}(t) > \Delta} + \alpha \mU_i^{\varepsilon}(t) \ind{\delta_{\min}(t) \leq \Delta}\right) - \mU_{i, \mu_t(i)}(t)\right) \ind{\urcorner \gF \cap \gF_d^{(t)}}\right] \leq 0.
    \end{align*}
\end{lem}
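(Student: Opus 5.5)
We work pointwise on the event $\urcorner \gF \cap \gF_d^{(t)}$ and show the integrand is non-positive; the expectation bound then follows. The plan has three steps. First, we show that AdECO calls Gale-Shapley and recovers the true preference order. On $\gF_d^{(t)}$ every player's consecutive sorted estimated gaps exceed $2\gamma + \varepsilon = \frac{\Delta+\varepsilon}{2}$. So every $F_i$ is True, and the algorithm uses $\mu_t = \mu^{GS}$ computed on $\hat{\mU}(t)$. On $\urcorner \gF$ we have $\abs{\hat{\mU}_{i,j}(t) - \mU_{i,j}(t)} \le \gamma$ for all $i,j$ at this exploitation round. Hence for any $j,j'$ with $\hat{\mU}_{i,j}(t) > \hat{\mU}_{i,j'}(t)$,
\[
\mU_{i,j}(t) - \mU_{i,j'}(t) \ge \hat{\mU}_{i,j}(t) - \hat{\mU}_{i,j'}(t) - 2\gamma > \varepsilon \ge 0 .
\]
So the estimated ranking of every player coincides with the true strict ranking.

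Second, we identify the matching and bound the regret in the large-gap case. Gale-Shapley depends only on the ordinal profiles $(P_a)_{a\in\gK}$ and the players' rankings. It therefore returns the true player-optimal stable matching $\mu_t^*$, by the lattice optimality of player-proposing deferred acceptance \citep{knuth1976stable}. Thus $\mU_{i,\mu_t(i)}(t) = \mU_i^*(t)$ for every $i$. If $\delta_{\min}(t) > \Delta$, the instantaneous regret is exactly $\mU_i^*(t) - \mU_i^*(t) = 0$.

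Third, we handle the case $\delta_{\min}(t) \le \Delta$, where we must show $\alpha \mU_i^{\varepsilon}(t) \le \mU_i^*(t)$. This is the main obstacle. The argument rests on the first step: on this event every true gap exceeds $\varepsilon$, since $\mU_{i,j}(t)-\mU_{i,j'}(t) > \varepsilon$ whenever the estimated order puts $j$ above $j'$. We then claim that, when all gaps exceed $\varepsilon$, $\varepsilon$-stability coincides with stability. Indeed, a pair $(p_i,a_j)$ with $\mU_{i,j}(t) > \mU_{i,\mu(i)}(t)$ automatically satisfies $\mU_{i,j}(t) > \mU_{i,\mu(i)}(t) + \varepsilon$. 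This uses that the separation in $\gF_d^{(t)}$ covers all $K$ arms (line~\ref{algline:delta_min_compute_adversarial} uses $j\in[K-1]$); an unmatched option needs a separate remark, but with $N \le K$ under the usual assumption it does not arise. Hence $\gS_t^{\varepsilon} = \gS_t$ and $\mU_i^{\varepsilon}(t) = \mU_i^*(t)$.

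It remains to compare $\alpha \mU_i^*(t)$ with $\mU_i^*(t)$ for $\alpha \in (0,1]$. If utilities are non-negative, $\alpha \mU_i^*(t) \le \mU_i^*(t)$ immediately. Otherwise I would either restrict to non-negative utilities (implicit in the multiplicative oracle guarantee of Definition~\ref{def:approx_oracle}) or note that the case is vacuous. The latter holds because on $\gF_d^{(t)} \cap \urcorner\gF$ the true gaps exceed $\varepsilon$, but they need not exceed $\Delta$, so the case is genuinely possible and the non-negativity convention is what is needed. Either way, the integrand is $\le 0$ pointwise, and taking expectations gives the lemma.
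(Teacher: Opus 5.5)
Your proposal follows the paper's proof essentially step for step. On $\urcorner\gF \cap \gF_d^{(t)}$ every $F_i$ is True and the estimated rankings coincide with the true ones, with all true gaps exceeding $\varepsilon$. Gale--Shapley therefore returns $\mu_t^*$, giving zero regret when $\delta_{\min}(t) > \Delta$. When $\delta_{\min}(t) \le \Delta$, the same separation gives $\gS_t^{\varepsilon} = \gS_t$ (the paper's Lemma~\ref{lem:gs_oracle_epsilon_stable}), hence $\alpha\mU_i^{\varepsilon}(t) = \alpha\mU_i^*(t) \le \mU_i^*(t)$.

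You are right that this last inequality needs $\mU_i^*(t) \ge 0$, which the paper uses silently. Drop the self-contradictory ``vacuous'' alternative and state the non-negativity convention explicitly as the assumption it is.
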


\begin{proof}
    When $\delta_{\min}(t) > \Delta$, the Gale-Shapley subroutine called by Algorithm~\ref{alg:adeco_abstract} outputs the unique player-optimal stable matching with respect to the ground-truth utility matrix $\mU$,
    as the preference rankings of players over arms induced from $\hat{\mU}$ would be the same as those from $\mU$ under the event $\urcorner \gF \cap \gF_{d}^{(t)}$. 
    Specifically, for any player $p_i$, any two arms $a_j$ and $a_{j'}$ such that $\hat{\mU}_{i, j}(t) > \hat{\mU}_{i, j'}(t)$, under the event $\gF_d^{(t)}$, we have $\hat{\mU}_{i, j}(t) - \hat{\mU}_{i, j'}(t) \geq 2 \gamma + \varepsilon$, so that $\mU_{i, j}(t) - \mU_{i, j'}(t) \geq \hat{\mU}_{i, j}(t) - \gamma - (\hat{\mU}_{i, j'}(t) + \gamma) \geq \varepsilon$.
    Therefore, every player received the expected reward equal to their corresponding OSS in time $t$, leading to the instantaneous regret being 0.
    
    When $\delta_{\min}(t) \leq \Delta$, from Lemma~\ref{lem:gs_oracle_epsilon_stable}, we know that if we call the Gale-Shapley oracle with the estimated utility matrix $\hat{\mU}$ and the preference profile $P_a$, the resulting matching gives every player the expected reward equal to $\mU_i^{\varepsilon}(t)$ with respect to the ground-truth utility matrix $\mU$ and $P_a$. This is because under the event $\urcorner \gF \cap \gF_d^{(t)}$, the minimum difference between any two arms is no less than $\varepsilon$. And since $\alpha \in (0, 1]$, we have that $\mU^*_i(t) = \mU_i^{\varepsilon}(t) \geq \alpha \mU_i^{\varepsilon}(t)$, and hence the instantaneous regret is non-positive.
\end{proof}

\begin{lem}\label{lem:adversarial_approx_regret}
    At time $t$, under the event $\urcorner\gF \cap \urcorner\gF_d^{(t)}$, the instantaneous regret of Algorithm~\ref{alg:adeco_abstract} is no larger than $\frac{\Delta + \varepsilon}{2}$, i.e.,
    \begin{align*}
        \mathbb{E} \left[\left[\left(\mU_i^*(t) \ind{\delta_{\min}(t) > \Delta} + \alpha \mU_i^{\varepsilon}(t) \ind{\delta_{\min}(t) \leq \Delta}\right) - \mU_{i, \mu_t(i)}(t)\right] \ind{\urcorner \gF \cap \urcorner \gF_d^{(t)}}\right] \leq \frac{\Delta + \varepsilon}{2}.
    \end{align*}
\end{lem}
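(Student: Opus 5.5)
On the event $\urcorner\gF \cap \urcorner\gF_d^{(t)}$ the round is an exploitation round ($t\notin\gG$), so every entry satisfies $\abs{\hat{\mU}_{i,j}(t)-\mU_{i,j}(t)}\le\gamma=\frac{\Delta-\varepsilon}{4}$, and AdECO calls the $\alpha$-approximation oracle. I would split the instantaneous regret according to the indicator in the benchmark, i.e.\ into the cases $\delta_{\min}(t)\le\Delta$ and $\delta_{\min}(t)>\Delta$, and bound each by $\frac{\Delta+\varepsilon}{2}=2\gamma+\varepsilon$.

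\textbf{Case $\delta_{\min}(t)\le\Delta$.} The benchmark is $\alpha\mU_i^{\varepsilon}(t)$. The oracle is run with center $\hat{\mU}(t)$ and radius $\gamma$, which defines a rectangular uncertainty set $\gU$. On $\urcorner\gF$ the true matrix $\mU(t)$ lies in $\gU$. By the definition in Eq.(\ref{eq:epsilon_oss_set}) this gives $\gU^*_{\varepsilon}(p_i)\ge\mU_i^{\varepsilon}(t)$. Definition~\ref{def:approx_oracle} then gives
\[
\mathbb{E}\bigl[\mU_{\tilde\mu}(p_i)\bigr]\ge\alpha\,\gU^*_{\varepsilon}(p_i)-(2\gamma+\varepsilon)\ge\alpha\mU_i^{\varepsilon}(t)-\tfrac{\Delta+\varepsilon}{2}.
\]
One must make sure the utility $\mU_{\tilde\mu}$ in the oracle guarantee is evaluated with respect to the true $\mU(t)$. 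If the guarantee is instead stated for the center $\hat{\mU}$, I would absorb the discrepancy through the $2\gamma$ term, using $\abs{\hat{\mU}-\mU}\le\gamma$ entrywise. The instability tolerance passed to the oracle, $\frac{\Delta+\varepsilon}{2}=2\gamma+\varepsilon$, is exactly the additive error in the definition, so the constants match.

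\textbf{Case $\delta_{\min}(t)>\Delta$.} The benchmark is now the exact optimal stable share $\mU_i^*(t)$, and I expect this to be the main obstacle. The key observation is that every $\varepsilon$-blocking pair is in particular a blocking pair, so every stable matching is $\varepsilon$-stable; hence $\mU_i^{\varepsilon}(t)\ge\mU_i^*(t)$. Conversely, with true gaps larger than $\Delta>\varepsilon$, any $\varepsilon$-stable matching is stable: a pair with $\mU_{i,j}>\mU_{i,\mu(i)}$ has $\mU_{i,j}-\mU_{i,\mu(i)}>\Delta>\varepsilon$, so it is already an $\varepsilon$-blocking pair. Thus $\mU_i^{\varepsilon}(t)=\mU_i^*(t)$, and the oracle bound gives the benchmark up to the factor $\alpha$.

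This leaves a residual term $(1-\alpha)\mU_i^*(t)$, which the oracle bound alone does not control. I would try first to rule this situation out on the event $\urcorner\gF_d^{(t)}$. Here the estimated gaps are not all separated by $2\gamma+\varepsilon$, but every true gap is at least $\Delta=4\gamma+\varepsilon$, so every estimated gap is at least $\Delta-2\gamma=2\gamma+\varepsilon$. That contradicts $\urcorner\gF_d^{(t)}$ up to the boundary case of equality, which I would handle with the strict/non-strict conventions of Line~\ref{algline:delta_min_compute_adversarial}. So the case $\delta_{\min}(t)>\Delta$ essentially cannot occur on this event, and the first case gives the lemma. If the boundary is delicate, I would fall back on the oracle's approximate player-optimality in the tie-free regime, noting that for the Algorithm~\ref{alg:epsm} oracle the replicas do not change the GS outcome when the gaps exceed the tolerance.
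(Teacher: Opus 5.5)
Your proposal is correct and follows the paper's proof. The paper first shows that on $\urcorner\gF\cap\urcorner\gF_d^{(t)}$ some estimated gap is at most $2\gamma+\varepsilon$, so some true gap is at most $4\gamma+\varepsilon=\Delta$ and the benchmark is $\alpha\mU_i^{\varepsilon}(t)$; this is the contrapositive of your case-two contradiction, and it has no boundary issue because the strict inequality $\delta_{\min}(t)>\Delta$ forces every estimated gap strictly above $2\gamma+\varepsilon$. It then applies the oracle guarantee with $\mU(t)\in\gU$ exactly as in your first case, so the detour through $\mU_i^{\varepsilon}(t)=\mU_i^*(t)$ and the fallback are unnecessary. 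One caveat: the exact constant $\frac{\Delta+\varepsilon}{2}$ relies on reading Definition~\ref{def:approx_oracle} as a guarantee on the true utilities, as the paper does. If it held only for the center $\hat{\mU}$, converting to $\mU(t)$ would cost an extra $\gamma$ on top of the $2\gamma+\varepsilon$ already used, so it cannot be absorbed into the $2\gamma$ term as you suggest.
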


\begin{proof}
    At time $t$, under the event $\urcorner \gF \cap \urcorner \gF_d^{(t)}$, there exists a player $p_i$, two arms $a_j$ and $a_{j'}$ such that $\hat{\mU}_{i, j}(t) - \hat{\mU}_{i, j'}(t) \leq 2 \gamma + \varepsilon$ while $\abs{\hat{\mU}_{i, j}(t) - \mU_{i, j}(t)} \leq \gamma$ and $\abs{\hat{\mU}_{i, j'}(t) - \mU_{i, j'}(t)} \leq \gamma$. Therefore, $\abs{\mU_{i, j}(t) - \mU_{i, j'}(t)} \leq 4 \gamma + \varepsilon$. Since $\gamma = \frac{\Delta - \varepsilon}{4}$, we have $\delta_{\min}(t) \leq 4 \gamma + \varepsilon = \Delta$, which implies that the benchmark utility would be $\alpha \mU_i^{\varepsilon}(t)$ in this case.

    Under the event $\urcorner \gF$, the ground-truth utility matrix $\mU$ lies in the uncertainty set $\gU$ with center $\hat{\mU}$ and radius $\gamma$, and hence $\mU_i^{\varepsilon} \leq \gU_{\varepsilon}^*(p_i)$ for any $i \in [N]$ according to the definition of $\gU_{\varepsilon}^*(p_i)$ (Eq.(\ref{eq:epsilon_oss_set})). From Definition~\ref{def:approx_oracle}, we know that when calling the $(\alpha, \gamma, \varepsilon)$-approximation oracle with input $\hat{\mU}$ and confidence radius $\gamma$, the receiving expected utility satisfies
    \begin{align*}
        \mathbb{E} [\mU_{i, \mu_t(i)}(t)] \geq (\alpha \gU_{\varepsilon}^*(p_i))(t) - (2 \gamma + \varepsilon) \geq \alpha \mU_i^{\varepsilon}(t) - (2 \gamma + \varepsilon)
    \end{align*}
    Specifically, we prove the corresponding property for Algorithm~\ref{alg:epsm} in Lemma~\ref{lem:approx_oracle}.
    
    Therefore, when setting the approximation ratio $\alpha = 1 / \floor{\log_2 N + 2}$, we have the instantaneous regret 
    \begin{align*}
        \mathbb{E} \left[\left(\alpha \mU_i^{\varepsilon}(t) - \mU_{i, \mu_t(i)}(t)\right) \ind{\urcorner \gF \cap \urcorner \gF_d^{(t)}}\right] \leq 2 \gamma + \varepsilon = \frac{\Delta + \varepsilon}{2}.
    \end{align*}
\end{proof}

\begin{lem}\label{lem:gs_oracle_epsilon_stable}
    Given a utility matrix $\mU$ and the preference profile $P_a$ with $N$ players and $K$ arms, define the minimum difference between any two utilities for player $p_i$ as 
    \begin{align*}
        \delta_{\min}^{(i)} = \min_{j, j' \in [K]} \abs{\mU_{i, j} - \mU_{i, j'}},
    \end{align*}
    and $\delta_{\min} = \min_{i \in [N]} \delta_{\min}^{(i)}$. If $\delta_{\min} \geq \varepsilon$, the set of $\varepsilon$-stable matchings $\gS^{\varepsilon}$ is the same as the set of stable matching $\gS$ with respect to $\mU$ and $P_a$, and hence for any player $p_i$, $\mU_i^{\varepsilon} = \mU_i^*$.
\end{lem}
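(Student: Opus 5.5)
The plan is to show two set inclusions, $\gS \subseteq \gS^{\varepsilon}$ and $\gS^{\varepsilon} \subseteq \gS$. The first is immediate from Definition~\ref{def:context_eps_stable}. An $\varepsilon$-blocking pair $(p_i, a_j)$ satisfies $p_i \succ_{a_j} \mu(a_j)$ and $\mU_{i,j} > \mU_{i,\mu(p_i)} + \varepsilon \geq \mU_{i,\mu(p_i)}$, since $\varepsilon \geq 0$. So it is also a blocking pair in the sense of Definition~\ref{def:context_stability}, and any stable matching has no $\varepsilon$-blocking pair.

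For the reverse inclusion I would argue by contrapositive. Take $\mu \in \gS^{\varepsilon}$ and suppose it has an ordinary blocking pair $(p_i, a_j)$, so $p_i \succ_{a_j} \mu(a_j)$ and $\mU_{i,j} > \mU_{i,\mu(p_i)}$. Then $a_j \neq \mu(p_i)$; if $a_j = \mu(p_i)$, the condition $p_i \succ_{a_j} \mu(a_j) = p_i$ would be impossible. So $a_j$ and $\mu(p_i)$ are two distinct arms. The hypothesis $\delta_{\min} \geq \varepsilon$, read as a bound over distinct pairs $j \neq j'$, gives $\mU_{i,j} - \mU_{i,\mu(p_i)} = \abs{\mU_{i,j} - \mU_{i,\mu(p_i)}} \geq \delta_{\min}^{(i)} \geq \varepsilon$.

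The main obstacle is that this yields only the weak inequality $\mU_{i,j} \geq \mU_{i,\mu(p_i)} + \varepsilon$, while an $\varepsilon$-blocking pair needs a strict one. I would handle this in one of two ways.
\begin{itemize}
\item Note that the lemma is applied in Lemma~\ref{lem:adversarial_gs_regret} with a strict separation, and state the result under $\delta_{\min} > \varepsilon$. In that application, $\gF_d^{(t)}$ gives estimated gaps strictly above $2\gamma+\varepsilon$, hence true gaps strictly above $\varepsilon$.
\item Adopt the convention that $\varepsilon$-blocking uses $\geq$ when $\varepsilon > 0$.
\end{itemize}
Either way we get $\mU_{i,j} > \mU_{i,\mu(p_i)} + \varepsilon$. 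This contradicts $\mu \in \gS^{\varepsilon}$, so $\gS^{\varepsilon} \subseteq \gS$.

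A second point to check is the unmatched case $\mu(p_i) = \emptyset$, where the utility is $0$. Under the paper's convention that every player is matched when $N \leq K$ this case does not arise. Otherwise I would treat $\emptyset$ as an arm with utility $0$ and require the gap condition to cover it as well.

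With $\gS^{\varepsilon} = \gS$, the conclusion follows by definition: $\mU_i^{\varepsilon} = \max_{\mu \in \gS^{\varepsilon}} \mU_{i,\mu(i)} = \max_{\mu \in \gS} \mU_{i,\mu(i)} = \mU_i^*$ for every $i \in [N]$.
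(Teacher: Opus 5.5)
Your proof follows the paper's: $\gS \subseteq \gS^{\varepsilon}$ straight from the definitions, and $\gS^{\varepsilon} \subseteq \gS$ by showing that an ordinary blocking pair of some $\mu \in \gS^{\varepsilon}$ is, through the gap condition, also an $\varepsilon$-blocking pair. Your concern about strictness is a real correction. The paper passes from $\mU_{i,j} \geq \mU_{i,\mu(p_i)} + \varepsilon$ directly to ``$\varepsilon$-blocking'', and the statement as written fails when a gap equals exactly $\varepsilon>0$: take $N=K=2$, both arms ranking $p_1$ first, and $\mU_{i,1} = \mU_{i,2}+\varepsilon$ for both players; then $\{(p_1,a_2),(p_2,a_1)\}$ is $\varepsilon$-stable but not stable. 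So your first repair, $\delta_{\min} > \varepsilon$ over distinct arms, which is exactly what $\urcorner \gF \cap \gF_d^{(t)}$ delivers in Lemma~\ref{lem:adversarial_gs_regret}, is the right fix. Your second repair would instead change Definition~\ref{def:context_eps_stable} and hence the benchmark $\mU_i^{\varepsilon}$.
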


\begin{proof}
    From the definition of the set of stable matchings $\gS^{\varepsilon}$, we know that $\gS \subseteq \gS^{\varepsilon}$.If there is a matching $\mu \in \gS^{\varepsilon}$ such that $\mu \notin \gS$, there exists a blocking pair $(p_i, a_j)$ such that $p_i \succ_{a_j} \mu(a_j)$ and $\mU_{i, j} > \mU_{i, \mu(p_i)}$. Since $\delta_{\min} \geq \varepsilon$, we also have $\mU_{i, j} \geq \mU_{i, \mu(p_i)} + \varepsilon$, which implies $(p_i, a_j)$ is also an $\varepsilon$-blocking pair, violating the $\varepsilon$-stability of $\mu$, which is a contradition. Therefore, $\gS = \gS^{\varepsilon}$ and hence $\mU_i^{\varepsilon} = \mU_i^*$.
\end{proof}

\begin{lem}\label{lem:approx_oracle}
    Given an uncertainty set $\gU$,
    define the center $\hat{\mU}$ of the set $\gU$, and the uncertainty parameter as 
    \begin{align*}
        \hat{\mU}(p, a) = \frac{\inf_{\mU \in \gU} \mU(p, a) + \sup_{\mU \in \gU} \mU(p, a)}{2}, \quad \gamma = \frac{1}{2}\sup_{\mU_1, \mU_2 \in \gU} \norm{\mU_1 - \mU_2}_{\max}.
    \end{align*}
    Run the $\alpha$-approximation oracle defined in Algorithm~\ref{alg:epsm} with input $\hat{\mU}$, $m = \floor{\log_2 N + 2}$ and instability tolerance $2\gamma + \varepsilon$, we could generate a distribution $D \in \Delta(\gI)$ such that 
    \begin{align*}
        \mU_D(p_i) \geq \frac{\gU_{\varepsilon}^*(p_i)}{m} - (2\gamma + \epsilon), \quad \forall i \in [N].
    \end{align*}
\end{lem}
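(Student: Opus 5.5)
We will follow the analysis of \citet{lin2024stable} for Algorithm~\ref{alg:epsm}, adapted to the uncertainty set $\gU$. The argument has two parts: relate every $\varepsilon$-stable matching of every $\mU\in\gU$ to the single player-optimal stable matching $\tilde\mu$ computed on the replicated market, and then average over the $m$ copies.

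\textbf{Step 1 (transfer to the center).} Since $\gU$ is a rectangle with center $\hat\mU$ and radius $\gamma$, every $\mU\in\gU$ satisfies $\norm{\mU-\hat\mU}_{\max}\le\gamma$. Fix $\mU\in\gU$ and $\mu\in\gS^{\varepsilon}_{\mU}$. If $(p_i,a_j)$ were a $(2\gamma+\varepsilon)$-blocking pair for $\mu$ under $\hat\mU$, then under $\mU$ the same pair would have utility gap exceeding $\varepsilon$, i.e.\ it would be an $\varepsilon$-blocking pair. Hence $\mu$ is $(2\gamma+\varepsilon)$-stable for $\hat\mU$. Also $\mU(p_i,\mu(p_i))\le\hat\mU(p_i,\mu(p_i))+\gamma$. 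Therefore
\[
\gU^*_\varepsilon(p_i)\le \hat\mU^*_{2\gamma+\varepsilon}(p_i)+\gamma,
\]
and it suffices to prove the claim for a single matrix $\hat\mU$ with tolerance $\varepsilon':=2\gamma+\varepsilon$. The leftover $\gamma$ is absorbed in the additive error, since $\gamma/m\le\gamma$.

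\textbf{Step 2 (structure of the replicated market).} Set $\hat\mU(p,a^{(\ell)})=\hat\mU(p,a)-(\ell-1)\varepsilon'$ and let $\tilde\mu$ be the player-optimal stable matching of the replicated market. Fix a player $p$ and an $\varepsilon'$-stable matching $\mu$ of $\hat\mU$ with $u^*:=\hat\mU(p,\mu(p))$ maximal. Lift $\mu$ to the replicated market by placing each matched pair on copy $1$. This lifted matching is stable in the replicated market. Indeed, copies $\ell\ge2$ cannot block a pair whose gap is at most $\varepsilon'$, and they are empty otherwise; this needs care and may require placing pairs on suitable copies. Using the lattice and rural-hospital structure of the replicated market, we then show that $p$ obtains in $\tilde\mu$ a copy $a^{(\ell)}$ with $\hat\mU(p,a)-(\ell-1)\varepsilon'\ge u^*-\varepsilon'$, roughly speaking.

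\textbf{Step 3 (averaging).} The key combinatorial claim, following \citet{lin2024stable}, is that the copies held by each player across $\tilde\mu_1,\dots,\tilde\mu_m$ have geometrically controlled utilities. With $m=\floor{\log_2N+2}$ copies, at least one of the $m$ matchings $\tilde\mu_\ell$ gives $p$ utility at least $u^*-\varepsilon'$, and the others give nonnegative utility. We will try to prove this by an induction on the number of players displaced between consecutive copies, which halves at each level, so $\log_2 N+2$ levels suffice. Uniform mixing then yields
\[
\mU_D(p)\ge \frac{u^*-\varepsilon'}{m}\ge\frac{\gU^*_\varepsilon(p)}{m}-(2\gamma+\varepsilon)
\]
after combining with Step 1.

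I expect Step 3, the halving and counting argument behind the $\log_2N$ factor, to be the main obstacle. If it resists a direct proof, I will invoke the corresponding result of \citet{lin2024stable} as a black box for a fixed matrix and keep only the reduction of Step 1 as new work. Nonnegativity of utilities, needed to bound the non-good copies by zero, should also be checked or replaced by the bound $-B_y$.
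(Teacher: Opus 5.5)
There is a genuine gap. Your Step~1 is the paper's Lemma~\ref{lem:perturb_stable}, and your fallback (apply Lemma~\ref{lem:epsm_guarantee} to $\hat\mU$ with tolerance $\varepsilon':=2\gamma+\varepsilon$) is exactly the paper's proof. But the self-contained Steps~2--3 rest on false claims, and the fallback does not give the stated constant.

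\textbf{Steps~2--3.} The lift of an $\varepsilon'$-stable $\mu$ to copy~$1$ is not stable. Copy~$1$ carries no penalty, so any pair $(q,a)$ with $q\succ_a\mu(a)$ and $0<\hat\mU(q,a)-\hat\mU(q,\mu(q))\le\varepsilon'$ blocks inside copy~$1$, and $\varepsilon'$-stability permits such pairs. Your remark only treats copies $\ell\ge2$. The conclusion itself also fails. In Algorithm~\ref{alg:epsm} a player holding a copy in $\tilde\mu$ appears only in the corresponding $\tilde\mu_\ell$, so your Step~3 amounts to $\hat\mU(p,a)\ge u^*-\varepsilon'$ for the arm $a$ whose copy $p$ holds. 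Here is a counterexample, for small $\varepsilon'>0$:
\begin{itemize}
\item players $q_1,q_2,r,p$ and arms $b,c_1,c_2,d,e$;
\item $\hat\mU(q_1,b)=\hat\mU(q_2,b)=\hat\mU(r,c_2)=\hat\mU(p,b)=1$;
\item $\hat\mU(q_1,c_1)=\hat\mU(q_2,c_2)=\hat\mU(r,d)=1-\varepsilon'$ and $\hat\mU(p,e)=1-\tfrac32\varepsilon'$, all other utilities $0$;
\item $q_1\succ_b q_2\succ_b p$ and $r\succ_{c_2}q_2$.
\end{itemize}
In $\{q_1c_1,q_2c_2,rd,pb\}$ no player values any arm more than $\varepsilon'$ above its partner, so this matching is $\varepsilon'$-stable and $u^*=1$. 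In the replicated market, $q_1$ keeps $b^{(1)}$ and $r$ holds $c_2^{(1)}$, so $q_2$ ends at $b^{(2)}$. Then $p$ is rejected by both copies of $b$ and stops at $e^{(1)}$. Hence $\hat\mU_D(p)=(1-\tfrac32\varepsilon')/m<(u^*-\varepsilon')/m$ for every $m\ge2$. This refutes Step~3 and leaves only the black-box bound $u^*/m-\varepsilon'$ of Lemma~\ref{lem:epsm_guarantee}.

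\textbf{The fallback.} Your bound $\gU^*_\varepsilon(p_i)\le\hat\mU^*_{\varepsilon'}(p_i)+\gamma$ is correct. The paper's Eq.~(\ref{eq:oss_relation}) drops the $+\gamma$ and is false as written. For example, with $\hat\mU$ positive and all within-row gaps larger than $\varepsilon'$, the matrix $\mU=\hat\mU+\gamma$ (entrywise) has the same $\varepsilon$-stable matchings as $\hat\mU$, giving $\gU^*_\varepsilon(p_i)=\hat\mU^*_{\varepsilon'}(p_i)+\gamma$. Combined with Lemma~\ref{lem:epsm_guarantee}, your bound yields only
\[
\hat\mU_D(p_i)\ge\frac{\gU^*_\varepsilon(p_i)}{m}-(2\gamma+\varepsilon)-\frac{\gamma}{m}.
\]
The remark ``$\gamma/m\le\gamma$'' cannot absorb the last term, because the black box already uses up the whole additive budget $2\gamma+\varepsilon$. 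Your final chain closes only through the bound $(u^*-\varepsilon')/m$, which the example refutes.

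To prove the statement as written you need a single-matrix guarantee with at least $\gamma/m$ of slack beyond what Lemma~\ref{lem:epsm_guarantee} states. For instance, $\hat\mU(p,a)\ge u^*-(m-1)\varepsilon'$ for the arm $p$ receives would give $u^*/m-\varepsilon'+\varepsilon'/m$, and $\varepsilon'/m\ge2\gamma/m$ would suffice. That guarantee is not available from the restated result. Otherwise, what is actually established is the lemma with additive error $2\gamma+\varepsilon+\gamma/m$. That version would only change constants in Theorem~\ref{thm:adversarial_context_upper_bound}.
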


\begin{proof}
    By running the $\alpha$-approximation oracle with $\hat{\mU}$, $2\gamma + \varepsilon$, $\floor{\log_2 N + 2}$, from Lemma~\ref{lem:epsm_guarantee}, we have 
    \begin{align}\label{eq:approx_oracle}
        \mU_D(p_i) \geq \frac{\hat{\mU}_{2\gamma + \varepsilon}^*(p_i)}{m} - (2\gamma + \varepsilon), \quad \forall i \in [N].
    \end{align}
    By construction, any utility matrix $\mU \in \gU$ satisfies $\norm{\mU - \hat{\mU}}_{\max} \leq \gamma$. From Lemma~\ref{lem:perturb_stable}, we know that for any matching $\mu \in \gS^{\varepsilon}_{\mU}$, $\forall \mU \in \gU$, we have $\mu \in \gS^{2\gamma + \varepsilon}_{\hat{\mU}}$, that is $\bigcup_{\mU \in \gU} \gS_{\mU}^{\varepsilon} \subseteq \gS_{\hat{\mU}}^{2\gamma + \varepsilon}$. Therefore, for the $\varepsilon$-optimal stable share, we have 
    \begin{align}\label{eq:oss_relation}
        \gU_{\varepsilon}^*(p_i) \leq \hat{\mU}^*_{2\gamma + \varepsilon}(p_i), \quad \forall i \in [N].
    \end{align}
    Combining Eq.(\ref{eq:approx_oracle}) and (\ref{eq:oss_relation}), the conclusion holds.
\end{proof}

\begin{lem}\label{lem:perturb_stable}
    Fix the preferences of arms over players. Given two utility matrix $\mU_1$ and $\mU_2$ such that $\norm{\mU_1 - \mU_2}_{\max} < \gamma$, then any $\varepsilon$-stable matching $\mu$ for $\mU_1$, is also $(2\gamma + \varepsilon)$-stable with respect to $\mU_2$.
\end{lem}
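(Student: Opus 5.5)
The plan is to argue by contradiction against an $\varepsilon$-blocking pair for $\mU_2$, transferring the inequalities through the entrywise bound $\norm{\mU_1 - \mU_2}_{\max} < \gamma$. The key observation is that stability and $\varepsilon$-stability (Definition~\ref{def:context_eps_stable}) depend on the utilities only through comparisons of two entries in the same row, namely $\mU_{i,j}$ versus $\mU_{i,\mu(p_i)}$. The arm-side condition $p_i \succ_{a_j} \mu(a_j)$ is identical under $\mU_1$ and $\mU_2$, since arm preferences are fixed.

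Suppose $\mu$ is $\varepsilon$-stable for $\mU_1$ but not $(2\gamma+\varepsilon)$-stable for $\mU_2$. Then there is a pair $(p_i,a_j)$ with $p_i \succ_{a_j} \mu(a_j)$ and
\[
\mU_2(p_i,a_j) > \mU_2(p_i,\mu(p_i)) + 2\gamma + \varepsilon .
\]
Now apply the max-norm bound twice: $\mU_1(p_i,a_j) > \mU_2(p_i,a_j) - \gamma$ and $\mU_2(p_i,\mu(p_i)) > \mU_1(p_i,\mu(p_i)) - \gamma$. Chaining these gives
\[
\mU_1(p_i,a_j) > \mU_1(p_i,\mu(p_i)) + \varepsilon .
\]
Together with the unchanged arm-side condition, $(p_i,a_j)$ is an $\varepsilon$-blocking pair for $\mU_1$. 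This contradicts the $\varepsilon$-stability of $\mu$.

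The only point requiring care is the case where $p_i$ is unmatched under $\mu$. Here I will adopt the convention, implicit in the paper, that an unmatched player's utility $\mU(p_i,\emptyset)=0$ is the same under both matrices. In that case only one perturbation of size $\gamma$ is needed, and the bound holds a fortiori.

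The strict inequality in the hypothesis is not actually needed. The paper applies this lemma in Lemma~\ref{lem:approx_oracle} with $\le\gamma$, and the same chain still yields $\mU_1(p_i,a_j) > \mU_1(p_i,\mu(p_i)) + \varepsilon$, because the blocking-pair inequality is strict. No step is a genuine obstacle; the argument is purely bookkeeping.
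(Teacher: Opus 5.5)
Your proof is correct and is essentially the paper's argument: the paper applies the same two-step transfer $\mU_2(p,a) \le \mU_1(p,a) + \gamma \le \mU_1(p,\mu(p)) + \gamma + \varepsilon \le \mU_2(p,\mu(p)) + 2\gamma + \varepsilon$ directly to every pair with $p \succ_a \mu(a)$, whereas you run the same chain by contradiction from a $(2\gamma+\varepsilon)$-blocking pair. Your side remarks are also accurate: your handling of an unmatched player is sound, and $\le \gamma$ suffices because the blocking inequality is strict (the paper's own proof in fact works with $\le \gamma$).
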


\begin{proof}
    If a matching $\mu$ is $\varepsilon$-stable with respect to $\mU_1$, then for any $(p, a)$ pair such that $p \succ_a \mu(a)$, we must have 
    \begin{align}\label{eq:eps_stable}
        \mU_1 (p, a) \leq \mU(p, \mu(p)) + \varepsilon,
    \end{align}
    from the definition of $\varepsilon$-stable matching.

    Since $\norm{\mU_1 - \mU_2}_{\max} \leq \gamma$, we have that for any $(p, a)$ pair, $\abs{\mU_1(p, a) - \mU(p, a)} \leq \gamma$. Therefore, 
    \begin{align}\label{eq:gamma_eps_stable}
        \mU_2(w, a) \leq \mU_1(w, a) + \gamma \leq \mU_1(w, \mu(w)) + \gamma + \epsilon \leq \mU_2(w, \mu(w)) + 2\gamma + \epsilon,
    \end{align}
    where the first and the last inequality come from $\norm{\mU_1 - \mU_2}_{\max} \leq \gamma$, while the second inequality holds according to Eq.(\ref{eq:eps_stable}). Therefore, combining $p \succ_a \mu(a)$ and Eq.(\ref{eq:gamma_eps_stable}), we can conlude that matching $\mu$ is $(2\gamma + \varepsilon)$-stable with respect to $\mU_2$.
\end{proof}

\begin{rema}\label{rema:top_n_adversarial}
    Interestingly, we can show that when the top-$N$ arms are sufficiently well-separated -- specifically, when there is a gap of at least $\varepsilon$ between the confidence intervals of any two such arms for any player $p_i$ -- choosing the Gale-Shapley oracle yields the same regret guarantee as the original selection rule. Concretely, when taking $j \in [N]$ in Line~\ref{algline:delta_min_compute_adversarial} of Algorithm~\ref{alg:adeco_full}, we arrive at an identical result. This equivalence holds because, under such separation, we have $\mU^*_i(t) = \mU^{\varepsilon}_i(t)$ for every player $p_i$. For brevity, however, we omit a detailed proof of this case.
\end{rema}

%% file: add_experiment.tex
In this section, we provide additional numerical experiment results.

For the stochastic contexts case, we evaluate player-optimal stable regret in a market with $N = 4$ players and $K = 4$ arms. Contexts have dimension $d = 3$. Each arm's preference over players is a random permutation of $\left\{1, \cdots, N\right\}$. Each entry of the preference parameter $\theta_i$ is uniformly randomly sampled from $[0, 1]$; while for the contexts, we firstly sample a Gaussian distribution with mean 10 and variance 1 for each entry and then normalize the vector so that $\norm{x_j(t)}_2 = 1$.

The ridge penalty parameter is set to $\lambda = 1$ for all algorithms under comparison. For ETC, the exploration length is $h = 5000$; for Batched-ETC, the initial exploration length is $T_1 = 100$; and for BARB, the initial candidate gap is $\Delta_1 = 0.5$. In short, the algorithm parameter settings are the same as those in the example presented in the main text, while this example is in good condition in the sense that the minimum eigenvalue of the context covariance matrix would be large compared to the first example.

Figure~\ref{fig:stochastic_context} shows the player-optimal stable regret for each player. Due to batched explorations, the performances of BARB and Batched-ETC are slightly worse than that of ETC. However, when the minimum eigenvalue of the context covariance matrix vanishes, the performance of ETC and Batched-ETC may significantly deteriorate, as shown in Figure~\ref{fig:orthonormal_context}, which implies BARB should be a more robust choice.

\begin{figure}[!htp]
    \centering
    \includegraphics[width=.9\linewidth]{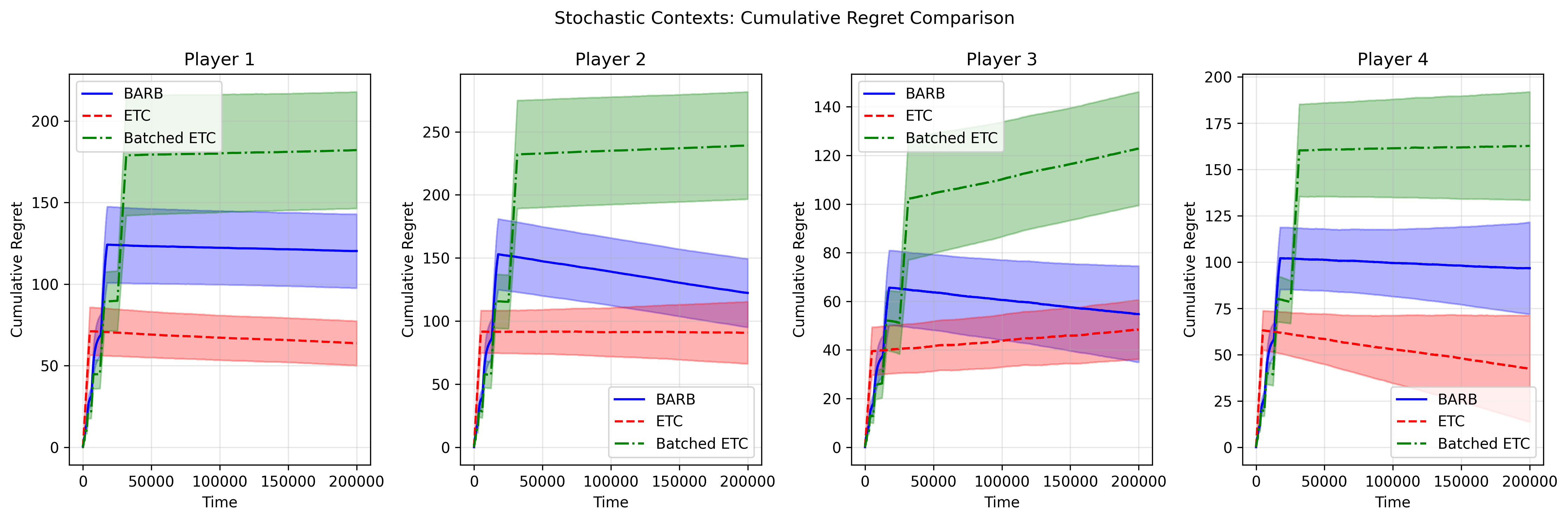}
    \caption{Experimental Comparison of BARB, Batched-ETC, and ETC in a market with 4 players and 4 arms.}
    \label{fig:stochastic_context}
\end{figure}

Moreover, we compare the performance of BARB, ETC, and Batched-ETC in a larger market with 25 players and 25 arms. The dimension of the contexts is 3. The preferences of players over arms and the contexts are generated with the same idea as in the example presented in the main text, such that the minimum eigenvalue of the context covariance matrix is small. 
For the ETC algorithm, the exploration length is set to $h=10,000$, while all other algorithmic parameters remain the same as in the example in the main text.
Figure~\ref{fig:orthonormal_context_large_market} shows the maximum player-optimal stable regret for each algorithm. In this large market setting, BARB achieves the smallest maximum cumulative regret, followed by ETC, and Batched-ETC performs the worst. This conclusion is aligned with that in the small market setting.

\begin{figure}[!htp]
	\centering
	\includegraphics[width=.9\linewidth]{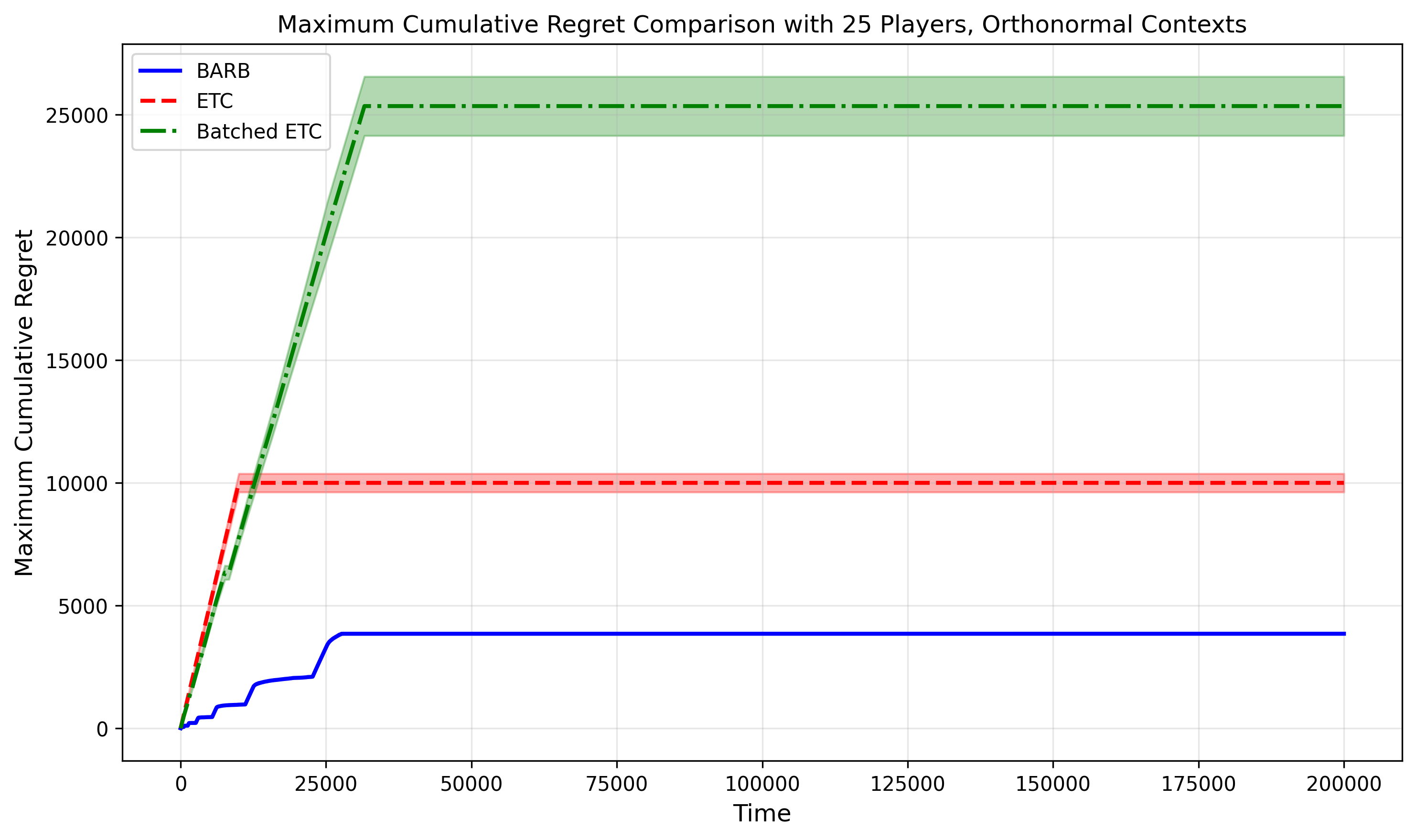}
	\caption{Experimental Comparison of BARB, Batched-ETC, and ETC with 25 players and 25 arms in a market with small minimum preference gap.}
	\label{fig:orthonormal_context_large_market}
\end{figure}

For generality, we vary the market size $N = K \in \left\{3, 6, 9, 12\right\}$ to show the performance of the algorithm BARB. In Figure~\ref{fig:market_size}, the result shows that the maximum cumulative regret over players increases as the market size becomes larger. This dependency is consistent with the theoretical results.  

\begin{figure}[!htp]
    \centering
    \includegraphics[width=.9\linewidth]{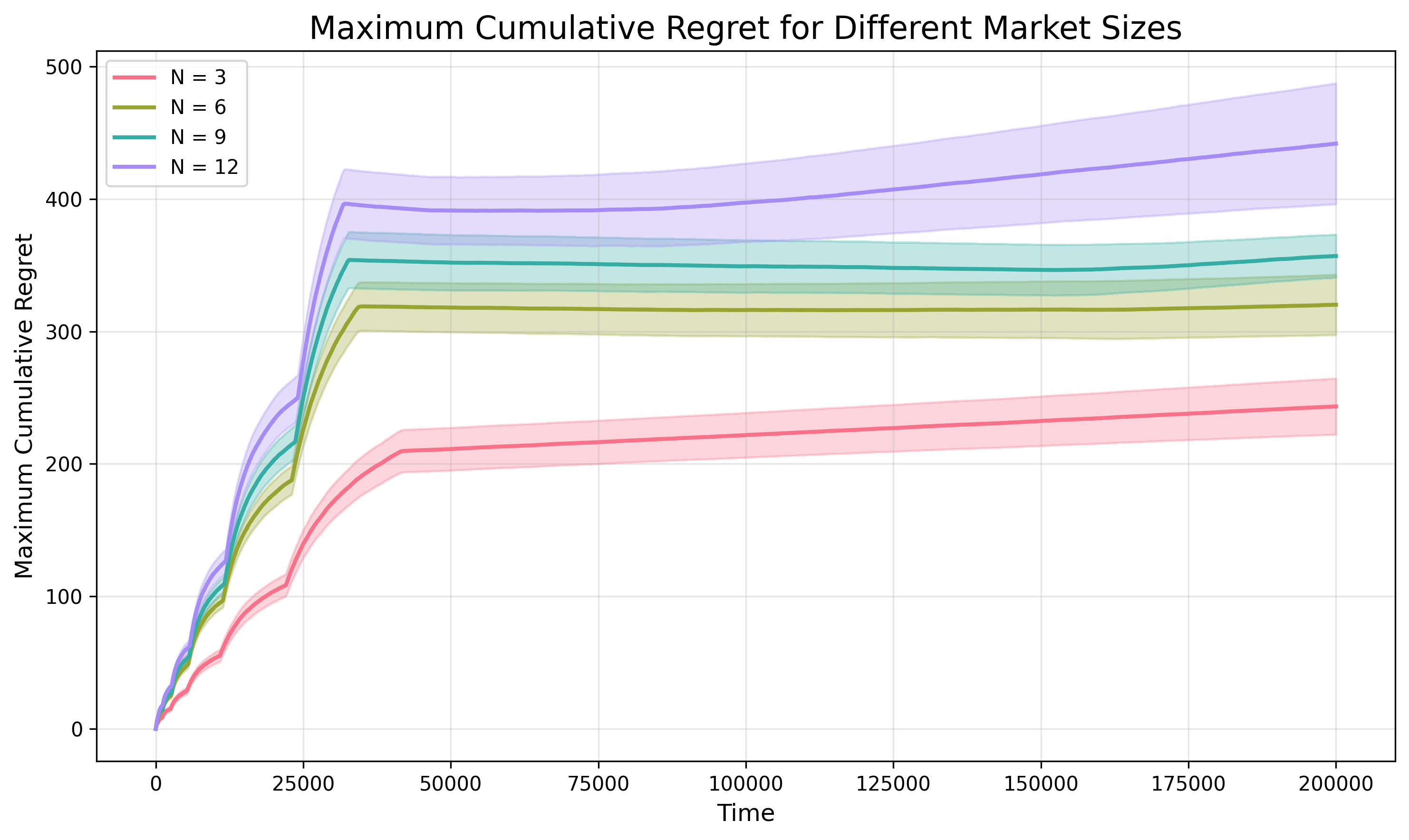}
    \caption{Experimental comparison of BARB with different market sizes.}
    \label{fig:market_size}
\end{figure}

Furthermore, to assess the robustness of the algorithm’s performance with respect to the choice of the initial candidate gap $\Delta_1$, we conduct a sensitivity analysis by varying $\Delta_1$ over the set ${0.4, 0.6, 0.8, 1.0}$. The results, presented in Figure~\ref{fig:initial_gap}, show that the maximum cumulative regrets over players under different choices exhibit little variation, indicating that the algorithm is robust to this parameter selection.

\begin{figure}[!htp]
    \centering
    \includegraphics[width=.9\linewidth]{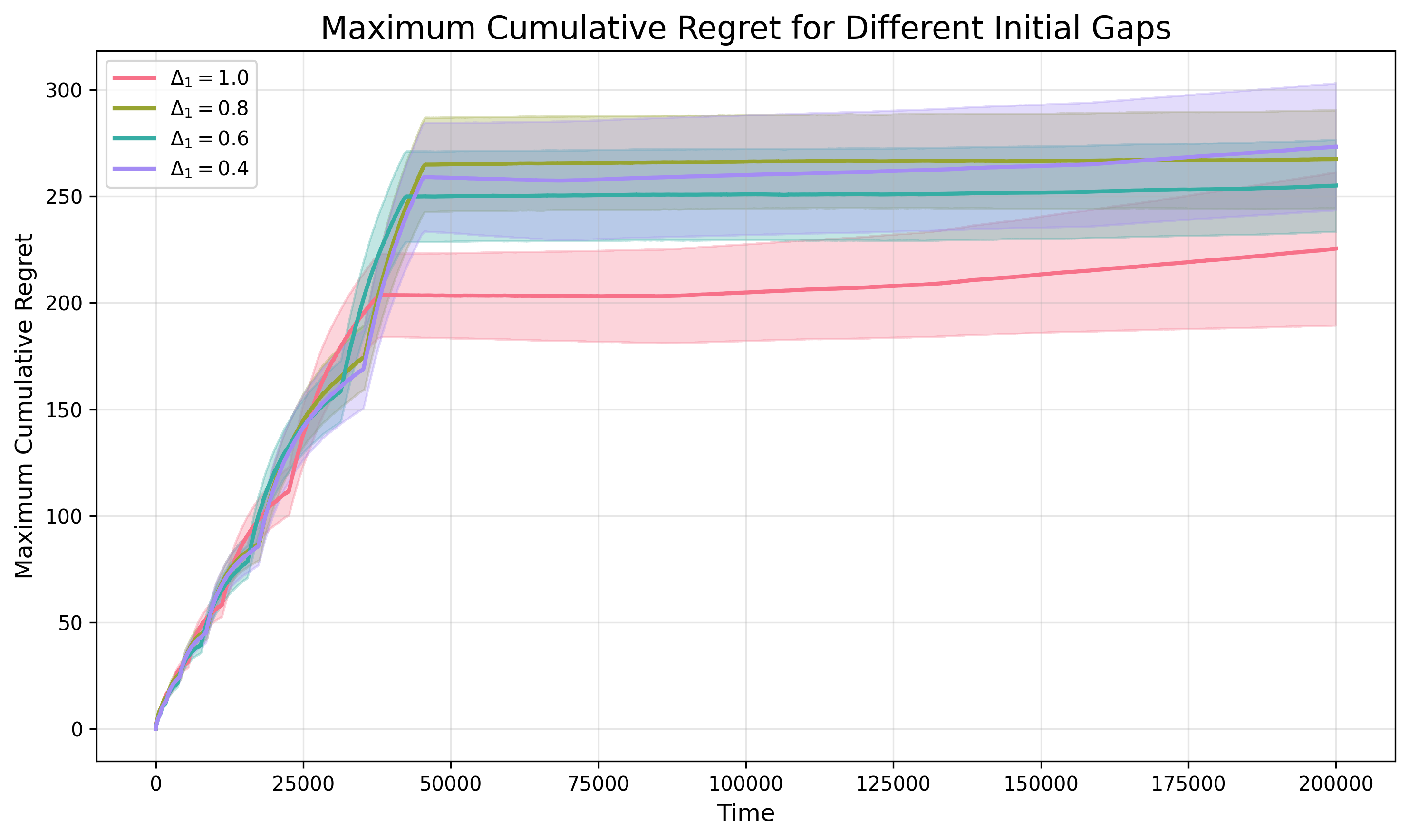}
    \caption{Experimental comparison of BARB with different initial candidate gaps $\Delta_1$ in a market with $N = 4$ players and $K = 4$ arms.}
    \label{fig:initial_gap}
\end{figure}

For the adversarial contexts, we consider a market with $N = 4$ players and $K = 4$ arms. Contexts have dimension $d = 3$. Each arm's preference over players is a random permutation of $\left\{1, \cdots, N\right\}$. To confuse the algorithm, we intentionally alternate between generating contexts, thereby producing a utility matrix characterized by a mix of large and small gaps.

Given that the $\varepsilon$-optimal stable share is typically NP-hard to compute, we compare the rewards collected by each player under two conditions: (i) using the Gale-Shapley or approximation oracle with a known utility matrix, and (ii) following the AdECO algorithm (Algorithm~\ref{alg:adeco_abstract}).

Figure~\ref{fig:adversarial_context} shows the numerical results for each player in the adversarial setting. From Figure~\ref{fig:adversarial_context}, we find that the collected reward curve of AdECO closely follows that of the oracle, demonstrating that its adaptive exploration strategy provides a good estimation.

\begin{figure}[!htp]
    \centering
    \includegraphics[width=.9\linewidth]{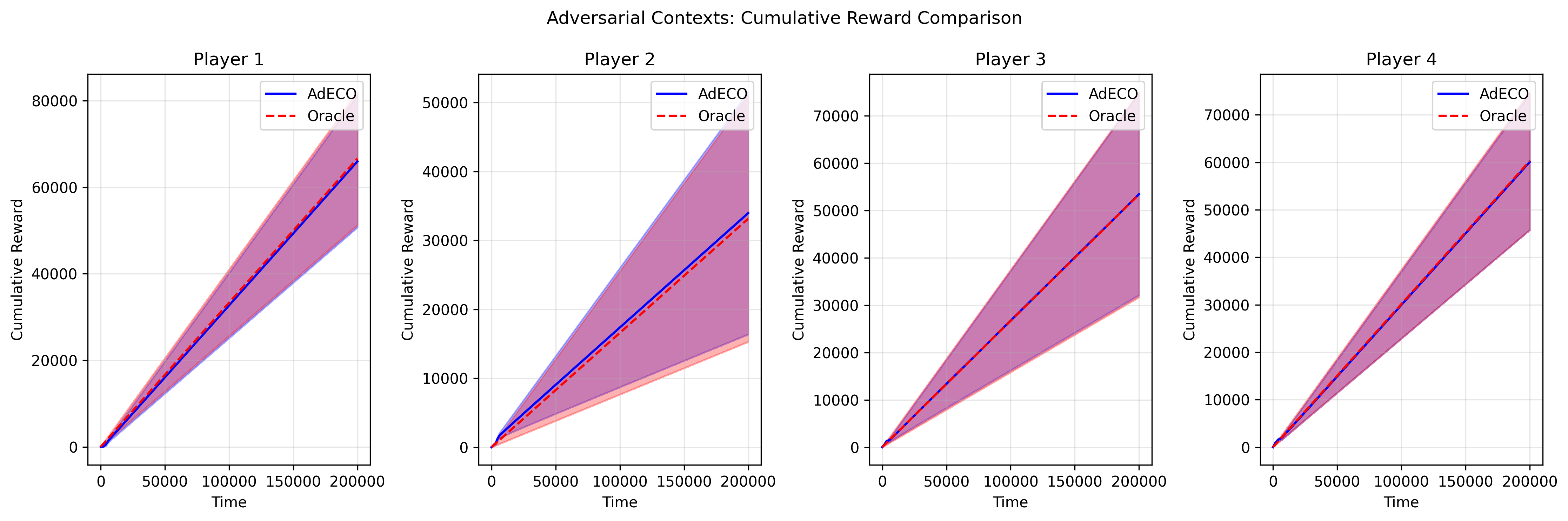}
    \caption{Experimental Comparison of AdECO and the offline oracle with 4 players and 4 arms.}
    \label{fig:adversarial_context}
\end{figure}

Furthermore, we consider a more challenging scenario where, in each round, the adversary randomly selects the market to be in either the large-gap or small-gap region. We vary the probability of choosing the small-gap region as $p \in \left\{0.1, 0.5, 0.9\right\}$ and report the difference between the reward collected by the oracle—an algorithm that exactly knows the ground-truth utility and selects the offline oracle accordingly—and that collected by AdECO. The maximum difference across players is shown in Figure~\ref{fig:adversarial_gap}. As the probability of the small-gap regime increases, the reward difference between the two algorithms decreases. This trend may be attributed to the choice of performance metric in each regime: in the small-gap regime, we use approximation regret as the metric, whereas in the large-gap regime, we use standard regret. The difference in scale between these two metrics likely accounts for the observed results.

\begin{figure}[!htp]
    \centering
    \includegraphics[width=.9\linewidth]{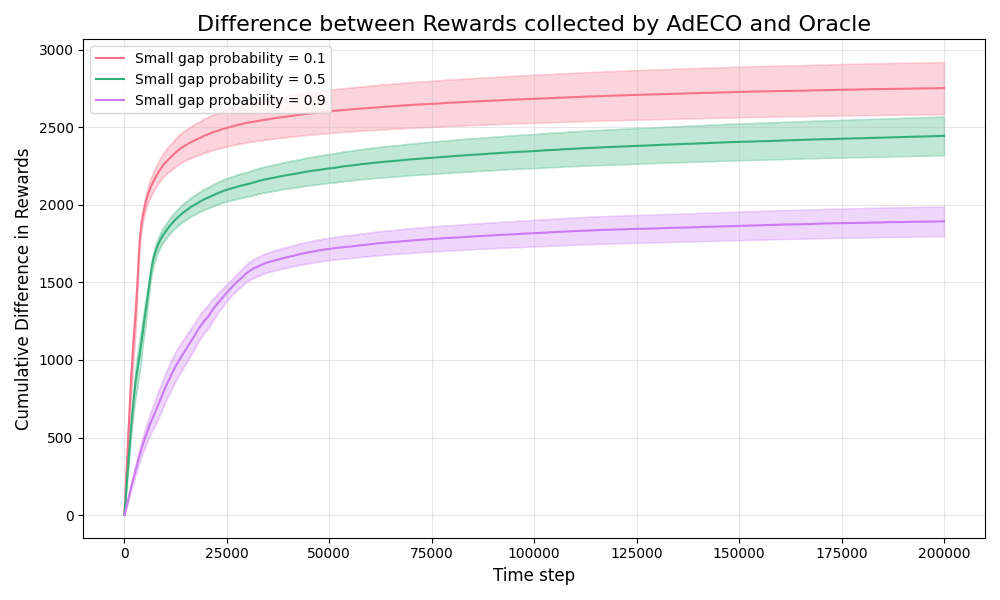}
    \caption{Adversarial contexts with varying probability of choosing the small-gap regime.}
    \label{fig:adversarial_gap}
\end{figure}

%% file: min_gap.tex
In this section, we provide an illustrative example on the minimum difference $\delta_{\min}$.

For simplicity, we consider a stochastic contextual market with one player and three arms and the dimension of the context is $d = 1$. 
The latent preference parameter $\theta_1 = 1$, and the context distribution for the three arms are $Unif[0, \frac{1}{2}]$, $Unif[\frac{1}{4}, \frac{3}{4}]$ and $Unif[\frac{1}{2}, 1]$, respectively.
Therefore, the distributions of the matching utilities are $\mU_{1} \sim Unif[0, \frac{1}{2}]$, $\mU_2 \sim Unif [\frac{1}{4}, \frac{3}{4}]$ and $\mU_3 \sim Unif [\frac{1}{2}, 1]$, respectively~\footnote{Without causing ambiguity, we drop the index of the player and focus on the index of the arm.}.

According to the definition of minimum preference, we have
\begin{align*}
    \delta_{\min} = \min \left\{\abs{\mU_1 - \mU_2}, \abs{\mU_2 - \mU_3}, \abs{\mU_3 - \mU_1}\right\}.
\end{align*}

To derive the CDF $F_{\delta_{\min}}(x)$ of the minimum difference, we have that
\begin{align*}
    F_{\delta_{\min}}(x) = \sP \left(\delta_{\min} \leq x\right)
    = 1 - \sP \left(\delta_{\min} > x\right).
\end{align*}
Let $S(x) := \sP \left(\delta_{\min} > x\right)$, given that the utilities all follow uniform distribution, we have that the joint density is constant on the cube 
\begin{align*}
    \Omega = \left[0, \frac{1}{2}\right] \times \left[\frac{1}{4}, \frac{3}{4}\right] \times \left[\frac{1}{2}, 1\right], \quad Vol \left(\Omega\right) = \frac{1}{8}.
\end{align*}
Therefore, 
\begin{align*}
    S(x) = 8 \cdot Vol \left(\left\{(\mU_1, \mU_2, \mU_3) \in \Omega: \abs{\mU_i - \mU_j} > x, \forall i < j\right\}\right).
\end{align*}
We partition the cube $\Omega$ into regions where the order of $(\mU_1, \mU_2, \mU_3)$ is fixed.
Only the following orderings have positive volume:
\begin{enumerate}[label=(\Alph*)]
    \item $\mU_1 \leq \mU_2 \leq \mU_3$.
    \item $\mU_2 \leq \mU_1 \leq \mU_3$.
    \item $\mU_1 \leq \mU_3 \leq \mU_2$.
\end{enumerate}
Denote the corresponding volumes as $V_A$, $V_B$ and $V_C$, we have that
\begin{align*}
    S(x) = 8 \cdot \left(V_A(x) + V_B(x) + V_C(x)\right).
\end{align*}

We work with different regions separately.
\paragraph{Region A}: $\mU_1 \leq \mU_2 \leq \mu_3$.

Here, $\delta_{\min} = \min \left\{\mU_2 - \mU_1, \mU_3 - \mU_2\right\}$, thus we have $\mU_2 - \mU_1 > x$ and $\mU_3 - \mU_2 > x$ in region A.

Therefore, given $\mU_2$, we have
\begin{align*}
    \mU_1 \in \left[0, \min \left\{\frac{1}{2}, \mU_2 - x\right\}\right], \quad \mU_3 \in \left[\max \left\{\frac{1}{2}, \mU_2 + x\right\}, 1\right],
\end{align*}
and hence
\begin{align*}
    V_A(x) = \int_{\mU_2} \max \left\{0, \min \left\{\frac{1}{2}, \mU_2 - x\right\}\right\} \cdot \max \left\{0, 1 - \max \left\{\frac{1}{2}, \mU_2 + x\right\}\right\} d \mU_2,
\end{align*}
which makes
\begin{align*}
     V_A(x) = \begin{cases}
        \frac{4}{3} x^3 - \frac{1}{2} x^2 - \frac{1}{4}x + \frac{3}{32}, & 0 \leq x \leq \frac{1}{4}, \\
        - \frac{4}{3} x^3 + 2 x^2 - x + \frac{1}{6}, & \frac{1}{4} \leq x \leq \frac{1}{2}.
    \end{cases}  
\end{align*}

\paragraph{Region B}: $\mU_2 \leq \mU_1 \leq \mU_3$.

Here, $\delta_{\min} = \min \left\{\mU_1 - \mU_2, \mU_3 - \mU_1\right\}$, then given $\mU_1$, we have 
\begin{align*}
    \mU_2 \in \left[\frac{1}{4}, \min \left\{\frac{3}{4}, \mU_1 - x\right\}\right], \quad \mU_3 \in \left[\max \left\{\frac{1}{2}, \mU_1 + x\right\}, 1\right].
\end{align*}
With a similar derivation to that in region A, we have that
\begin{align*}
    V_B(x) = \begin{cases}
        \frac{2}{3} x^3 + \frac{1}{8} x^2 - \frac{1}{8} x + \frac{1}{64}, & 0 \leq x \leq \frac{1}{8} \\
        -\frac{2}{3} x^3 + \frac{5}{8} x^2 - \frac{3}{16} x + \frac{7}{384}, & \frac{1}{8} \leq x \leq \frac{1}{4}.
    \end{cases}
\end{align*}

\paragraph{Region C}: $\mU_1 \leq \mU_3 \leq \mU_2$.

Here, $\delta_{\min} = \min \left\{\mU_3 - \mU_1, \mU_2 - \mU_3\right\}$, then given $\mU_3$, we have that
\begin{align*}
    \mU_1 \in \left[0, \min \left\{\frac{1}{2}, \mU_3 - x\right\}\right], \quad \mU_2 \in \left[\max \left\{\frac{1}{4}, \mU_3 + x\right\}, \frac{3}{4}\right].
\end{align*}
Similarly, we have
\begin{align*}
    V_C(x) = \begin{cases}
        \frac{2}{3} x^3 + \frac{1}{8} x^2 - \frac{1}{8} x + \frac{1}{64}, & 0 \leq x \leq \frac{1}{8} \\
        -\frac{2}{3} x^3 + \frac{5}{8} x^2 - \frac{3}{16} x + \frac{7}{384}, & \frac{1}{8} \leq x \leq \frac{1}{4}.
    \end{cases}
\end{align*}

Combining the above three regions, we have the final CDF of $\delta_{\min}$ as $F_{\delta_{\min}}(x) = 1 - S(x)$, and 
\begin{align*}
    F_{\delta_{\min}}(x) = \begin{cases}
        0, & x < 0 \\
        1 - 8 \cdot \left(\frac{8}{3} x^3 - \frac{1}{4} x^2 - \frac{1}{2} x + \frac{1}{8}\right), & 0 \leq x \leq \frac{1}{8},\\
        1 - 8 \cdot \left(\frac{3}{4} x^2 - \frac{5}{8} x + \frac{25}{192}\right), & \frac{1}{8} \leq x \leq \frac{1}{4}\\
        1 - 8 \cdot \left(- \frac{4}{3} x^3 + 2 x^2 - x + \frac{1}{6}\right), & \frac{1}{4} \leq x \leq \frac{1}{2}\\
        1, & x \geq \frac{1}{2}.
    \end{cases}
\end{align*}

The cumulative distribution function (CDF) of $\delta_{\min}$ for the above example is depicted in Figure~\ref{fig:min_gap}. We also plot the function $\frac{\log T}{T x^2}$ for three distinct time horizons: $1000$, $10000$, and $100000$. In this plot, the crossing points correspond to the minimum preference gap $\Delta_{\min}$ (Definition~\ref{def:context_min_gap_cond}) for each respective $T$.

\begin{figure}[!htp]
    \centering
    \includegraphics[width=.9\linewidth]{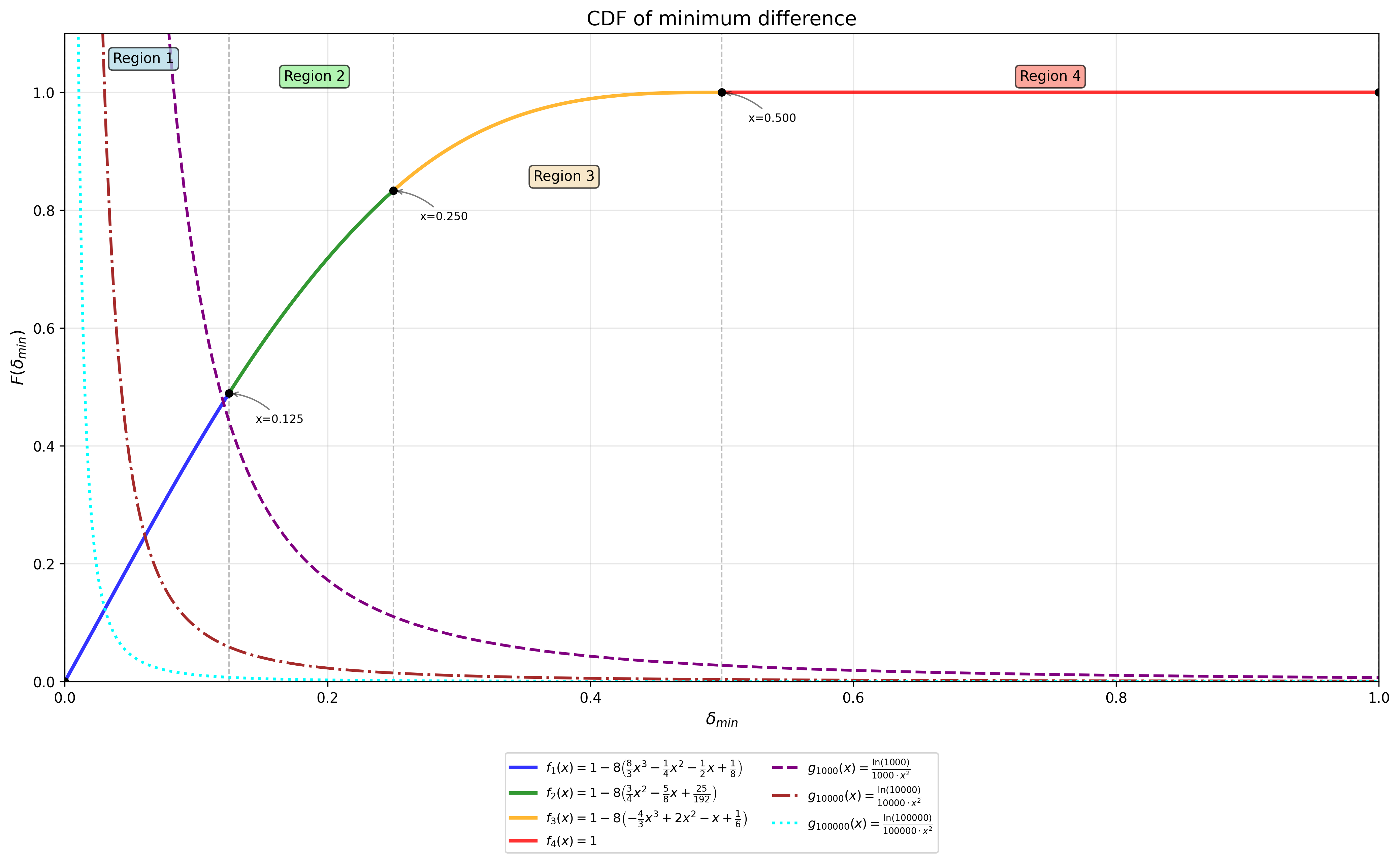}
    \caption{CDF of $\delta_{\min}$ and the minimum preference gap.}
    \label{fig:min_gap}
\end{figure}

%% file: tech_lemma.tex
\begin{lem}[Elliptical Potential Lemma, Proposition 1~\citep{carpentier2020elliptical}]\label{lem:elliptical_potential}
    Let $x_1, \cdots, x_T$ be a sequence of arbitrary vectors in $\sR^d$ such that $\norm{x_i}_2 \leq 1$. For any $1 \leq t \leq T$, define $\mV_t = \sum_{s = 1}^{t - 1} x_s x_s^{\top} + \lambda \mI$. The elliptical potential satisfies 
    \begin{align*}
        \sum_{t = 1}^T \norm{x_t}_{\mV_{t + 1}^{-1}} = \sum_{t = 1}^T \sqrt{x_t^{\top} \mV_{t + 1}^{-1} x_t} \leq \sqrt{T d \log \left(\frac{T + d \lambda}{d \lambda}\right)},
    \end{align*}
    and for any $p \in \sR^+, \lambda \geq 1$, we have
    \begin{align*}
        \sum_{t = 1}^T \norm{x_t}_{\mV_{t + 1}^{-p}} \leq \sum_{t = 1}^T \norm{x_t}_{\mV_t^{-p}} \leq 2^{\frac{p}{2}} \sum_{t = 1}^T \norm{x_t}_{\mV_{t + 1}^{-p}}.
    \end{align*}
\end{lem}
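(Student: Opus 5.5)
The statement to prove is the elliptical potential lemma. The plan is the standard determinant-potential argument, then a comparison between consecutive Gram matrices.

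For the first inequality, the shift $\mV_{t+1} = \mV_t + x_t x_t^{\top}$ works in our favour. By the matrix determinant lemma,
\begin{align*}
\det \mV_{t+1} = \det \mV_t \left(1 + \norm{x_t}_{\mV_t^{-1}}^2\right).
\end{align*}
By Sherman--Morrison,
\begin{align*}
\norm{x_t}_{\mV_{t+1}^{-1}}^2 = \frac{\norm{x_t}_{\mV_t^{-1}}^2}{1 + \norm{x_t}_{\mV_t^{-1}}^2}.
\end{align*}
Write $u = \norm{x_t}_{\mV_t^{-1}}^2 \ge 0$. Since $\frac{u}{1+u} \le \log(1+u)$ for all $u \ge 0$, no condition on $\lambda$ is needed here. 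Summing and telescoping gives
\begin{align*}
\sum_{t=1}^T \norm{x_t}_{\mV_{t+1}^{-1}}^2 \le \log \frac{\det \mV_{T+1}}{\det(\lambda \mI)}.
\end{align*}
Apply AM--GM to the eigenvalues, using $\mathrm{tr}\, \mV_{T+1} \le d\lambda + T$ because $\norm{x_t}_2 \le 1$. This yields $\det \mV_{T+1} \le \left((d\lambda + T)/d\right)^d$, so the log-determinant ratio is at most $d \log \frac{T + d\lambda}{d\lambda}$. Cauchy--Schwarz, $\sum_t a_t \le \sqrt{T \sum_t a_t^2}$, then gives the stated bound.

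For the second chain, the left inequality comes from $\mV_{t+1} \succeq \mV_t$. This gives $\mV_{t+1}^{-p} \preceq \mV_t^{-p}$ for $p > 0$: the map $A \mapsto A^{-1}$ is operator monotone decreasing, and $A \mapsto A^{q}$ is operator monotone for $q \in (0,1]$. For $p > 1$ this monotonicity can fail for non-commuting matrices. There I would use the rank-one structure instead: work in the eigenbasis of $\mV_t$ and note that adding $x_t x_t^{\top}$ only increases the quadratic form along $x_t$.

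The right inequality is the main step. I would bound $\mV_{t+1} = \mV_t + x_t x_t^{\top} \preceq 2\mV_t$. This needs $x_t x_t^{\top} \preceq \mV_t$, which holds because $\norm{x_t}_2 \le 1 \le \lambda$ gives $x_t x_t^{\top} \preceq \mI \preceq \lambda \mI \preceq \mV_t$; this is exactly where $\lambda \ge 1$ enters. Then $\mV_{t+1}^{-1} \succeq \frac{1}{2}\mV_t^{-1}$, and raising to the power $p$ gives $\mV_t^{-p} \preceq 2^p \mV_{t+1}^{-p}$. Taking the quadratic form in $x_t$ and square roots yields $\norm{x_t}_{\mV_t^{-p}} \le 2^{p/2} \norm{x_t}_{\mV_{t+1}^{-p}}$, and summing over $t$ finishes.

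The main obstacle is the power-$p$ monotonicity for $p > 1$, since operator monotonicity of $A \mapsto A^p$ fails there. My first attempt would be a direct argument exploiting that $\mV_{t+1} - \mV_t$ is rank one along $x_t$, evaluating only the quadratic form at $x_t$. Failing that, I would appeal to the cited Proposition 1 of \citet{carpentier2020elliptical}, which is all the paper actually uses, since its applications take $p = 1$.
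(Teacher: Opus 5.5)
Your proof of the first bound is correct and complete. It uses the determinant lemma, Sherman--Morrison, $u/(1+u)\le\log(1+u)$, telescoping, AM--GM on $\det\mV_{T+1}$ and Cauchy--Schwarz, with no condition on $\lambda$. The paper gives no proof of its own, only the citation to Carpentier et al., and your argument is the standard route. Your treatment of the second chain is also correct for $p\in(0,1]$: L\"owner--Heinz plus operator-monotone inversion does the job, with $\lambda\ge1$ entering through $\mV_{t+1}\preceq 2\mV_t$. That covers what the paper actually uses. Lemma~\ref{lem:explore_length} sums $\norm{x_t}_{\mV_t^{-1}}$ with the Gram matrix that excludes the current vector, so it needs precisely your right inequality at $p=1$. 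That step costs a factor $\sqrt{2}$ and requires $\lambda\ge1$, neither of which the paper tracks.

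For $p>1$ there is a genuine gap in both halves of the chain, and your proposed patch does not close it.

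\emph{Left inequality.} You note that adding $x_tx_t^\top$ increases the quadratic form of $\mV$ along $x_t$. That says nothing directly about $x_t^\top\mV^{-p}x_t$, and $\mV_{t+1}$ is not diagonal in the eigenbasis of $\mV_t$. A working argument uses the Laplace representation $x^\top M^{-p}x=\Gamma(p)^{-1}\int_0^\infty s^{p-1}\,x^\top e^{-sM}x\,ds$. For $M_r=\mV_t+r\,x_tx_t^\top$, Duhamel's formula gives $\frac{d}{dr}x_t^\top e^{-sM_r}x_t=-\int_0^s (x_t^\top e^{-(s-\sigma)M_r}x_t)(x_t^\top e^{-\sigma M_r}x_t)\,d\sigma\le 0$. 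Hence the quadratic form is nonincreasing in $r$ for every $p>0$.

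\emph{Right inequality.} Your step ``raising to the power $p$ gives $\mV_t^{-p}\preceq 2^p\mV_{t+1}^{-p}$'' is not just unjustified but false in general. Take $\lambda=1$, $\mV_t=\mathrm{diag}(1,L)$ (realizable by $L-1$ past copies of $e_2$) and $x_t=(1,1)/\sqrt{2}$. Then $\norm{\mV_t^{-2}\mV_{t+1}^{2}}\ge\norm{\mV_t^{-2}\mV_{t+1}^{2}e_2}_2\ge L/2+1$, so $\mV_t^{-4}\not\preceq 16\,\mV_{t+1}^{-4}$ for $L\ge 8$. Any proof must therefore work with the quadratic form at $x_t$ itself. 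Two partial repairs are available:
\begin{itemize}
\item For $p\in(1,2]$, the operator bound can still be rescued by Cordes' inequality: $\norm{\mV_t^{-p/2}\mV_{t+1}^{p/2}}\le\norm{\mV_t^{-1}\mV_{t+1}}^{p/2}\le(1+\norm{x_t}_2^2/\lambda)^{p/2}\le 2^{p/2}$.
\item For integer $p$, Sherman--Morrison in resolvent form gives $g=h/(1-h)$, where $g(z)=x_t^\top(\mV_t-z\mI)^{-1}x_t$ and $h(z)=x_t^\top(\mV_{t+1}-z\mI)^{-1}x_t$. Expanding in $z$ and using log-convexity of $k\mapsto x_t^\top\mV_{t+1}^{-k}x_t$ yields $x_t^\top\mV_t^{-p}x_t\le(1+u)^p\,x_t^\top\mV_{t+1}^{-p}x_t$, with $u=\norm{x_t}^2_{\mV_t^{-1}}\le 1$.
\end{itemize}
As it stands, your proposal proves the second chain only for $p\le 1$ and otherwise rests on the citation, which is no more than the paper itself does.
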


\begin{lem}[Confidence Ellipsoid, Theorem 2~\citep{abbasi2011improved}]\label{lem:confidence_ellip}
    Let $\rx_1, \cdots,\rx_T$ be a sequence of vectors in $\sR^d$ such that for all $t \geq 1$, $\norm{\rx_t}_2 \leq B_x$. Let $\theta \in \sR^d$ satisfy $\norm{\theta}_2 \leq B_{\theta}$.
    Let $\left\{F_t\right\}_{t=0}^{\infty}$ be a filtration such that $\rx_{t}$ is $F_{t-1}$-measurable, let $\left\{\epsilon_t\right\}$ be a real-valued stochastic process such that $\epsilon_t$ is $F_t$-measurable and $\epsilon_t$ is conditionally $R$-subgaussian for some $R \geq 0$, i.e.
    \begin{align*}
        \forall \lambda \in \sR, \quad \mathbb{E} \left[e^{\lambda \epsilon_t} \mid F_{t - 1}\right] \leq \exp \left(\frac{\lambda^2 R^2}{2}\right).
    \end{align*}
    Let $y_t = \theta^{\top} \rx_t + \epsilon_t$ such that $y_t$ is $F_t$-measurable. 
    Let $\mV = \lambda \mI$ for $\lambda > 0$, define 
    \begin{align*}
        \mV_t = \mV + \sum_{s = 1}^t \rx_s \rx_s^{\top}, \quad \mS_t = \sum_{s = 1}^t y_s \rx_s,
    \end{align*}
    and let $\hat{\theta}_t$ be the $\ell^2$-regularized least square estimate of $\theta$ with regularization parameter $\lambda$, i.e., 
    \begin{align*}
        \hat{\theta}_t = \mV_t^{-1} S_t.
    \end{align*} 
    Then, for any $\delta > 0$, with probability at least $1 - \delta$, for all $t \geq 0$, $\theta$ lies in the set
    \begin{align*}
        \gC_t = \left\{\theta \in \sR^d: \norm{\hat{\theta}_t - \theta}_{\mV_t} \leq R \sqrt{d \log \left(\frac{1 + t B_x^2 / \lambda}{\delta}\right)} + \lambda^{1/2} B_{\theta}\right\}.
    \end{align*}
\end{lem}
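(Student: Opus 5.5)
The plan is to follow the standard self-normalized martingale argument of \citet{abbasi2011improved}. The first step is an algebraic decomposition of the ridge error. Write $\mS_t = \sum_{s\le t}\rx_s\rx_s^{\top}\theta + \sum_{s\le t}\epsilon_s\rx_s$ and set $\xi_t := \sum_{s \le t} \epsilon_s \rx_s$. Since $\mV_t = \lambda\mI + \sum_{s\le t}\rx_s\rx_s^{\top}$, this gives
\begin{align*}
\hat{\theta}_t - \theta = \mV_t^{-1}\left(\xi_t - \lambda \theta\right).
\end{align*}
Taking the $\mV_t$-norm and using the triangle inequality,
\begin{align*}
\norm{\hat{\theta}_t - \theta}_{\mV_t} \leq \norm{\xi_t}_{\mV_t^{-1}} + \lambda \norm{\theta}_{\mV_t^{-1}}.
\end{align*}
Because $\mV_t \succeq \lambda \mI$, we have $\lambda\norm{\theta}_{\mV_t^{-1}} \le \lambda^{1/2}B_\theta$. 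This produces the bias term of the lemma deterministically. Everything then reduces to a uniform-in-time bound on $\norm{\xi_t}_{\mV_t^{-1}}$.

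The core step is the self-normalized bound: with probability at least $1-\delta$, simultaneously for all $t\ge 0$,
\begin{align*}
\norm{\xi_t}^2_{\mV_t^{-1}} \le 2R^2 \log\left(\frac{\det(\mV_t)^{1/2}\det(\lambda\mI)^{-1/2}}{\delta}\right).
\end{align*}
For this, fix $u\in\sR^d$ and define $M_t^u = \exp\bigl(u^{\top}\xi_t - \tfrac{R^2}{2} u^{\top}\bar{\mV}_t u\bigr)$, where $\bar{\mV}_t=\sum_{s\le t}\rx_s\rx_s^{\top}$. Conditional $R$-subgaussianity of $\epsilon_t$, together with the $F_{t-1}$-measurability of $\rx_t$, makes $M_t^u$ a nonnegative supermartingale with $\mathbb{E}[M^u_0] = 1$.

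Next, apply the method of mixtures. Integrate $u$ against a Gaussian $\gN(0, (R^2\lambda)^{-1}\mI)$, or equivalently rescale $u$ by $1/R$. The mixture $\bar M_t$ is again a nonnegative supermartingale with initial mean $1$. The Gaussian integral is available in closed form: $\bar M_t = \bigl(\det(\lambda\mI)/\det(\mV_t)\bigr)^{1/2}\exp\bigl(\norm{\xi_t}^2_{\mV_t^{-1}}/(2R^2)\bigr)$. To obtain uniformity over all $t$, apply Ville's maximal inequality to $\bar M_t$. Equivalently, use the stopping-time trick: evaluate at the first $t$ where the bound fails and apply Markov's inequality together with Fatou's lemma. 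This gives $\sP(\exists t: \bar M_t \ge 1/\delta)\le\delta$, which rearranges to the displayed bound.

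Finally, control the determinant with the AM--GM trace bound. Since $\mathrm{tr}(\mV_t) \le d\lambda + tB_x^2$, we get $\det \mV_t \le (\lambda + tB_x^2/d)^d$. The radius is then at most $R\sqrt{2\log(1/\delta) + d\log(1+tB_x^2/(d\lambda))}$. Bounding this by the stated form $R\sqrt{d\log((1+tB_x^2/\lambda)/\delta)}$ is the step where I expect friction. The comparison $2\log(1/\delta)\le d\log(1/\delta)$ requires $d\ge 2$, or else an absorbed constant. I would handle it by noting that $d\log(1+tB_x^2/(d\lambda))\le d\log(1+tB_x^2/\lambda)$, and either restricting to $d\ge2$ or inflating the leading constant for $d=1$, which only changes constants in $\eta$ downstream. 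The genuinely delicate part is the mixture and stopping-time step, which yields uniformity over $t$ without a union bound. The remaining steps are routine.
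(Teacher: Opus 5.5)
Your proposal is correct and is essentially the argument behind this lemma. The paper does not prove it; it imports Theorem~2 of \citet{abbasi2011improved}, which is proved by the same three steps you use: the decomposition $\hat{\theta}_t-\theta=\mV_t^{-1}\bigl(\sum_{s\le t}\epsilon_s\rx_s-\lambda\theta\bigr)$, the method-of-mixtures/stopping-time self-normalized bound, and the trace--determinant inequality.

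The friction you flag is genuine, and it lies in the cited simplified radius rather than in your argument. Merging $2\log(1/\delta)$ into $d\log(1/\delta)$ needs $d\ge 2$. For $d=1$ the displayed set can actually fail: take i.i.d.\ standard Gaussian noise, $\rx_t\equiv 1$, $\theta=0$ and $B_\theta=0$. Letting $\lambda\to\infty$ and $T/\lambda\to\infty$, the failure probability tends to $\sqrt{\delta}>\delta$. So restricting to $d\ge 2$, or enlarging the constant in $\eta$, is the right fix.
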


\begin{lem}[Multiplicative Chernoff Bound, Deviation Above the Mean, Theorem 4.4 ~\citep{mitzenmacher2017probability}]\label{lem:multiplicative_chernoff}
    Let $X_1, \cdots, X_n$ be independent Bernoulli trials such that $\sP \left(X_i\right) = p_i$, Let $X = \sum_{i = 1}^n X_i$ and $\mu = \mathbb{E} [X]$. Then for any $\rho \geq 0$, we have
    \begin{align*}
        \sP \left(X \geq (1 + \rho) \mu\right) \leq \exp \left(- \frac{\rho^2 \mu}{2 + \rho}\right).
    \end{align*}
\end{lem}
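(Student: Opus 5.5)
The statement just above is Lemma~\ref{lem:multiplicative_chernoff}, the multiplicative Chernoff bound for the upper tail. I will prove it with the standard exponential-moment (Cramér–Chernoff) method in three steps: apply Markov's inequality to $e^{sX}$ for a free parameter $s>0$, bound the moment generating function using independence, then optimize over $s$ and simplify the resulting expression to the stated form.

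\emph{Markov and the moment generating function.} For any $s>0$, Markov's inequality applied to $e^{sX}$ gives $\sP(X \geq (1+\rho)\mu) \leq e^{-s(1+\rho)\mu}\,\mathbb{E}[e^{sX}]$. Independence factorizes the moment generating function as $\mathbb{E}[e^{sX}] = \prod_i \bigl(1 + p_i(e^s - 1)\bigr)$. Using $1+z \leq e^z$ on each factor yields $\mathbb{E}[e^{sX}] \leq \exp\bigl((e^s-1)\mu\bigr)$.

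\emph{Optimizing $s$.} For $\rho>0$ I take $s = \log(1+\rho)$, which minimizes the exponent $(e^s-1)\mu - s(1+\rho)\mu$. This gives the classical form
\begin{align*}
\sP(X \geq (1+\rho)\mu) \leq \left(\frac{e^{\rho}}{(1+\rho)^{1+\rho}}\right)^{\mu} = \exp\bigl(-\mu\,[(1+\rho)\log(1+\rho) - \rho]\bigr).
\end{align*}
The case $\rho = 0$ is trivial, since the claimed right-hand side equals $1$.

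\emph{Elementary inequality (the main step).} It remains to show $(1+\rho)\log(1+\rho) - \rho \geq \frac{\rho^2}{2+\rho}$ for all $\rho \geq 0$. This reduces to the known bound $\log(1+\rho) \geq \frac{2\rho}{2+\rho}$. To prove that bound, set $g(\rho) = \log(1+\rho) - \frac{2\rho}{2+\rho}$. Then $g(0)=0$ and $g'(\rho) = \frac{1}{1+\rho} - \frac{4}{(2+\rho)^2} = \frac{\rho^2}{(1+\rho)(2+\rho)^2} \geq 0$, so $g \geq 0$ on $[0,\infty)$. Substituting back,
\begin{align*}
(1+\rho)\log(1+\rho) - \rho \geq \frac{2\rho(1+\rho)}{2+\rho} - \rho = \frac{\rho^2}{2+\rho}.
\end{align*}
Combining this with the optimized bound gives $\sP(X \geq (1+\rho)\mu) \leq \exp\bigl(-\frac{\rho^2\mu}{2+\rho}\bigr)$, which is exactly the statement.
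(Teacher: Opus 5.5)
Your proof is correct and is the standard Cram\'er--Chernoff argument behind the result the paper only cites without proof (Mitzenmacher--Upfal, Theorem 4.4, whose part (1) is exactly your optimized bound $\bigl(e^{\rho}/(1+\rho)^{1+\rho}\bigr)^{\mu}$). Your final step $\log(1+\rho) \geq \frac{2\rho}{2+\rho}$ is a genuine addition: the cited theorem's simplified bound is $e^{-\mu\rho^2/3}$ and holds only for $0<\rho\leq 1$, whereas the stated lemma (used in Lemma~\ref{lem:stay_exploit} with $\rho \geq 2$) needs the $\frac{\rho^2}{2+\rho}$ form for all $\rho \geq 0$, which your step supplies.
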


\begin{lem}[Cauchy's Eigenvalue Interlacing Theorem, Theorem 4.3.17~\citep{horn2012matrix}]\label{lem:cauchy_interlacing}
    Suppose $A \in \sR^{n \times n}$ is symmetric. Let $B \in \sR^{m \times m}$ with $m < n$ be a principal submatrix (obtained by deleting both $i$-th row and $i$-th column for some values of $i$. Suppose $A$ has eigenvalues $\lambda_1 \leq \cdots \leq \lambda_n$ and $B$ has eigenvalues $\beta_1 \leq \cdots \leq \beta_m$. Then
    \begin{align*}
        \lambda_k \leq \beta_k \leq \lambda_{k + n - m} \quad \text{for} \quad k = 1, \cdots, m.
    \end{align*}
\end{lem}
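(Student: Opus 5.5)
The plan is to derive both inequalities from the Courant--Fischer min-max characterization of eigenvalues of a real symmetric matrix. For symmetric $A \in \sR^{n \times n}$ with eigenvalues $\lambda_1 \leq \cdots \leq \lambda_n$, I will use the two dual forms
\begin{align*}
    \lambda_k = \min_{\dim S = k} \max_{x \in S, \norm{x}_2 = 1} x^{\top} A x = \max_{\dim S = n - k + 1} \min_{x \in S, \norm{x}_2 = 1} x^{\top} A x.
\end{align*}
The same formulas hold for $B$, with $n$ replaced by $m$. The first step is to encode $B$ through an isometric embedding. Let $I \subseteq [n]$ be the set of $m$ retained indices, and let $P \in \sR^{n \times m}$ be the matrix whose columns are the standard basis vectors $e_i$, $i \in I$. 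Then $B = P^{\top} A P$ and $P^{\top} P = \mI_m$. Consequently, for every $y \in \sR^m$ we have $\norm{Py}_2 = \norm{y}_2$ and $y^{\top} B y = (Py)^{\top} A (Py)$. Moreover, $P$ maps every $k$-dimensional subspace of $\sR^m$ injectively onto a $k$-dimensional subspace of $\sR^n$.

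For the lower bound $\lambda_k \leq \beta_k$, I will use the min-max form. Take the $k$-dimensional subspace $S \subseteq \sR^m$ that attains the minimum for $\beta_k$. Its image $PS$ is a $k$-dimensional subspace of $\sR^n$. The Rayleigh quotient of $A$ on $PS$ coincides with that of $B$ on $S$, so
\begin{align*}
    \lambda_k \leq \max_{x \in PS, \norm{x}_2 = 1} x^{\top} A x = \max_{y \in S, \norm{y}_2 = 1} y^{\top} B y = \beta_k.
\end{align*}
The first inequality holds because $\lambda_k$ is the minimum of this quantity over all $k$-dimensional subspaces of $\sR^n$.

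For the upper bound $\beta_k \leq \lambda_{k + n - m}$, I will use the max-min form. Take the optimal subspace $S \subseteq \sR^m$ for $\beta_k$, which has dimension $m - k + 1$. Its image $PS$ has dimension $m - k + 1 = n - (k + n - m) + 1$, which is exactly the dimension required in the max-min formula for $\lambda_{k + n - m}$. Hence
\begin{align*}
    \beta_k = \min_{y \in S, \norm{y}_2 = 1} y^{\top} B y = \min_{x \in PS, \norm{x}_2 = 1} x^{\top} A x \leq \lambda_{k + n - m}.
\end{align*}

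The only delicate point I anticipate is this index bookkeeping in the second inequality: checking that the dimension $m - k + 1$ of $PS$ matches the index $k + n - m$ for $A$. As an alternative, I could avoid the max-min form by applying the first inequality to $-A$ and $-B$. Since the eigenvalues of $-A$ are $-\lambda_n \leq \cdots \leq -\lambda_1$, the $k$-th smallest of $-A$ is $-\lambda_{n + 1 - k}$, and similarly for $-B$. Relabeling $k \mapsto m + 1 - k$ then yields the upper bound.

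Everything else is routine once Courant--Fischer is taken as known. In this paper the lemma is only used with $m = d$ and $A = \Sigma$, to conclude $\lambda_{\min}(\Sigma_i) = \beta_1 \geq \lambda_1 = \lambda_{\min}(\Sigma) \geq \tilde{\lambda}$. So the lower bound with $k = 1$ is the essential part, and for $k = 1$ it reduces to the one-line Rayleigh-quotient comparison above.
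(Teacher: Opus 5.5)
Your proof is correct: both Courant--Fischer steps are valid, the dimension check $m-k+1 = n-(k+n-m)+1$ is right, and the alternative via $-A,-B$ with the relabeling $k \mapsto m+1-k$ also works. The paper gives no proof of its own and simply cites Horn--Johnson; your argument is the standard min--max proof of this fact, and since it uses only $P^{\top}P = \mI_m$, it gives interlacing for any compression $P^{\top}AP$ with orthonormal columns, not just principal submatrices.
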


\begin{lem}[Matrix Bernstein Inequality, Theorem 1.6.2~\citep{tropp2015introduction}]\label{lem:matrix_bernstein}
    Let $S_1, \cdots, S_n$ be independent, random matrices with common dimensions $d \times d$, and assume that $\norm{S_i}_2 \leq M$. Let $\mathbb{E} [S_i] = \Sigma$, $V = \norm{\sum_{i = 1}^n \mathbb{E} \left[\left(S_i - \Sigma\right)^2\right]} \leq n \norm{\Sigma}_2^2$. Then for any $t \geq 0$,
    \begin{align*}
        \sP \left(\norm{\frac{1}{n} \sum_{i = 1}^n S_i - \Sigma}_2 \geq t\right) \leq 2d \cdot \exp \left(-\frac{nt^2 / 2}{V / n + Mt / 3}\right) \lesssim 2d \exp \left(-C_1 n\right),
    \end{align*}
    where $C_1$ is a constant.
\end{lem}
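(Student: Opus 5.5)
We would prove this via the matrix Laplace transform method of \citet{tropp2015introduction}. First reduce to a centered sum. Set $X_i := \frac{1}{n}(S_i - \Sigma)$, so that $\mathbb{E}[X_i] = 0$ and $\frac{1}{n}\sum_i S_i - \Sigma = Y := \sum_i X_i$. We take the $S_i$ symmetric, which is the case in our application $S_i = \rx_{i,t}\rx_{i,t}^{\top}$; for general rectangular matrices we would pass to the Hermitian dilation, which costs only a factor in $d$. We read $M$ as a bound on $\norm{S_i - \Sigma}_2$; otherwise replace $M$ by $2M$, which only changes constants. Then $\norm{X_i}_2 \leq L := M/n$ and $\norm{\sum_i \mathbb{E}[X_i^2]}_2 \leq V/n^2 =: v$. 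Since $\norm{Y}_2 = \max\{\lambda_{\max}(Y), \lambda_{\max}(-Y)\}$ and $-Y$ satisfies the same hypotheses, it suffices to bound $\sP(\lambda_{\max}(Y) \geq t)$ by $d \exp(\cdot)$ and apply a union bound, which produces the factor $2d$.

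The key steps, in order, are as follows.

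\textbf{Step 1 (matrix Chernoff).} By Markov's inequality applied to $e^{\theta\lambda_{\max}(Y)}$, together with $e^{\lambda_{\max}(A)} \leq \mathrm{tr}\, e^{A}$, we get for every $\theta > 0$
\begin{align*}
\sP(\lambda_{\max}(Y) \geq t) \leq e^{-\theta t}\, \mathbb{E}\, \mathrm{tr} \exp(\theta Y).
\end{align*}

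\textbf{Step 2 (subadditivity of the matrix cgf).} Using Lieb's concavity theorem ($A \mapsto \mathrm{tr}\exp(H + \log A)$ is concave) and iterating conditional expectations over the independent summands, we obtain
\begin{align*}
\mathbb{E}\, \mathrm{tr}\exp(\theta Y) \leq \mathrm{tr}\exp\Bigl(\sum_i \log \mathbb{E}\, e^{\theta X_i}\Bigr).
\end{align*}

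\textbf{Step 3 (per-summand Bernstein mgf bound).} For $0 < \theta < 3/L$, use the scalar inequality $e^{x} \leq 1 + x + \frac{x^2/2}{1 - x/3}$ for $x < 3$. By the transfer rule for functions of a Hermitian matrix with spectrum in $[-L, L]$ and $\mathbb{E}[X_i] = 0$, this gives
\begin{align*}
\mathbb{E}\, e^{\theta X_i} \preceq I + g(\theta)\,\mathbb{E}[X_i^2] \preceq \exp\bigl(g(\theta)\,\mathbb{E}[X_i^2]\bigr), \qquad g(\theta) = \frac{\theta^2/2}{1 - \theta L/3}.
\end{align*}
The matrix logarithm is operator monotone, and $\mathrm{tr}\exp$ is monotone. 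Hence $\mathrm{tr}\exp\bigl(\sum_i \log \mathbb{E}\, e^{\theta X_i}\bigr) \leq \mathrm{tr}\exp\bigl(g(\theta)\sum_i \mathbb{E}[X_i^2]\bigr) \leq d\, e^{g(\theta) v}$.

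\textbf{Step 4 (optimize $\theta$).} Combining Steps 1--3 gives $\sP(\lambda_{\max}(Y) \geq t) \leq d \exp(-\theta t + g(\theta) v)$. Choose $\theta = t/(v + Lt/3)$, which satisfies $\theta < 3/L$. This yields $d\exp\bigl(-\frac{t^2/2}{v + Lt/3}\bigr)$. Substituting $v = V/n^2$ and $L = M/n$ gives exactly $\frac{n t^2/2}{V/n + Mt/3}$ in the exponent, which is the first inequality. For the second, the assumption $V \leq n\norm{\Sigma}_2^2$ gives $V/n \leq \norm{\Sigma}_2^2$, so for fixed $t$ the exponent is at least $C_1 n$ with $C_1 = \frac{t^2/2}{\norm{\Sigma}_2^2 + Mt/3}$.

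The main obstacle is Step 2. Unlike the scalar case, $\mathbb{E}\, e^{\theta(X_1 + X_2)} \neq \mathbb{E}\, e^{\theta X_1}\,\mathbb{E}\, e^{\theta X_2}$ because the summands do not commute. Lieb's theorem is the nontrivial ingredient, and we would cite it rather than reprove it. Applying it requires conditioning on $X_1, \dots, X_{n-1}$, then applying Jensen to the concave map with $A = e^{\theta X_n}$, and inducting. Step 3 requires only care that the scalar inequality is used on the spectrum and that monotonicity is invoked for $\log$ (operator monotone) rather than for $\exp$ (which is not).
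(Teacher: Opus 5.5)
Your argument is correct and is the proof behind this lemma. The paper gives no proof of its own and simply cites \citet{tropp2015introduction}, whose argument uses the same four steps you do: the matrix Laplace-transform bound, Lieb-based subadditivity of the matrix cgf, the Bernstein mgf estimate, and the choice $\theta=t/(v+Lt/3)$.

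One correction to your aside. Restricting to symmetric $S_i$ is necessary, not a convenience. With $V$ defined through $(S_i-\Sigma)^2$, the bound fails for non-symmetric matrices: take $S_i=\sigma_i e_1e_2^{\top}$ with Rademacher signs $\sigma_i$, which has $V=0$ yet $\norm{\frac1n\sum_i S_i}_2=\abs{\frac1n\sum_i\sigma_i}$. The Hermitian dilation does not repair this. It keeps the factor $2d$ but replaces $V$ by $\max\{\norm{\sum_i\mathbb{E}[Z_iZ_i^{\top}]}_2,\norm{\sum_i\mathbb{E}[Z_i^{\top}Z_i]}_2\}$ with $Z_i=S_i-\Sigma$. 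Your $M$-versus-$2M$ caveat is likewise needed in general, though it is moot in the paper's use, where $S_i=\rx\rx^{\top}\succeq 0$ already gives $\norm{S_i-\Sigma}_2\le M$.
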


\begin{lem}[Divergence Decomposition, Lemma 15.1~\citep{lattimore2020bandit}]\label{lem:divergence_decompose}
    For two bandit instances $\boldsymbol{\nu} = \left\{\nu_{ij}: i \in [N], j \in [K]\right\}$, and $\boldsymbol{\nu'} = \left\{\nu'_{ij}: i \in [N], j \in [K]\right\}$, fix some policy $\pi$ and let $\sP_{\boldsymbol{\nu}, \pi}$ and $\sP_{\boldsymbol{\nu'}, \pi}$ be the probability measures induced by the $T$-round interconnection of $\pi$ and $\boldsymbol{\nu}$ (respectively, $\pi$ and $\boldsymbol{\nu'}$), the following divergence decomposition holds,
    \begin{align*}
        D \left(\sP_{\boldsymbol{\nu}, \pi}, \sP_{\boldsymbol{\nu'}, \pi}\right) = \sum_{i = 1}^N \sum_{j = 1}^K \mathbb{E}_{\boldsymbol{\nu}, \pi} \left[N_{ij}(T)\right] \cdot D \left(\nu_{ij}, \nu'_{ij}\right).
    \end{align*}
\end{lem}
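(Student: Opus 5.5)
We will follow the standard proof of Lemma~15.1 in \citet{lattimore2020bandit}. The only change is that in each round a whole matching is played instead of a single arm. First we fix the canonical model. The history up to round $T$ is $H_T=(\mu_1,Y_1,\dots,\mu_T,Y_T)$, where $\mu_t$ is the (possibly randomized) matching chosen by $\pi$ and $Y_t=(y_{i,t})_{i:\mu_t(i)\neq\emptyset}$ collects the rewards of the matched pairs. Under instance $\boldsymbol{\nu}$, conditionally on $H_{t-1}$ and $\mu_t$, the rewards $y_{i,t}$ are independent across the matched players, with $y_{i,t}\sim\nu_{i\mu_t(i)}$. The policy kernel $\pi_t(\cdot\mid H_{t-1})$ is the same under both instances.

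Next we write the joint law as a density. Let $\rho$ be a $\sigma$-finite measure dominating all $\nu_{ij}$ and $\nu'_{ij}$, for example $\rho=\sum_{i,j}(\nu_{ij}+\nu'_{ij})$. Let $p_{ij},p'_{ij}$ denote the corresponding densities. By the chain rule for densities, $\sP_{\boldsymbol{\nu},\pi}$ has density
\begin{align*}
p_{\boldsymbol{\nu}}(H_T)=\prod_{t=1}^T \pi_t(\mu_t\mid H_{t-1})\prod_{i:\mu_t(i)\neq\emptyset} p_{i\mu_t(i)}(y_{i,t})
\end{align*}
with respect to the product of counting measure on matchings and $\rho$ on each observed reward. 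The density $p_{\boldsymbol{\nu'}}(H_T)$ is analogous. The policy factors cancel in the ratio, so
\begin{align*}
\log\frac{d\sP_{\boldsymbol{\nu},\pi}}{d\sP_{\boldsymbol{\nu'},\pi}}(H_T)=\sum_{t=1}^T\sum_{i,j}\ind{\mu_t(i)=j}\log\frac{p_{ij}(y_{i,t})}{p'_{ij}(y_{i,t})}.
\end{align*}

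We then take $\mathbb{E}_{\boldsymbol{\nu},\pi}$ and apply the tower rule term by term. Conditioning on $(H_{t-1},\mu_t)$, the indicator is measurable and $y_{i,t}\sim\nu_{ij}$. Hence each conditional expectation equals $\ind{\mu_t(i)=j}D(\nu_{ij},\nu'_{ij})$. Summing over $t$ gives $N_{ij}(T)=\sum_t\ind{\mu_t(i)=j}$, and taking the outer expectation yields
\begin{align*}
D(\sP_{\boldsymbol{\nu},\pi},\sP_{\boldsymbol{\nu'},\pi})=\sum_{i,j}\mathbb{E}_{\boldsymbol{\nu},\pi}[N_{ij}(T)]\,D(\nu_{ij},\nu'_{ij}).
\end{align*}

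The main obstacle is measure-theoretic rather than computational. Two points need care.
\begin{enumerate}[label=(\roman*)]
\item Justifying the exchange of expectation and sum. If every $D(\nu_{ij},\nu'_{ij})$ is finite, the negative parts of the log-ratios are integrable, since $\mathbb{E}[(\log p/p')^-]\le 1/e$. The sum is finite with at most $NT$ terms, so linearity applies.
\item The case of an infinite divergence. Suppose $D(\nu_{ij},\nu'_{ij})=\infty$ and $\mathbb{E}[N_{ij}(T)]>0$. If this happens because $\nu_{ij}\not\ll\nu'_{ij}$, then $\sP_{\boldsymbol{\nu},\pi}\not\ll\sP_{\boldsymbol{\nu'},\pi}$, because the history places positive mass where $p'_{ij}=0$. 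Otherwise the log-ratio has infinite expected positive part on an event of positive probability. In either case both sides equal $\infty$, with the convention $0\cdot\infty=0$ when $\mathbb{E}[N_{ij}(T)]=0$.
\end{enumerate}
A further subtlety is the within-round product structure: given the matching, the rewards of different matched players must be conditionally independent with marginals $\nu_{ij}$. This is exactly the modeling assumption of independent noise per pair in Eq.~(\ref{eq:noise_reward}), which the lemma takes as part of its definition of a bandit instance $\boldsymbol{\nu}=\{\nu_{ij}\}$.
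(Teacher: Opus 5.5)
Your proposal is correct and is essentially the standard argument behind Lattimore--Szepesv\'ari's Lemma~15.1, which the paper cites without reproducing: write the density of the canonical model, cancel the policy factors in the likelihood ratio, and apply the tower rule to each term $\ind{\mu_t(i)=j}\log(p_{ij}/p'_{ij})$. Your two adaptations carry that proof over to the matching setting stated here. The first is the conditional independence of the matched rewards within a round, so each round contributes a sum over matched pairs. The second is the separate handling of infinite divergences, using the $1/e$ bound on the negative parts.
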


\begin{lem}[Chain Rule of KL Divergence, Theorem 2.5.3~\citep{cover1999elements}]\label{lem:chain_rule_kl}
    Consider two random variables $X$ and $Y$ with two possible joint probability mass functions $p(x, y)$ and $q(x, y)$. The KL divergence between $p(x, y)$ and $q(x, y)$ is 
    \begin{align*}
        D \left(p(x, y) \Vert q(x, y)\right) = D \left(p(x) \Vert q(x)\right) + D \left(p(y | x) \Vert q(y | x)\right).
    \end{align*}
\end{lem}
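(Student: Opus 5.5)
The plan is to prove the identity directly from the definition of KL divergence, by factorizing the joint mass functions and using linearity of expectation. The first step is to fix the convention that the conditional term means the average over the first marginal of $p$:
\begin{align*}
D\left(p(y|x) \Vert q(y|x)\right) := \sum_{x} p(x) \sum_{y} p(y|x) \log \frac{p(y|x)}{q(y|x)}.
\end{align*}
This is the form in which the lemma is used in the proof of Theorem~\ref{thm:asymptotic_lower_bound}, where the conditional divergence appears as $\mathbb{E}_{u \sim U[0,1]}$ of a per-$u$ Gaussian divergence.

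The core computation goes as follows. On the set where $p(x,y) > 0$, write $p(x,y) = p(x)p(y|x)$ and $q(x,y) = q(x)q(y|x)$. The log-likelihood ratio then splits additively:
\begin{align*}
\log \frac{p(x,y)}{q(x,y)} = \log \frac{p(x)}{q(x)} + \log \frac{p(y|x)}{q(y|x)}.
\end{align*}
Take the expectation of both sides under $p(x,y)$. The first term depends only on $x$, so summing out $y$ (using $\sum_y p(y|x) = 1$) leaves $\sum_x p(x)\log\frac{p(x)}{q(x)} = D(p(x)\Vert q(x))$. The second term is exactly the conditional divergence as defined above. Adding the two gives the claimed identity.

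The main obstacle is the handling of zero masses and infinite values, not the algebra. I would treat this in three cases.
\begin{itemize}
\item If $p \ll q$ fails jointly, some $(x,y)$ has $p(x,y)>0$ and $q(x,y)=0$. Then either $q(x)=0$ with $p(x)>0$, or $q(y|x)=0$ with $p(y|x)>0$ and $p(x)>0$. In either case the left side is $+\infty$ and at least one term on the right is $+\infty$.
\item Conversely, if either term on the right is infinite, joint absolute continuity fails, so the left side is infinite too.
\item If $p \ll q$ jointly, all ratios are finite on the support of $p$. The terms with $p(x,y)=0$ contribute $0$ by the convention $0\log 0 = 0$, so the expectation computation above is valid.
\end{itemize}

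One more point needs care in the finite case: splitting an expectation of a sum into a sum of expectations requires that we never form $\infty-\infty$. This holds because each of the two pieces, being a KL divergence or an average of KL divergences, is nonnegative (Gibbs' inequality), so each is well defined in $[0,\infty]$ and their sum is meaningful.

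For the use in Theorem~\ref{thm:asymptotic_lower_bound}, the same argument must cover densities as well as mass functions: a uniform $U$ and a Gaussian reward given $U$. I would note that the proof transfers verbatim with sums replaced by integrals against a dominating measure, with $p(y|x)$ taken as a regular conditional density. Measurability of $x \mapsto D(p(\cdot|x)\Vert q(\cdot|x))$ follows from joint measurability of the densities.
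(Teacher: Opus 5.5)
Your proposal is correct and is the standard argument behind the cited result; the paper gives no proof of its own, only the reference to Cover and Thomas, Theorem 2.5.3, whose proof is exactly this: factor both joint masses, split the log-likelihood ratio, and take expectations under $p(x,y)$. One minor caveat: your second bullet (an infinite term on the right forces joint absolute continuity to fail) holds for finite alphabets but not for countably infinite alphabets or densities; it is not needed, because your first and third bullets already exhaust the cases and your nonnegativity argument covers divergent sums.
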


\begin{lem}[Bretagnolle-Huber Inequality, Theorem 14.2~\citep{lattimore2020bandit}]\label{lem:bretagnolle_huber}
    Let $P$ and $Q$ be probability measures on the same measurable space $\left(\Omega, \gF\right)$, and let $A \in \gF$ be an arbitrary event. Then,
    \begin{align*}
        P(A) + Q(A^c) \geq \frac{1}{2} \exp \left(-D(P, Q)\right),
    \end{align*}
    where $A^c = \Omega \backslash A$ is the complement of $A$, $D(\cdot, \cdot)$ is the KL divergence.
\end{lem}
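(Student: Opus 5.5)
The plan is the classical three-step argument: lower bound $P(A)+Q(A^c)$ by the overlap $\int \min(p,q)$, relate the overlap to the Bhattacharyya coefficient $\int\sqrt{pq}$, and compare that coefficient with $e^{-D(P,Q)}$ via Jensen's inequality.

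\textbf{Setup.} First, if $D(P,Q)=\infty$ the right-hand side is $0$ and there is nothing to prove. So assume $D(P,Q)<\infty$, which forces $P\ll Q$. Introduce the dominating measure $\nu=P+Q$ and the densities $p=dP/d\nu$ and $q=dQ/d\nu$. All integrals below are with respect to $\nu$.

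\textbf{Step 1 (reduction to the overlap).} For an arbitrary event $A$,
\begin{align*}
P(A)+Q(A^c)=\int_A p+\int_{A^c} q\;\geq\;\int \min(p,q).
\end{align*}
This removes the event $A$ entirely.

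\textbf{Step 2 (overlap versus Bhattacharyya coefficient).} Write $\sqrt{pq}=\sqrt{\min(p,q)}\sqrt{\max(p,q)}$. By Cauchy--Schwarz,
\begin{align*}
\Bigl(\int\sqrt{pq}\Bigr)^2\leq \int\min(p,q)\int\max(p,q).
\end{align*}
Since $\min+\max=p+q$ integrates to $2$, we have $\int\max(p,q)=2-\int\min(p,q)\leq 2$. Hence
\begin{align*}
\int\min(p,q)\geq \tfrac12\Bigl(\int\sqrt{pq}\Bigr)^2.
\end{align*}

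\textbf{Step 3 (Jensen).} Restrict to the set $\{p>0\}$. There $q>0$ holds $P$-almost surely, because $P\ll Q$. We have
\begin{align*}
\Bigl(\int\sqrt{pq}\Bigr)^2\geq \Bigl(\int_{\{p>0\}} p\sqrt{q/p}\Bigr)^2=\exp\Bigl(2\log \mathbb{E}_P\bigl[\sqrt{q/p}\bigr]\Bigr).
\end{align*}
Concavity of $\log$ (Jensen) gives $\log\mathbb{E}_P\bigl[\sqrt{q/p}\bigr]\geq \tfrac12\mathbb{E}_P\bigl[\log(q/p)\bigr]=-\tfrac12 D(P,Q)$. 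Therefore $\bigl(\int\sqrt{pq}\bigr)^2\geq e^{-D(P,Q)}$.

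Chaining Steps 1--3 yields $P(A)+Q(A^c)\geq \tfrac12 \exp\left(-D(P,Q)\right)$, which is the claimed inequality.

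\textbf{Main obstacle.} The only delicate point is the measure-theoretic care in Step 3. One must justify that $\log(q/p)$ is well defined $P$-almost surely, which is where $D(P,Q)<\infty$ and $P\ll Q$ are used. One must also check that Jensen applies even though $\mathbb{E}_P[\log(q/p)]$ could a priori be $-\infty$. That case corresponds to $D(P,Q)=\infty$, already handled in the setup, so the restriction there is what makes the argument rigorous.
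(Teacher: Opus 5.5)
Your proof is correct and follows essentially the same route as the source the paper relies on. The paper does not reprove this lemma but cites Theorem 14.2 of \citet{lattimore2020bandit}, whose proof is your chain $P(A)+Q(A^c)\geq\int\min(p,q)\,d\nu\geq\frac12\bigl(\int\sqrt{pq}\,d\nu\bigr)^2\geq\frac12 e^{-D(P,Q)}$, obtained via Cauchy--Schwarz and Jensen, with the same reduction to $P\ll Q$ when $D(P,Q)<\infty$.
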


\begin{lem}[Multiplicative Chernoff Bound, Deviation Below the Mean, Theorem 4.5~\citep{mitzenmacher2017probability}]\label{lem:multiplicative_chernoff_below}
    Let $X_1, \cdots, X_n$ be independent Bernoulli trials such that $\sP \left(X_i\right) = p_i$. Let $X = \sum_{i = 1}^n X_i$ and $\mu = \mathbb{E} \left[X\right]$. Then for any $0 < \rho < 1$, we have
    \begin{align*}
        \sP \left(X \leq (1 - \rho) \mu\right) \leq \exp \left(- \frac{\mu \rho^2}{2}\right).
    \end{align*}
\end{lem}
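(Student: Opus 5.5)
We will prove the lower-tail bound by the exponential moment (Chernoff) method applied to $-X$, followed by an elementary scalar inequality. Fix $s>0$. Since $x\mapsto e^{-sx}$ is decreasing, Markov's inequality gives
\begin{align*}
\sP\left(X \leq (1-\rho)\mu\right) = \sP\left(e^{-sX} \geq e^{-s(1-\rho)\mu}\right) \leq e^{s(1-\rho)\mu}\, \mathbb{E}\left[e^{-sX}\right].
\end{align*}
Next we use independence to factor $\mathbb{E}[e^{-sX}] = \prod_{i=1}^n \mathbb{E}[e^{-sX_i}]$. Each factor equals $1 + p_i(e^{-s}-1)$. By $1+x \leq e^{x}$, this is at most $\exp(p_i(e^{-s}-1))$. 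Multiplying over $i$ and using $\sum_i p_i = \mu$ gives
\begin{align*}
\mathbb{E}\left[e^{-sX}\right] \leq \exp\left(\mu(e^{-s}-1)\right).
\end{align*}

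We then optimize over $s$ by taking $s = -\log(1-\rho) > 0$, which is valid because $0<\rho<1$. Substituting yields the classical intermediate bound
\begin{align*}
\sP\left(X \leq (1-\rho)\mu\right) \leq \left(\frac{e^{-\rho}}{(1-\rho)^{1-\rho}}\right)^{\mu}
= \exp\Big(-\mu\big[\rho + (1-\rho)\log(1-\rho)\big]\Big).
\end{align*}

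It remains to show the scalar inequality $g(\rho) := \rho + (1-\rho)\log(1-\rho) - \rho^2/2 \geq 0$ on $(0,1)$. This is the only step requiring any care, and even it is routine. We will argue by derivatives. First, $g(0)=0$. Second, $g'(\rho) = -\log(1-\rho) - \rho$, so $g'(0)=0$. Third, $g''(\rho) = \frac{1}{1-\rho} - 1 \geq 0$. Hence $g'$ is nondecreasing, so $g' \geq 0$, and therefore $g \geq 0$.

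Alternatively, the Taylor expansion $(1-\rho)\log(1-\rho) = -\rho + \sum_{k\geq 2} \frac{\rho^k}{k(k-1)}$ shows the bracket is at least $\rho^2/2$ directly. Plugging this into the exponent gives $\exp(-\mu\rho^2/2)$, as claimed.
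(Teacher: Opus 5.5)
Your proof is correct and is the standard exponential-moment argument from the cited source (Mitzenmacher--Upfal, Theorem 4.5), which the paper invokes without reproducing a proof. The choice $s=-\log(1-\rho)$ and the scalar inequality $\rho+(1-\rho)\log(1-\rho)\ge \rho^2/2$ are exactly the textbook steps, and both your derivative argument and your Taylor-series argument for that inequality are valid.
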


\begin{lem}[Approximate Optimal Stable Share Guarantee of Algorithm~\ref{alg:epsm}, Theorem 5~\citep{lin2024stable}]\label{lem:epsm_guarantee}
    Given any utility matrix $\mU$, parameter $m = \floor{\log_2 N + 2}$, and the instability tolerance $\varepsilon \geq 0$, Algorithm~\ref{alg:epsm} computes a distribution $D \in \Delta(\gI)$, where $\gI$ is the set of internally stable matchings, such that $\mU_D(p) \geq \frac{\mU^*_{\varepsilon}(p)}{m} - \varepsilon, \forall p \in \gN$.
\end{lem}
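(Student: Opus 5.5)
The plan is to work entirely inside the replicated market built by Algorithm~\ref{alg:epsm}, and to reduce the claim to a statement about the single matching $\tilde\mu$ returned by Gale--Shapley. I will assume utilities are nonnegative, as in the rest of the paper, so a player unmatched in some $\tilde\mu_i$ contributes $0$.

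\textbf{Step 1 (reduction).} Each original arm $a$ is used at most once in each $\tilde\mu_i$, so every $\tilde\mu_i$ is a genuine matching of the original market. A player $p$ matched in $\tilde\mu$ to the copy $a^{(i)}$ appears only in $\tilde\mu_i$. Hence
\begin{align*}
  \mU_D(p) = \frac{\mU(p,a)}{m}.
\end{align*}
It therefore suffices to show $\mU(p,a) - (i-1)\varepsilon \ge \mU^*_\varepsilon(p) - m\varepsilon$, i.e.\ that $p$'s replicated utility in $\tilde\mu$ is at least $\mU^*_\varepsilon(p) - m\varepsilon$. Dividing by $m$ then gives exactly $\mU^*_\varepsilon(p)/m - \varepsilon$.

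Internal stability of each $\tilde\mu_i$ comes separately from stability of $\tilde\mu$ in the replicated market. If $p \succ_a \tilde\mu_i(a)$ and $p$ strictly prefers $a$, then $p$ would also prefer the copy $a^{(i)}$ at the same index shift. This contradicts stability of $\tilde\mu$ unless $p$ sits at a higher index, which is exactly the form of instability the construction tolerates.

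\textbf{Step 2 (rejection chains).} Fix a player $p$, an $\varepsilon$-stable matching $\mu^\varepsilon$ with $\mu^\varepsilon(p) = a^\star$, and $\mU(p,a^\star) = \mU^*_\varepsilon(p)$. Suppose, for contradiction, that $p$'s replicated utility in $\tilde\mu$ is below $\mU(p,a^\star) - m\varepsilon$.

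Each copy $a^{\star(j)}$ has replicated utility $\mU(p,a^\star) - (j-1)\varepsilon > \mU(p,a^\star) - m\varepsilon$. In player-proposing GS, $p$ therefore proposed to all $m$ copies of $a^\star$ and was rejected by each of them. So at termination every copy of $a^\star$ holds a player ranked above $p$ by $a^\star$.

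Consider such a player $q$. If $q$'s replicated utility for its copy were at least $\mU(q,a^\star)$-level relative to its own $\mu^\varepsilon$ partner, we would obtain an $\varepsilon$-blocking pair $(q,a^\star)$ against $\mu^\varepsilon$, contradicting $\varepsilon$-stability of $\mu^\varepsilon$. Consequently $q$ must itself be ``displaced'': it holds a copy $a^{\star(j)}$ with $j \ge 2$ because it was rejected from better copies. Its utility shortfall relative to $\mu^\varepsilon$ is then large enough that the same argument applies to $q$ at depth one less.

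\textbf{Step 3 (doubling count; main obstacle).} The crux is to turn Step 2 into a counting argument. Define level-$\ell$ displaced players as those whose replicated utility falls short of their $\mu^\varepsilon$ utility by more than $(m-\ell)\varepsilon$. I would aim to show that each level-$\ell$ displaced player forces at least two distinct level-$(\ell+1)$ displaced players, namely the holders of two distinct copies of its $\mu^\varepsilon$ partner, and that these sets are disjoint across branches.

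If this holds, the displaced population at least doubles at each level. After $m-1$ levels this yields more than $2^{m-1} \ge N$ distinct players, which is impossible. The choice $m = \floor{\log_2 N + 2}$ is exactly what makes this contradiction go through.

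The delicate part is the bookkeeping. One must ensure that the two holders really are distinct players, which follows because the copies of one arm are distinct and each player holds one copy. One must also ensure the $\varepsilon$-slack accumulates correctly, losing at most $\varepsilon$ per level so that the index shift $(i-1)\varepsilon$ matches the depth. Finally, one must avoid double-counting across branches.

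I would first try to make Step 3 rigorous by charging each displaced player to the unique arm it holds in $\mu^\varepsilon$. Since $\mu^\varepsilon$ is a matching, this gives an injective charging and hence disjointness.
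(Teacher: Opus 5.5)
The paper gives no proof of this lemma. It imports it from Theorem~5 of \citet{lin2024stable}, so your outline has to stand on its own, and as written it has a genuine gap. Step~3, which carries all the combinatorial content, is only a plan, and both ingredients it rests on need repair.

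\textbf{The branching claim.} Your claim that each displaced player has two holders of copies of $\mu^\varepsilon(x)$ at the next level is false. So is the assertion in Step~2 that every holder of a copy of $a^\star$ is displaced. Take a holder $q\succ_b x$ of copy $b^{(j)}$, where $b=\mu^\varepsilon(x)$. The $\varepsilon$-stability of $\mu^\varepsilon$ only gives $\mU(q,b)\le \mU(q,\mu^\varepsilon(q))+\varepsilon$. Hence $q$'s replicated utility is at most $\mU(q,\mu^\varepsilon(q))-(j-2)\varepsilon$. So the holder of copy $1$ may be perfectly happy, the holder of the last relevant copy drops one level, the one before it two levels, and so on.

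The bookkeeping that works is a rank. Let $r(x)$ be the number of copies of $\mu^\varepsilon(x)$ that $x$ ranks strictly above $\tilde\mu(x)$ in the tie-broken replicated order. These are necessarily copies $1,\dots,r(x)$, and each is held by a distinct player that the arm prefers to $x$. Using the lower-index tie-break to turn the weak inequality above into a strict preference, the holder of copy $j$ has rank at least $j-1$. So a node of rank $k$ has children of ranks at least $0,1,\dots,k-1$. This is the binomial-tree recursion $f(k)\ge 1+\sum_{j<k}f(j)=2^k$, not two children per level. A violation of the lemma for $p$ forces $r(p)=m$. Without the tie-break, your strict-threshold levels lose an extra level at every step, and the count no longer reaches $N$.

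\textbf{Disjointness.} Charging each player to its $\mu^\varepsilon$-partner only shows that the parent map is a function, where a player holding a copy of $b$ has parent $\mu^\varepsilon(b)$. That gives disjoint children for distinct nodes. It does not prevent $p$ from being its own descendant, and such cycles do occur in the deferred-acceptance output. Here is an example with $\varepsilon=0.1$, players $A,B,C,D$ and arms $a,b,c,d$:
\begin{itemize}
\item Utilities: $\mU(A,a)=\mU(B,a)=\mU(C,a)=\mU(D,c)=1$, $\mU(B,b)=\mU(C,c)=\mU(D,d)=0.95$, $\mU(A,c)=0.85$, and all others $0$.
\item Arm preferences: $a$ ranks $B\succ C\succ A\succ D$, $c$ ranks $A\succ D\succ C\succ B$, and $b$, $d$ rank $B$, $D$ first.
\end{itemize}
Then $\mu^\varepsilon=\{(A,a),(B,b),(C,c),(D,d)\}$ is $\varepsilon$-stable. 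Replicated deferred acceptance ends with $B$ on $a^{(1)}$, $C$ on $a^{(2)}$, $A$ on $c^{(1)}$ and $D$ on $d^{(1)}$. Here $r(A)=2$ with children $B$ and $C$. Yet $A$ is itself the child of $C$, since $C$ prefers $c^{(1)}$ at $0.95$ to $a^{(2)}$ at $0.9$. So the descendant set of $A$ is only $\{A,B,C\}$, where the tree count predicts $4$.

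You therefore need a separate count when $p$ lies on a cycle. Summing the binomial trees hanging off the cycle gives at least $2^k-2^{k-2}=3\cdot 2^{k-2}$ players. For $k=m=\lfloor\log_2 N\rfloor+2$ this still exceeds $N$, and the acyclic case gives $2^m>N$. With these two repairs your outline does go through.
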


%% file: reference.bib
@article{li2022dynamic,
  title={Dynamic Matching Bandit For Two-Sided Online Markets},
  author={Li, Yuantong and Wang, Chi-hua and Cheng, Guang and Sun, Will Wei},
  journal={arXiv preprint arXiv:2205.03699},
  year={2022}
}

@article{parikh2024competing,
  title={Competing bandits in decentralized large contextual matching markets},
  author={Parikh, Satush and Basu, Soumya and Ghosh, Avishek and Sankararaman, Abishek},
  journal={arXiv preprint arXiv:2411.11794},
  year={2024}
}

@article{abbasi2011improved,
  title={Improved algorithms for linear stochastic bandits},
  author={Abbasi-Yadkori, Yasin and P{\'a}l, D{\'a}vid and Szepesv{\'a}ri, Csaba},
  journal={Advances in neural information processing systems},
  volume={24},
  year={2011}
}

@article{carpentier2020elliptical,
  title={The elliptical potential lemma revisited},
  author={Carpentier, Alexandra and Vernade, Claire and Abbasi-Yadkori, Yasin},
  journal={arXiv preprint arXiv:2010.10182},
  year={2020}
}

@article{lin2024stable,
  title={Stable matching with ties: Approximation ratios and learning},
  author={Lin, Shiyun and Mauras, Simon and Merlis, Nadav and Perchet, Vianney},
  journal={Advances in Neural Information Processing Systems},
  volume={38},
  pages={96661--96699},
  year={2026}
}

@article{gale1962college,
  title={College admissions and the stability of marriage},
  author={Gale, David and Shapley, Lloyd S},
  journal={The American Mathematical Monthly},
  volume={69},
  number={1},
  pages={9--15},
  year={1962},
  publisher={Taylor \& Francis}
}

@inproceedings{liu2020competing,
  title={Competing bandits in matching markets},
  author={Liu, Lydia T and Mania, Horia and Jordan, Michael},
  booktitle={International Conference on Artificial Intelligence and Statistics},
  pages={1618--1628},
  year={2020},
  organization={PMLR}
}

@inproceedings{kong2023player,
  title={Player-optimal stable regret for bandit learning in matching markets},
  author={Kong, Fang and Li, Shuai},
  booktitle={Proceedings of the 2023 Annual ACM-SIAM Symposium on Discrete Algorithms (SODA)},
  pages={1512--1522},
  year={2023},
  organization={SIAM}
}

@book{knuth1976stable,
  title={Stable marriage and its relation to other combinatorial problems: An introduction to the mathematical analysis of algorithms},
  author={Knuth, Donald Ervin},
  volume={10},
  year={1997},
  publisher={American Mathematical Soc.}
}

@book{mitzenmacher2017probability,
  title={Probability and computing: Randomization and probabilistic techniques in algorithms and data analysis},
  author={Mitzenmacher, Michael and Upfal, Eli},
  year={2017},
  publisher={Cambridge university press}
}

@book{horn2012matrix,
  title={Matrix analysis},
  author={Horn, Roger A and Johnson, Charles R},
  year={2012},
  publisher={Cambridge university press}
}

@article{tropp2015introduction,
  title={An introduction to matrix concentration inequalities},
  author={Tropp, Joel A},
  journal={Foundations and trends{\textregistered} in machine learning},
  volume={8},
  number={1-2},
  pages={1--230},
  year={2015},
  publisher={Emerald Publishing Limited}
}

@book{lattimore2020bandit,
  title={Bandit algorithms},
  author={Lattimore, Tor and Szepesv{\'a}ri, Csaba},
  year={2020},
  publisher={Cambridge University Press}
}

@book{cover1999elements,
  title={Elements of information theory},
  author={Cover, Thomas M},
  year={1999},
  publisher={John Wiley \& Sons}
}

@article{liu2021bandit,
  title={Bandit learning in decentralized matching markets},
  author={Liu, Lydia T and Ruan, Feng and Mania, Horia and Jordan, Michael I},
  journal={Journal of Machine Learning Research},
  volume={22},
  number={211},
  pages={1--34},
  year={2021}
}

@inproceedings{sankararaman2021dominate,
  title={Dominate or delete: Decentralized competing bandits in serial dictatorship},
  author={Sankararaman, Abishek and Basu, Soumya and Sankararaman, Karthik Abinav},
  booktitle={International Conference on Artificial Intelligence and Statistics},
  pages={1252--1260},
  year={2021},
  organization={PMLR}
}

@article{tullii2024improved,
  title={Improved algorithms for contextual dynamic pricing},
  author={Tullii, Matilde and Gaucher, Solenne and Merlis, Nadav and Perchet, Vianney},
  journal={Advances in Neural Information Processing Systems},
  volume={37},
  pages={126088--126117},
  year={2024}
}

@inproceedings{mine2013reciprocal,
  title={Reciprocal recommendation for job matching with bidirectional feedback},
  author={Mine, Tsunenori and Kakuta, Tomoyuki and Ono, Akira},
  booktitle={2013 Second IIAI International Conference on Advanced Applied Informatics},
  pages={39--44},
  year={2013},
  organization={IEEE}
}

@article{roth1984evolution,
  title={The evolution of the labor market for medical interns and residents: a case study in game theory},
  author={Roth, Alvin E},
  journal={Journal of Political Economy},
  volume={92},
  number={6},
  pages={991--1016},
  year={1984},
  publisher={The University of Chicago Press}
}

@article{shi2023multiagent,
  title={A multiagent reinforcement learning framework for off-policy evaluation in two-sided markets},
  author={Shi, Chengchun and Wan, Runzhe and Song, Ge and Luo, Shikai and Zhu, Hongtu and Song, Rui},
  journal={The Annals of Applied Statistics},
  volume={17},
  number={4},
  pages={2701--2722},
  year={2023},
  publisher={Institute of Mathematical Statistics}
}

@article{kelso1982job,
  title={Job matching, coalition formation, and gross substitutes},
  author={Kelso Jr, Alexander S and Crawford, Vincent P},
  journal={Econometrica: Journal of the Econometric Society},
  pages={1483--1504},
  year={1982},
  publisher={JSTOR}
}

@inproceedings{das2005two,
  title={Two-Sided Bandits and the Dating Market.},
  author={Das, Sanmay and Kamenica, Emir},
  booktitle={IJCAI},
  volume={5},
  pages={19},
  year={2005}
}

@inproceedings{basu2021beyond,
  title={Beyond $\log^2 (T)$ regret for decentralized bandits in matching markets},
  author={Basu, Soumya and Sankararaman, Karthik Abinav and Sankararaman, Abishek},
  booktitle={International Conference on Machine Learning},
  pages={705--715},
  year={2021},
  organization={PMLR}
}

@inproceedings{zhang2024decentralized,
  title={Decentralized two-sided bandit learning in matching market},
  author={Zhang, YiRui and Fang, Zhixuan},
  booktitle={The 40th Conference on Uncertainty in Artificial Intelligence},
  year={2024}
}

@inproceedings{chu2011contextual,
  title={Contextual bandits with linear payoff functions},
  author={Chu, Wei and Li, Lihong and Reyzin, Lev and Schapire, Robert},
  booktitle={Proceedings of the fourteenth international conference on artificial intelligence and statistics},
  pages={208--214},
  year={2011},
  organization={JMLR Workshop and Conference Proceedings}
}

@inproceedings{li2025survey,
  title={A survey on bandit learning in matching markets},
  author={Li, Shuai and Wang, Zilong and Kong, Fang},
  booktitle={Proceedings of the Thirty-Fourth International Joint Conference on Artificial Intelligence},
  pages={10546--10554},
  year={2025}
}

@inproceedings{wang2022bandit,
  title={Bandit learning in many-to-one matching markets},
  author={Wang, Zilong and Guo, Liya and Yin, Junming and Li, Shuai},
  booktitle={Proceedings of the 31st ACM International Conference on Information \& Knowledge Management},
  pages={2088--2097},
  year={2022}
}

@inproceedings{kong2025bandit,
  title={Bandit learning in matching markets with indifference},
  author={Kong, Fang and Tang, Jingqi and Li, Mingzhu and Lu, Pinyan and Lui, John and Li, Shuai},
  booktitle={International Conference on Learning Representations},
  volume={2025},
  pages={23627--23645},
  year={2025}
}

@article{jagadeesan2023learning,
  title={Learning equilibria in matching markets with bandit feedback},
  author={Jagadeesan, Meena and Wei, Alexander and Wang, Yixin and Jordan, Michael I and Steinhardt, Jacob},
  journal={Journal of the ACM},
  volume={70},
  number={3},
  pages={1--46},
  year={2023},
  publisher={ACM New York, NY}
}

@inproceedings{muthirayan2023competing,
  title={Competing bandits in time varying matching markets},
  author={Muthirayan, Deepan and Maheshwari, Chinmay and Khargonekar, Pramod and Sastry, Shankar},
  booktitle={Learning for Dynamics and Control Conference},
  pages={1020--1031},
  year={2023},
  organization={PMLR}
}

@article{ghosh2024competing,
  title={Competing bandits in non-stationary matching markets},
  author={Ghosh, Avishek and Sankararaman, Abishek and Ramchandran, Kannan and Javidi, Tara and Mazumdar, Arya},
  journal={IEEE Transactions on Information Theory},
  volume={70},
  number={4},
  pages={2831--2850},
  year={2024},
  publisher={IEEE}
}

@inproceedings{qin2014contextual,
  title={Contextual combinatorial bandit and its application on diversified online recommendation},
  author={Qin, Lijing and Chen, Shouyuan and Zhu, Xiaoyan},
  booktitle={Proceedings of the 2014 SIAM International Conference on Data Mining},
  pages={461--469},
  year={2014},
  organization={SIAM}
}

@inproceedings{takemura2021near,
  title={Near-optimal regret bounds for contextual combinatorial semi-bandits with linear payoff functions},
  author={Takemura, Kei and Ito, Shinji and Hatano, Daisuke and Sumita, Hanna and Fukunaga, Takuro and Kakimura, Naonori and Kawarabayashi, Ken-ichi},
  booktitle={Proceedings of the AAAI Conference on Artificial Intelligence},
  volume={35},
  pages={9791--9798},
  year={2021}
}

@inproceedings{zierahn2023nonstochastic,
  title={Nonstochastic contextual combinatorial bandits},
  author={Zierahn, Lukas and van der Hoeven, Dirk and Cesa-Bianchi, Nicolo and Neu, Gergely},
  booktitle={International conference on artificial intelligence and statistics},
  pages={8771--8813},
  year={2023},
  organization={PMLR}
}

@inproceedings{chen2018contextual,
  title={Contextual combinatorial multi-armed bandits with volatile arms and submodular reward},
  author={Chen, Lixing and Xu, Jie and Lu, Zhuo},
  booktitle={Advances in Neural Information Processing Systems},
  volume={31},
  year={2018}
}

@inproceedings{wen2015efficient,
  title={Efficient learning in large-scale combinatorial semi-bandits},
  author={Wen, Zheng and Kveton, Branislav and Ashkan, Azin},
  booktitle={International Conference on Machine Learning},
  pages={1113--1122},
  year={2015},
  organization={PMLR}
}

@inproceedings{li2024contextual,
  title={A Contextual Combinatorial Bandit Approach to Negotiation},
  author={Li, Yexin and Mu, Zhancun and Qi, Siyuan},
  booktitle={International Conference on Machine Learning},
  pages={28448--28465},
  year={2024},
  organization={PMLR}
}

@inproceedings{hao2020adaptive,
  title={Adaptive exploration in linear contextual bandit},
  author={Hao, Botao and Lattimore, Tor and Szepesvari, Csaba},
  booktitle={International Conference on Artificial Intelligence and Statistics},
  pages={3536--3545},
  year={2020},
  organization={PMLR}
}

@inproceedings{wu2018adaptive,
  title={Adaptive exploration-exploitation tradeoff for opportunistic bandits},
  author={Wu, Huasen and Guo, Xueying and Liu, Xin},
  booktitle={International Conference on Machine Learning},
  pages={5306--5314},
  year={2018},
  organization={PMLR}
}

@inproceedings{lin2026bandit,
  title={Bandit Learning in Housing Markets},
  author={Lin, Shiyun},
  booktitle={Proceedings of the AAAI Conference on Artificial Intelligence},
  volume={40},
  number={28},
  pages={23568--23575},
  year={2026}
}
